\newcolumntype{L}[1]{>{\raggedright\let\newline\\\arraybackslash\hspace{0pt}}m{#1}}
\newcolumntype{C}[1]{>{\centering\let\newline\\\arraybackslash\hspace{0pt}}m{#1}}
\newcolumntype{R}[1]{>{\raggedleft\let\newline\\\arraybackslash\hspace{0pt}}m{#1}}
\newcolumntype{B}[1]{>{\raggedright\let\newline\\\arraybackslash\hspace{0pt}}p{#1}}
\newcolumntype{N}[1]{>{\centering\let\newline\\\arraybackslash\hspace{0pt}}p{#1}}
\newcolumntype{M}[1]{>{\raggedleft\let\newline\\\arraybackslash\hspace{0pt}}p{#1}}
\newcommand{\cmark}{\ding{52}}%
\newcommand{\xmark}{\ding{56}}%
\newcommand{\xlr}{AXPLR\xspace}
\newcommand{\xlrs}{\xlr}
\newcommand{\R}{\mathbb{R}}
\newcommand{\Args}{\ensuremath{\mathcal{A}}}
\newcommand{\Rels}{\ensuremath{\mathcal{R}}}
\newcommand{\Atts}{\ensuremath{\Rels^-}}
\newcommand{\Supps}{\ensuremath{\Rels^+}}
\newcommand{\Attsp}{\ensuremath{\Rels^{-\prime}}}
\newcommand{\Suppsp}{\ensuremath{\Rels^{+\prime}}}
\newcommand{\grasp}{GrASP\xspace}
\newcommand{\sigmo}{\mbox{sigmoid}}
\newcommand{\as}[1]{[\texttt{#1}]} %format of attribute set
\newcommand{\QBAF}{QBAFc\xspace}
\newcommand{\QBAFs}{QBAFcs\xspace}
\newcommand{\QBAFP}{QBAFc\ensuremath{'}\xspace}
\newcommand{\QBAFPs}{QBAFc\ensuremath{'}s\xspace}
\newcommand{\BQBAF}{BQBAFc\xspace}
\newcommand{\BQBAFs}{BQBAFcs\xspace}
\newcommand{\BQBAFP}{BQBAFc\ensuremath{'}\xspace}
\newcommand{\BQBAFPs}{BQBAFc\ensuremath{'}s\xspace}
\newcommand{\TQBAF}{TQBAFc\xspace}
\newcommand{\TQBAFs}{TQBAFcs\xspace}
\newcommand{\TQBAFP}{TQBAFc\ensuremath{'}\xspace}
\newcommand{\TQBAFPs}{TQBAFc\ensuremath{'}s\xspace}
\newcommand{\PLR}{PLR\xspace}
\newtheorem{preprop}{Proposition}%[section]
\newtheorem{prelem}{Lemma}%[section]
\newtheorem{predef}{Definition}%[section]
\newtheorem{pretheo}{Theorem}%[section]
\newtheorem{precor}{Corollary}%[section]
\tikzstyle{arg} = [circle, minimum width=1cm, minimum height=1cm, text centered, draw=black]
\tikzstyle{argsm} = [circle, minimum width=0.8cm, minimum height=0.8cm, text centered, draw=black]
\tikzstyle{rel} = [thick,->,>=stealth]
\newcommand{\tatt}[2]{\draw [rel] (#1) -- node[anchor=south] {\large \textbf{--}} (#2);}
\newcommand{\tsupp}[2]{\draw [rel] (#1) -- node[anchor=south] {\large \textbf{+}} (#2);}
\begin{document}

\begin{frontmatter}

%\pretitle{}
\title{Argumentative Explanations for Pattern-Based Text Classifiers}
\runtitle{Argumentative Explanations for Pattern-Based Text Classifiers}
%\subtitle{}

% Two or more authors:
\begin{aug}
\author[A]{\inits{P.}\fnms{Piyawat} \snm{Lertvittayakumjorn}\ead[label=e1]{pl1515@imperial.ac.uk}%
\thanks{Corresponding author. \printead{e1}.}}
\author[A]{\inits{F.}\fnms{Francesca} \snm{Toni}\ead[label=e2]{ft@imperial.ac.uk}}
\address[A]{Department of Computing, \orgname{Imperial College London},
\cny{United Kingdom}\printead[presep={\\}]{e1,e2}}
\end{aug}

\begin{abstract}
Recent works in Explainable AI mostly address the transparency issue of black-box models or create explanations for any kind of models (i.e., they are model-agnostic), while leaving explanations of interpretable models largely underexplored. In this paper, we fill this gap by focusing on explanations for a specific interpretable model, namely pattern-based logistic regression (PLR) for binary text classification. We do so because, albeit interpretable, PLR is challenging when it comes to explanations. In particular, we found that a standard way to extract explanations from this model does not consider relations among the features, making the explanations hardly plausible to humans. Hence, we propose \textbf{\xlr}, a novel explanation method using (forms of) computational argumentation to generate explanations (for outputs computed by PLR) which unearth model agreements and disagreements among the features. Specifically, we use computational argumentation as follows: we see features (patterns) in PLR as arguments in a form of quantified bipolar argumentation frameworks (QBAFs) and extract attacks and supports between arguments based on specificity of the arguments; we understand logistic regression as a gradual semantics for these QBAFs, used to determine the arguments' dialectic strength; and we study standard properties of gradual semantics for QBAFs in the context of our argumentative re-interpretation of PLR, sanctioning its suitability for explanatory purposes. We then show how to extract intuitive explanations (for outputs computed by PLR) from the constructed QBAFs. Finally, we conduct an empirical evaluation and two experiments in the context of human-AI collaboration to demonstrate the advantages of our resulting \xlr method.
\end{abstract}

\begin{keyword}
\kwd{Explainable AI}
\kwd{Argumentative Explanation}
\kwd{Logistic Regression}
\kwd{Text Classification}
\end{keyword}

\end{frontmatter}

%%%%%%%%%%% The article body starts:

\section{Introduction} \label{sec:intro}

Humans have been using explanations in AI
for many purposes such as to support human decision making \cite{lai2019human,lertvittayakumjorn-etal-2021-supporting}, to increase human trust in the AI \cite{symeonidis2009moviexplain,jacovi2021formalizing}, to verify and improve the AI \cite{caruana2015intelligible,lertvittayakumjorn-etal-2020-find}, and to learn new knowledge from the AI \cite{mac2018teaching,lai2020chicago}.
Explanations may also be required for an AI-assisted system to comply with recent regulations including the General Data Protection Regulation (GDPR) \cite{goodman2017european}. 
These various needs for explanation have drawn a great amount of attention to the field of \emph{explainable AI (XAI)} in recent years \cite{adadi2018peeking}. 
When AI-assisted systems are used for prediction (referred to as \emph{prediction models} or simply \emph{models} in the literature and in this paper), explanations of interest are often categorized broadly into two types: \emph{local explanations} and \emph{global explanations} \cite{adadi2018peeking}, where the former focus on explaining the predictions for specific inputs while the latter aim to explain the behavior of the model in general, irrespective of any inputs that it may take.
If the model is \emph{inherently interpretable} \cite{rudin2019stop} (e.g., a decision tree), the model itself can be viewed as the global explanation whereas local explanations can be obtained during the prediction process (e.g., the corresponding path in the decision tree for the input, leading to the output/prediction).
In this paper, we refer to explanations straightforwardly extracted from inherently interpretable models (e.g., the applicable path in a decision tree)  as \textit{model-inherent explanations}.
However, if the model is opaque (e.g., it is a deep learning model), we may need to apply an additional step, by using a so-called  \emph{post-hoc explanation method} (e.g., LIME \cite{ribeiro2016should} and SHAP \cite{lundberg2017unified}), for extracting the explanations.

A number of properties of explanations have been identified as desirable in the literature, e.g., as in  \cite{sokol2020explainability}. Amongst them, generally, we call an explanation \textit{faithful} to the model if it accurately reflects the true reasoning process of the model (e.g., as discussed in \cite{jacovi-goldberg-2020-towards}). 
Meanwhile, an explanation is deemed \textit{plausible} if it agrees with human judgement (e.g., see \cite{wiegreffe-pinter-2019-attention}).
These two properties of explanations, i.e., faithfulness and plausibility, may be important in different situations.
For instance, we want faithful explanations in order to verify the model correctness while we want plausible explanations to satisfy end users.
Note that model-inherent explanations %are very likely 
can be deemed faithful due to their straightforward and sensible explanation extraction process. However, this does not guarantee other desirable properties of the explanations.
For instance, using a decision tree path with depth of 15 as an explanation is not comprehensible to humans and, therefore, not very plausible %too
either. 
Post-hoc explanations could be more effective for impressing end-users in this case though they are not inherently extracted from (or perfectly faithful to) the underlying interpretable model (i.e., the decision tree).

In this paper, we develop a novel post-hoc local explanation method that aims to generate plausible explanations for a specific class of interpretable prediction models performing \emph{binary text classification} with natural language data. 
Binary text classification aims to classify a given text into one of two possible categories.
Examples of binary text classification (both studied in this paper) are \emph{sentiment analysis} (where a piece of text is classified as having positive or negative sentiment) and
\emph{spam detection} (where a message is classified as spam or not). %
Our interpretable prediction models are built using logistic regression (LR) \cite[chapter~5]{jurafsky2020speech} with textual patterns \cite{shnarch-etal-2017-grasp} as features.
LR is a traditional machine learning method,  leading to interpretable models with linear combinations of features,   
that can be used, in particular, for text classification \cite[chapter~5]{jurafsky2020speech}.
Because text documents are unstructured data, we need to perform \emph{feature extraction} so as to obtain numerical representations of the documents before training the LR classifier. 
One standard way to do feature extraction is using frequent n-grams
(i.e., frequent $n$ consecutive words in the dataset)
as features and applying TF-IDF vectorization to find associated values of the features \cite[chapter~2]{weiss2010text}.
Though using n-grams as features is simple and often effective, it makes the model less generalizable to words or n-grams that have never appeared during training.
Also, the features are usually too fine-grained for humans to synthesize the overview of what the model has learned even though LR is inherently interpretable. 
In this paper, as elsewhere (e.g., \cite{hussain2021text,carstens2017using,ghanem2002automatic}), we use \emph{patterns} as interpretable features for prediction, in alternative to n-gram features. 
We call our models of interest \emph{pattern-based logistic regression (\PLR) models}. 

\PLR models are inherently interpretable because LR is interpretable and because their features are interpretable and, as we will show, convenient for humans to learn or to extract knowledge from.
However, their model-inherent explanations for \PLR  may not be plausible.
 This issue is especially critical when interactions among input features underpin the model whereas the model-inherent explanations treat features independently of each other.
 These feature interactions may result from agreement or disagreement between correlated pattern features.
In order to address this problem, our proposed explanation method leverages \emph{computational argumentation (CA)} to take care of the feature interactions and generate  more plausible local explanations than the model-inherent explanations.
We call our novel explanation method (combining LR,  textual patterns, and CA) \emph{Argumentative eXplanations for Pattern-based Logistic Regression (\xlr}, pronounced ``ax-plore'').\footnote{Note that we use \xlr to indicate both our method for generating explanations and the generated explanations themselves; also, when used to refer to explanations, \xlr has the same form in the singular and the plural.}

Generally, local explanations often have an argumentative spirit by nature since they need to argue for or against possible predictions of the model \cite{vcyras2021argumentative}. 
When there are several arguments involved, these arguments may also have dialectical relationships between each other. 
Hence, there are several existing works which use computational argumentation to underpin XAI methods and produce argumentative explanations. 
For example, DEAr \cite{cocarascu2020data} considers related training examples as arguments, which argue to classify a test example, and uses a dispute tree \cite{vcyras2016explanation} as dialectical explanation.
DAX \cite{dejl2021argflow} extracts local argumentative explanations from a deep neural network by using arguments and their relations to represent the nodes and their connections in the neural network. 
(For more approaches, see recent survey papers of argumentative XAI \cite{vcyras2021argumentative,vassiliades2021argumentation}.)
In our work, \xlr uses %relevant 
pattern-based input features of the \PLR model as arguments and draws dialectical relations from specificity of the pattern features. 
All are modeled using  modified versions of \emph{Quantitative Bipolar Argumentation Frameworks (QBAFs)} \cite{baroni2019fine} before being processed and translated into argumentative explanations for human consumption.
Specifically, we use two variants of QBAFs. Both include a mapping associating arguments with the classification they advocate, in addition to arguments, attack and support relations and base scores as in standard QBAFs. The two variants differ in the way they use specificity of patterns to define the direction of attacks and supports. 

We summarize the contribution of our work as follows.
\begin{itemize}
    \item We show that model-inherent local explanations for pattern-based logistic regression can lead to implausible explanations.
    \item We propose \xlr, a novel argumentative explanation method, to tackle the above problem by modeling relationships among pattern features using quantitative bipolar argumentation.
    \item We prove that the argumentation framework underpinning \xlr always predicts the same output as the original \PLR model and satisfies %many
    several dialectical properties of human debates.
    \item Using three binary text classification datasets, we conduct an empirical evaluation of the extracted argumentation frameworks. 
    Moreover, for the same datasets, we conduct two human experiments to evaluate how plausible and helpful \xlr is for human consumption compared to other explanation methods.
\end{itemize}

In the remainder of the paper, we explain the pattern-based logistic regression (\PLR) model in Section~\ref{sec:background}. 
Then we discuss the weakness of the model-inherent explanations for \PLR in Section~\ref{sec:need}.
After that, Section~\ref{sec:xlr} describes the two variants of QBAFs underpinning \xlr, while Section~\ref{subsec:properties} shows that these QBAFs satisfy many dialectical properties of human debates, leading to leaner derived \xlr (of which the presentations are described in Section~\ref{subsec:explanations}).
Next, the experimental setup for \xlr is explained in Section~\ref{sec:setup}, followed by one empirical experiment in Section~\ref{sec:empirical} and two human experiments (to assess the amenability of the argumentation underpinning \xlr specifically) in Sections~\ref{sec:plausibility} and \ref{sec:tutorial}.
% to show the strengths of \xlr.
Lastly, we discuss other possible uses of \xlr in Section~\ref{sec:discussions}, position our work with respect to other related work in Section~\ref{sec:relatedwork}, and summarize the paper in Section~\ref{sec:conclusion}.

\section{Background} \label{sec:background}

In this section, we provide necessary background on text classification (see Section~\ref{sec:tc}) and \PLR, including 
logistic regression (LR) which is the core machine learning method of \PLR (see Section~\ref{subsec:LR}) and pattern features as well as the pattern extraction algorithm \grasp \cite{shnarch-etal-2017-grasp} used for constructing pattern features from training data (see Section~\ref{subsec:grasp}). We conclude with an illustration of the overall process of \PLR combining LR with \grasp for text classification (see Section~\ref{subsec:plr}).
To illustrate ideas, we will use \emph{sentiment analysis} as a running example of text classification throughout this section.

\subsection{Binary Text Classification}
\label{sec:tc}

We focus on the binary text classification task with two classes, using, as conventional, $\mathcal{C} = \{0, 1\}$ as the set of classes.
For example, in the case of sentiment analysis, 0 stands for negative sentiment and 1 stands for positive sentiment. 
A training dataset 
$\mathcal{D}$ contains $N$ different pairs of the form $(x, y)$ where 
$x$ is an input text 
and $y \in \mathcal{C}$ is the true class label of $x$.
This dataset is used to train a classifier, which determines the probability of classes for any given input.
In the context of binary text classification, $\mathcal{D}$ can be split into disjoint sets $\mathcal{D}^+$ and $\mathcal{D}^-$, containing \emph{positive} examples ($y = 1$) and \emph{negative} examples ($y = 0$)  in $\mathcal{D}$, respectively.
A classifier trained on $\mathcal{D}$ determines, for input $x$, a class $\hat{y}\in \mathcal{C}$.

\subsection{Logistic Regression}
\label{subsec:LR}

For each input text $x$, let us assume that $x$ can be represented as a feature vector $\mathbf{f} = [f_{1}, f_{2}, \ldots, f_{d}]$  %$\in \{0, 1\}^d$ 
where $f_{i}$ is a feature and $d \geq 1$ is the number of features used to represent $x$.
Then, 
an LR model targeting binary classification gives 
\begin{align}
    \begin{split}
    P(y=1|x) &= \sigmo(\textbf{w}^T\textbf{f} + b) 
    = \sigmo(\sum\limits_{i=1}^dw_if_{i} + b) \\ 
    &= \sigmo(w_1f_{1}+w_2f_{2}+...+w_df_{d}+b) \label{eq5:bilr}
    \end{split}
\end{align}
where $\textbf{w}\in\R^d$ and $b\in\R$ are weights and bias of the LR model, respectively. The \textit{sigmoid function} (so called a \textit{logistic function}) is used to convert any real number into a value between 0 and 1:
\begin{equation}
    \sigmo(z) = \frac{1}{1+e^{-z}} \label{eq2:sigmoid}
\end{equation}
where $z = 0$ yields $\sigmo(z) = 0.5$. Note also that $1-\sigmo(z)=\sigmo(-z)$.
Normally, if $P(y=1|x) \geq 0.5$ (i.e., $\sum\limits_{i=1}^dw_if_{i} + b \geq 0$), we predict class 1 for input $x$ (i.e., $\hat{y} = 1$). Otherwise, we predict class 0 (i.e., $\hat{y} = 0$).

The LR model is obtained after the training process is completed; it is fully characterized by  $\textbf{w}$ and $b$ which minimize the objective function (typically the binary cross-entropy loss) to be used for predicting unseen examples (in some test datasets, for example).

The next questions are ``How do the $d$ features look like for text?'' and ``How can we obtain them?''. We use \emph{pattern features} whereby patterns indicate high-level characteristics of words in input texts,
in addition to specifying exact words or lemmas. 
These high-level characteristics include both syntactic attributes (such as part-of-speech tags) and semantic attributes (such as synonyms and hypernyms). Thereby, we choose \textbf{\grasp} for this purpose.

\subsection{Pattern features and \grasp: GReedy Augmented Sequential Patterns} \label{subsec:grasp}

\grasp is a supervised algorithm which learns expressive patterns 
able to distinguish two classes of text \cite{shnarch-etal-2017-grasp}. 
An example of \grasp pattern 
for distinguishing between 
positive and negative texts in sentiment analysis  is \begin{quote}
   [\as{TEXT:nothing}, \as{SENTIMENT:pos}] 
with two gaps allowed in-between.
\end{quote}
This pattern matches, for example, a sequence of two words where the first word is ``nothing'' and the second word is a positive word according to a specific lexicon (such as the one released by \cite{hu2004mining}).
Moreover, the pattern allows at most two additional words in-between 
to increase flexibility of the pattern.
Examples of texts matched by this pattern include: 

\begin{quote}
``There is \textbf{nothing} \textbf{delicious} in this dinner .'' and 
\\
``... worse products . \textbf{Nothing} is \textbf{delicate} ! ...'' 
\end{quote}
where the \textbf{bold-face words} are words matching components in the pattern.

\grasp\ is applied directly to the training data. In order to use it, we need to prepare two lists of texts containing positive and negative examples that we want to distinguish. 
Also, we need to specify some hyperparameters such as the
desired number of patterns, the number of gaps allowed,  the set of linguistic attributes which can appear in the patterns, and the maximum number of attributes per pattern.
In the experiments in this paper, we employ the publicly released implementation of \grasp \cite{lertvittayakumjorn2021grasp} which provides several built-in attributes that are suitable for classification tasks in general, e.g., the token text itself, its lemma, its hypernyms (according to wordnet \cite{miller1995wordnet}), part-of-speech tags, and sentiment tags.
The resulting \grasp patterns are used as features in the LR model.
Note, however, that we do not utilize the associated class \grasp assigns to each pattern to classify the input directly because \grasp does not tell us how to properly deal with the input that matches multiple (and potentially contradicting or relating) patterns.
Instead, we use the patterns from \grasp only as features for training a classifier, letting the classifier decide how multiple patterns should play their roles and contribute to the final classification.

\subsection{\PLR using \grasp} \label{subsec:plr}

In this paper, we focus on \PLR, i.e., LR with \grasp\ patterns as features.
To train \PLR, we first split the training data $\mathcal{D}$ into $\mathcal{D}^+$ and $\mathcal{D}^-$ and feed them to the \grasp algorithm along with some hyperparameters mentioned above.
After obtaining the $d$ patterns, we extract the binary feature vector $\textbf{f}$ for each training example $x$ and use it to train the LR model together with the class label $y$.
Specifically, for each input text $x \in \mathcal{D}$, we extract the feature vector $\mathbf{f} = [f_{1}, f_{2}, \ldots, f_{d}] \in \{0, 1\}^d$ where $f_{i}$ is a binary feature and $d$ is the number of textual patterns used to represent $x$. $f_{i}$ equals 1 if the input $x$ contains the pattern $p_i$; otherwise, $f_{i}$ equals 0. 
Then, to learn from the training data $\mathcal{D}$, we train a binary logistic regression model using the binary cross-entropy loss (with a regularization term) as objective function.

During testing, given an unseen document $x$, we get the prediction by extracting the feature vector $\textbf{f}$ using the $d$ \grasp patterns and running the LR model on $\textbf{f}$.
Figure~\ref{fig5:ex} shows an illustrative example of how to make a prediction using a trained PLR model. Given the sentence ``There is nothing better than hot sausages of this restaurant.'' as an input text $x$, we want to use a trained PLR model to predict the sentiment of this sentence. 
Assume that among the $d$ patterns of the model, there are only four patterns -- $p_1$, $p_2$, $p_3$, and $p_4$ as shown in Figure~\ref{fig5:ex} -- that match this sentence. 
In other words, for $i\in\{1,2,3,4\}$, $f_i = 1$; otherwise, $f_i=0$.
According to Equation~\ref{eq5:bilr}, the probability of this text being a positive sentiment text, i.e., $P(y=1|x)$, equals $\sigmo(w_1f_{1}+w_2f_{2}+w_3f_{3}+w_4f_{4}+b)$.
For the trained weights and bias in Figure~\ref{fig5:ex}, the predicted class $\hat{y}$ of $x$ is positive (1) since the predicted probability is 0.5744, which is greater than 0.5.

\section{Explaining \PLR Classifiers: The Need for Argumentation}
\label{sec:need}

Since logistic regression is inherently interpretable and the \grasp patterns used are also interpretable, we can generate local explanations for inputs to a trained PLR model by reporting parts of the inputs %$x_t$ 
that match the top-$k$ patterns% of this example
, in the spirit of much work in the XAI literature
(e.g., \cite[chapter~5]{molnar2020interpretable} and \cite{caruana2015intelligible}).
Formally, given an input $x$, let $s_{i}$ be the \emph{contribution} of the pattern $p_i$ for the prediction $\hat{y}$, defined as follows, with reference to Equation~\ref{eq5:bilr}: when $\hat{y} = 1$, $s_{i} = w_{i}f_{i}$; when $\hat{y} = 0$, $s_{i} = -w_{i}f_{i}$ (so, we can combine both cases to be $s_{i} = (-1)^{\hat{y}+1}w_{i}f_{i}$). 
Then we can return, as the local explanation for $\hat{y}$, a list of triplets of the form $(p_{i'}, \pi(p_{i'}, x), s_{i'})$  where $s_{i'}$, the contribution of pattern $p_{i'}$, is one of the $k$ highest %$s_i$
contributions and  $s_{i'} \neq 0$, and $\pi(p_{i'}, x)$ is a part of $x$ that matches the pattern $p_{i'}$.
We call this resulting model-inherent explanation the \textbf{flat logistic regression explanation (FLX)} (for $x$ and $\hat{y}$).

\begin{figure}[t]
    % \centering
    \begin{tcolorbox}
    \textbf{Input} $x$: \textit{There is nothing better than hot sausages of this restaurant.}\\
    \textbf{Matched patterns:}
    \begin{align*}
        p_1 &= [\as{TEXT:nothing}, \as{SENTIMENT:pos}] & w_1f_{1} &= -0.9\\
        p_2 &= [\as{TEXT:nothing}] & w_2f_{2} &= -0.4\\
        p_3 &= [\as{SENTIMENT:pos}] & w_3f_{3} &= 1.2\\
        p_4 &= [\as{TEXT:hot}, \as{TEXT:sausages}] & w_4f_{4} &= 0.5\\
        &&b&=-0.1
    \end{align*}
    \textbf{Predicted class $\hat{y}$}: 1 (Positive)\\
    \textbf{Predicted Probability for $\hat{y} = 1$}: sigmoid(-0.9-0.4+1.2+0.5-0.1) = sigmoid(0.3) = 0.5744.\\
    \textbf{FLX contributions for $\hat{y}$}: 
    \colorbox{orange!100}{\strut $p_3$ (1.2)} $>$ \colorbox{orange!42}{\strut $p_4$ (0.5)} $>$
    \colorbox{cyan!33}{\strut $p_2$ (-0.4)} $>$
    \colorbox{cyan!75}{\strut $p_1$ (-0.9)}
    \end{tcolorbox}
    \caption{An illustrative example of using pattern-based logistic regression for sentiment analysis. (Here, FLX stands for \emph{flat logistic regression explanation}, see Section~\ref{sec:need}.)
    }
    \label{fig5:ex}
\end{figure}

For the example in Figure~\ref{fig5:ex}, 
the input text $x$ matches four patterns, and the model predicts class positive (i.e., $\hat{y}=1$). 
If we use FLX% with $k=1$
s, we can see that $p_3$ (which means the input text containing a positive word) has the highest contribution of 1.2. So, we will obtain $(p_3, \pi(p_3, x), s_3) = (\mbox{[\as{SENTIMENT:pos}]}, \mbox{``better''}, 1.2)$ as the top triplet in the FLX (i.e., the most important reason) for predicting $\hat{y}=1$.
Nevertheless, one problem with FLXs is that they do not take into account relationships among the patterns.
For the example in Figure~\ref{fig5:ex}, the model has actually weakened the effect of $p_3$ by $p_1$ because the positive word in this case (``better'') follows the word ``nothing'' and the model no longer considers it strongly positive in the context.
What really makes the model answer positively is rather $p_4$, which is considered less important by the FLX. 
Although the contribution of $p_4$ (0.5) is lower than that of $p_3$, it is not overridden by other patterns.
We could see that these four patterns 
are arguing to make the prediction, in that each pattern is an argument for or against the prediction.
Some patterns have dialectical relations with one another (such as 
the disagreement between
$p_1$ and $p_3$). 
Hence, to improve plausibility of the explanations and make them in line with the underpinning dialectical relations, we apply computational argumentation, as shown next, to generate local explanations for this \PLR model. 
Specifically, we aim to use a form of) quantitative bipolar argumentation frameworks (QBAFs) \cite{baroni2019fine} to simulate how the \PLR model works on an input text. Intuitively, QBAFs are tuples comprising arguments, an attacks relation between arguments, a support relation between arguments and a base score for each argument. This conceptualisation  serves our purposes well, as, intuitively, arguments in QBAFs can be used to represent applicable patterns, supports and attacks can reflect agreement and disagreement between these patterns, and base scores can represent the (learned) absolute weights of these patterns in the \PLR model.

\section{\xlr: Argumentative Explanations for Pattern-Based Logistic Regression} \label{sec:xlr}

In this section, we introduce our \xlr\ method, whose 
overall generation process is shown in Figure~\ref{fig5:workflow}, alongside the illustrative example from Figure~\ref{fig5:ex}.
In Figure~\ref{fig5:workflow}, the part above the purple line is the standard prediction process already captured in Figure~\ref{fig5:ex}, starting from extracting the feature vector from the input text and then computing the predicted probability using the model weights ($\mathbf{w}$ and $b$).
Below the purple line, Figure~\ref{fig5:workflow} shows the four main steps for generating \xlr.
Using the feature vector and the model weights, the first step constructs a special type of quantitative bipolar argumentation framework (i.e., \QBAF) to represent relationships between the pattern features found in the input text (as well as the bias term $b$ of the \PLR model, corresponding to the top-most $\delta$ argument).
This type of QBAF is special because each argument also supports a class in $\mathcal{C}$, as represented by its background color. 
Particularly, the arguments with green background support the positive class, whereas the ones with red background support the negative class.
The supported class, as well as the base score of the argument, is determined by the weight of the corresponding pattern (or the bias term $b$) in the \PLR model.
 
The second step computes the dialectical strength of each argument, considering its attacker(s) and supporter(s).
Here, we propose a new strength function, returning a real number score in $(-\infty, \infty)$ that could reflect a class probability predicted by the \PLR model (when the function is applied to the argument $\delta$).
With this new strength function, the dialectical strengths of some arguments might be negative, and thus possibly difficult to interpret (e.g., in Figure~\ref{fig5:workflow},what does it mean for argument $\alpha_1$ (which supports the negative class) to support argument $\delta$ when the latter, having a negative strength, no longer supports the negative class?), so we do post-processing in the third step, making all the strength values to be positive and adjusting relations accordingly in a way that preserves the original meaning (e.g., in Figure~\ref{fig5:workflow}, after $\delta$'s strength is flipped to be positive, its supported class then becomes the positive class; so, $\alpha_1$ needs to attack $\delta$ and $\alpha_4$ needs to support $\delta$ to preserve the original interactions between arguments).
Finally, using the post-processed \QBAF, the fourth step generates the explanation which could be shallow (using only top-level arguments in the \QBAF) or deep (also using arguments at other levels in the \QBAF).
The background color of text fragments matching the patterns reflects the post-processed strengths of the corresponding arguments (where $\alpha_1$ corresponds to ``nothing better'', $\alpha_2$ to ``nothing'', $\alpha_3$ to ``better'' and $\alpha_4$ to ``hot sausages''). 
Finally, we can present the explanations to humans.

\begin{figure}[t!]
    \centering
    \includegraphics[width=\textwidth]{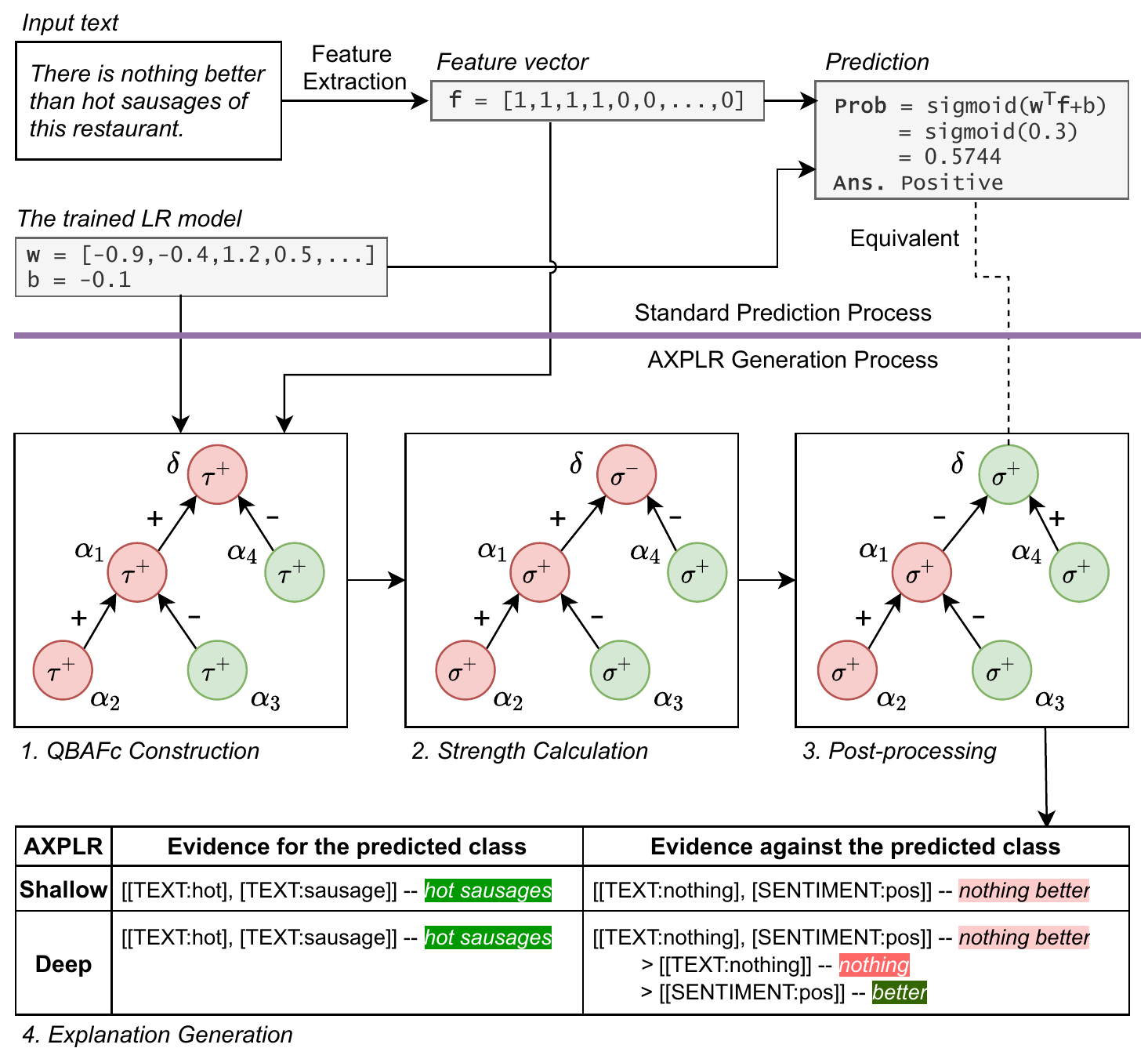}
    \caption{Overview of the \xlr generation process. Above the purple line, it shows the standard prediction process of pattern-based logistic regression. Below the purple line, it shows the four main steps to generate \xlr for the illustrative example from Figure~\ref{fig5:ex}
    (using a bottom-up \QBAF (\BQBAF), where edges labelled + indicate support and edges labelled - indicate attack).
    In step 1, $\tau^+$ indicates that all the base scores of the arguments are positive.
    In steps 2-3, $\sigma^+$ and $\sigma^-$ indicate that the dialectical strengths of the arguments are positive and negative, respectively. 
    In steps 1-3, the background color indicates the supported class (green indicates the positive class and red indicates the negative class).
    Step 4 automatically generates explanations (possibly of different kinds - e.g., shallow or deep) from the \QBAF\ resulting from step 3. 
    (See Section~\ref{sec:xlr} for further details.)
    }
    \label{fig5:workflow}
\end{figure}

In the remainder of this section, we provide details for the first three steps of the \xlr generation process (in Sections~\ref{subsec:af}, \ref{subsec:semantics}, and \ref{subsec:post}, respectively).
Then, in Section~\ref{subsec:explanations}, we give details of step 4.
Before that, in Section~\ref{subsec:properties}, we prove formal properties of (original and post-processed) \QBAFs, providing formal guarantees about their suitability to give rise to explanations.

\subsection{\QBAF construction} \label{subsec:af}

To begin with, we define how two patterns can be related. 

\begin{predef} \label{def:specific}
A pattern $p_1$ is more specific than or equivalent to another pattern $p_2$ (written as $p_1 \succeq p_2$) if and only if for every text $t$ matched by $p_1$, $t$ is also matched by $p_2$. In addition,  $p_1$ is more specific than $p_2$ (written as $p_1 \succ p_2$) if and only if $p_1 \succeq p_2$ but $p_2 \nsucceq p_1$. 
\end{predef}

For instance, we can say from Figure~\ref{fig5:ex} that $p_1 \succeq p_3$ because every text matched by $p_1$ is guaranteed to have a positive sentiment word which makes it matched by $p_3$. However, $p_3 \nsucceq p_1$ because a text matched by $p_3$ is guaranteed to have a positive word but it may not have the word ``nothing'' followed by a positive word. These two facts also imply $p_1 \succ p_3$. Similarly, $p_1 \succ p_2$. 

\begin{prelem} \label{lem:succ}
The relation $\succ$ is not reflexive and not symmetric, but it is transitive. 
\end{prelem}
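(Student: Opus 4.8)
The plan is to establish each of the three claimed properties of $\succ$ separately, working directly from Definition~\ref{def:specific}. Recall that $p_1 \succ p_2$ holds iff $p_1 \succeq p_2$ but $p_2 \nsucceq p_1$, where $\succeq$ is characterised by text-set containment: $p_1 \succeq p_2$ iff every text matched by $p_1$ is also matched by $p_2$. The natural strategy is to reduce everything to the subset relation on the sets of texts matched by the patterns. For a pattern $p$, write $T(p)$ for the set of all texts $t$ matched by $p$; then $p_1 \succeq p_2 \iff T(p_1) \subseteq T(p_2)$, and consequently $p_1 \succ p_2 \iff T(p_1) \subseteq T(p_2)$ and $T(p_2) \not\subseteq T(p_1)$, i.e.\ $T(p_1) \subsetneq T(p_2)$.

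For \textbf{non-reflexivity}, I would show that $p \succ p$ never holds. Indeed $p \succ p$ would require $p \succeq p$ and $p \nsucceq p$ simultaneously, but $\succeq$ is trivially reflexive ($T(p) \subseteq T(p)$ always), so the second conjunct $p \nsucceq p$ can never be satisfied. Hence $p \succ p$ is false for every pattern $p$, and $\succ$ is irreflexive. For \textbf{non-symmetry}, I would argue that $\succ$ is in fact \emph{antisymmetric in the strong sense}: if $p_1 \succ p_2$ then $p_2 \succ p_1$ cannot hold, because $p_1 \succ p_2$ already asserts $p_2 \nsucceq p_1$, whereas $p_2 \succ p_1$ would require $p_2 \succeq p_1$, a direct contradiction. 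To make the claim that $\succ$ is ``not symmetric'' precise, it suffices to exhibit one witnessing pair, and the running example already supplies it: $p_1 \succ p_3$ holds but $p_3 \succ p_1$ does not.

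For \textbf{transitivity}, suppose $p_1 \succ p_2$ and $p_2 \succ p_3$; I would unpack these as $T(p_1) \subseteq T(p_2)$ with $T(p_2) \not\subseteq T(p_1)$, and $T(p_2) \subseteq T(p_3)$ with $T(p_3) \not\subseteq T(p_2)$. Transitivity of $\subseteq$ immediately gives $T(p_1) \subseteq T(p_3)$, i.e.\ $p_1 \succeq p_3$. It remains to verify $p_3 \nsucceq p_1$, i.e.\ $T(p_3) \not\subseteq T(p_1)$. Here I would reason by contradiction: if $T(p_3) \subseteq T(p_1)$, then chaining with $T(p_1) \subseteq T(p_2)$ yields $T(p_3) \subseteq T(p_2)$, contradicting $T(p_3) \not\subseteq T(p_2)$ obtained from $p_2 \succ p_3$. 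Therefore $T(p_3) \not\subseteq T(p_1)$, giving $p_1 \succ p_3$ as required.

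The proof is routine once everything is phrased in terms of set containment, so I anticipate no serious obstacle; the only mild care needed is in the transitivity argument, where one must correctly extract the ``strict'' part (the non-containment) rather than merely composing the $\succeq$ relations, since composing two strict relations does not automatically preserve strictness without the short contradiction argument above. I would also note for completeness that the two facts (irreflexivity and transitivity) together entail strict antisymmetry, so the statement's guarantee of non-symmetry is subsumed, and the explicit example from Figure~\ref{fig5:ex} serves merely to confirm that $\succ$ is genuinely non-trivial (i.e.\ it does relate some distinct patterns asymmetrically).
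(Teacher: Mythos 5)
Your proof is correct and follows essentially the same route as the paper's: a contradiction argument for irreflexivity, asymmetry read off directly from the definition for non-symmetry, and transitivity of $\succeq$ followed by a short contradiction to obtain the strict part — your set-containment notation $T(p)$ is merely a repackaging of the paper's element-wise reasoning about matched texts. Your two refinements (proving $p_3 \nsucceq p_1$ directly rather than negating $p_1 \succ p_3$ and case-splitting as the paper does, and observing that non-symmetry strictly needs a witnessing pair such as $p_1 \succ p_3$ from Figure~\ref{fig5:ex}) are presentational improvements within the same argument, not a different approach.
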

\begin{proof}
See Appendix~\ref{proof:lem:succ}.
\end{proof}

Next, we extract argumentation frameworks from a trained \PLR model and a target input text $x$.
These argumentation frameworks, like QBAFs \cite{baroni2019fine}, envisage that arguments can attack or support arguments, and that they are equipped with a base score. However, these frameworks differ from QBAFs in that the arguments therein support\footnote{In this paper, we abuse terminology and use the term `support' with two meanings: an argument may support a class (by means of the function $c$ in Definition~\ref{def:axplrqbaf}) or an argument may support another argument in a dialectical sense (relations $\Atts_T$ and $\Atts_B$ in Definition~\ref{def:axplrqbaf}).}  classes 
(as indicated by the signs of the corresponding parameters in the PLR model).
Moreover, these frameworks instantiate the notions of attack and support in generic QBAFs to match the computation of the \PLR model. 
We name these frameworks
\textbf{\QBAFs} (i.e., \textbf{QBAFs with supported \textbf{c}lasses}).
We consider two ways to define dialectical relations in \QBAFs.
Intuitively, we know that arguments for two patterns that are related by $\prec$ should be in a dialectical relation somehow; 
however, we are uncertain whether the more specific one should be the attacker/supporter or should be attacked/supported. So, we propose two variations of the extracted \QBAFs: \emph{top-down \QBAFs} and \emph{bottom-up \QBAFs}.  

\begin{predef} \label{def:axplrqbaf}
Given a trained binary logistic regression model based on  feature patterns $p_1, \ldots, p_d$ with  weights $\langle w_1, \ldots, w_d, b \rangle$ and an input text $x$ with  binary feature vector $\textbf{f} = [f_1, \ldots, f_d]$, the extracted \emph{top-down \QBAF (\TQBAF)} and the extracted \emph{bottom-up \QBAF (\BQBAF)} are 5-tuples $\langle \Args, \Atts_T, \Supps_T, \tau, c \rangle$ and $\langle \Args, \Atts_B, \Supps_B, \tau, c \rangle$, respectively, such that:
\begin{itemize}
    \item $\Args = \{\alpha_i | f_i = 1\} \cup \{\delta\}$ is the set of arguments, where $\alpha_i$ is the argument drawn from pattern $p_i$ if this appears in $x$, whereas $\delta$ is the \emph{default argument}, corresponding to the bias term in the trained model.
    \item $\tau:\Args\rightarrow [0, \infty)$ is the \emph{base score} function where $\tau(\alpha_i)=|w_i|$ and $\tau(\delta)=|b|$.
    \item $c:\Args\rightarrow\{0, 1\}$ is the function mapping an argument to its \emph{supported class}. Here, $c(\alpha_i)=1$ if $w_i \geq 0$; otherwise, $c(\alpha_i)=0$. Similarly, $c(\delta)=1$ if $b \geq 0$; otherwise, $c(\delta)=0$.
    \item $\Atts_T \subseteq \Args \times \Args$ is the attack relation for the \TQBAF where
    \begin{align*}
        \Atts_T =& \{(\alpha_i, \delta)| c(\alpha_i) \neq c(\delta) \wedge \nexists j[\alpha_j \in \Args \wedge p_i \succ p_j]\} \cup \\
        & \{(\alpha_i, \alpha_j)| c(\alpha_i) \neq c(\alpha_j) \wedge p_i \succ p_j \wedge \nexists k[\alpha_k \in \Args \wedge p_i \succ p_k \succ p_j]\}.
    \end{align*}
    \item $\Supps_T \subseteq \Args \times \Args$ is the %set of all supports 
    support relation for the \TQBAF where
    \begin{align*}
        \Supps_T =& \{(\alpha_i, \delta)| c(\alpha_i) = c(\delta) \wedge \nexists j[\alpha_j \in \Args \wedge p_i \succ p_j]\} \cup \\
        & \{(\alpha_i, \alpha_j)| c(\alpha_i) = c(\alpha_j) \wedge p_i \succ p_j \wedge \nexists k[\alpha_k \in \Args \wedge p_i \succ p_k \succ p_j]\}.
    \end{align*}
    \item $\Atts_B \subseteq \Args \times \Args$ is the %set of all attacks 
    attack relation for the \BQBAF where
    \begin{align*}
        \Atts_B =& \{(\alpha_i, \delta)| c(\alpha_i) \neq c(\delta) \wedge \nexists j[\alpha_j \in \Args \wedge p_j \succ p_i]\} \cup \\
        & \{(\alpha_j, \alpha_i)| c(\alpha_i) \neq c(\alpha_j) \wedge p_i \succ p_j \wedge \nexists k[\alpha_k \in \Args \wedge p_i \succ p_k \succ p_j]\}.
    \end{align*}
    \item $\Supps_B \subseteq \Args \times \Args$ is the %set of all supports 
    support relation for the \BQBAF where
    \begin{align*}
        \Supps_B =& \{(\alpha_i, \delta)| c(\alpha_i) = c(\delta) \wedge \nexists j[\alpha_j \in \Args \wedge p_j \succ p_i]\} \cup \\
        & \{(\alpha_j, \alpha_i)| c(\alpha_i) = c(\alpha_j) \wedge p_i \succ p_j \wedge \nexists k[\alpha_k \in \Args \wedge p_i \succ p_k \succ p_j]\}.
    \end{align*}
\end{itemize}
\end{predef}

\begin{figure}[!t]
\centering
\small
\begin{tikzpicture}[node distance=2cm]
\node[draw,align=left,fill=red!20] (d) at (0,0) { $\delta$\\$\tau(\delta)=0.1$\\ $\sigma(\delta) = -0.3$};
\node[draw,align=left,fill=red!20] (a2) at (-4,-3) {$\alpha_2$ \\ $\mbox{[\as{TEXT:nothing}]}$\\$\tau(\alpha_2)=0.4$\\ $\sigma(\alpha_2) = 0.85$};
\node[draw,align=left,fill=green!20] (a3) at (-0.5,-3) {$\alpha_3$ \\ $\mbox{[\as{SENTIMENT:pos}]}$ \\$\tau(\alpha_3)=1.2$\\ $\sigma(\alpha_3) = 0.75$};
\node[draw,align=left,fill=green!20] (a4) at (4.15,-3) {$\alpha_4$ \\ $\mbox{[\as{TEXT:hot}, \as{TEXT:sausages}]}$\\$\tau(\alpha_4)=0.5$ \\ $\sigma(\alpha_4) = 0.5$};
\node[draw,align=left,fill=red!20] (a1) at (-1,-6) {$\alpha_1$ \\ $\mbox{[\as{TEXT:nothing}, \as{SENTIMENT:pos}]}$\\$\tau(\alpha_1)=0.9$\\ $\sigma(\alpha_1) = 0.9$};
\tsupp{a2}{d};
\draw [rel] (a3) -- node[anchor=south west] {\large \textbf{--}} (d);
\tatt{a4}{d};
\tsupp{a1}{a2};
\draw [rel] (a1) -- node[anchor=south west] {\large \textbf{--}} (a3);
\end{tikzpicture}
\caption{The extracted top-down \QBAF for the example in Figure~\ref{fig5:ex}. Here and everywhere in this paper we show \QBAFs as graphs, with nodes representing the arguments and labelled edges representing attack (-) or support (+). The color of the nodes represents the supported class (i.e., green for positive (1) and red for negative (0)).
(The meaning of the equalities of the form $\tau(x)=v$ and $\sigma(x)=v$ will be explained later.) }\label{fig5:extqbaf}
\end{figure}
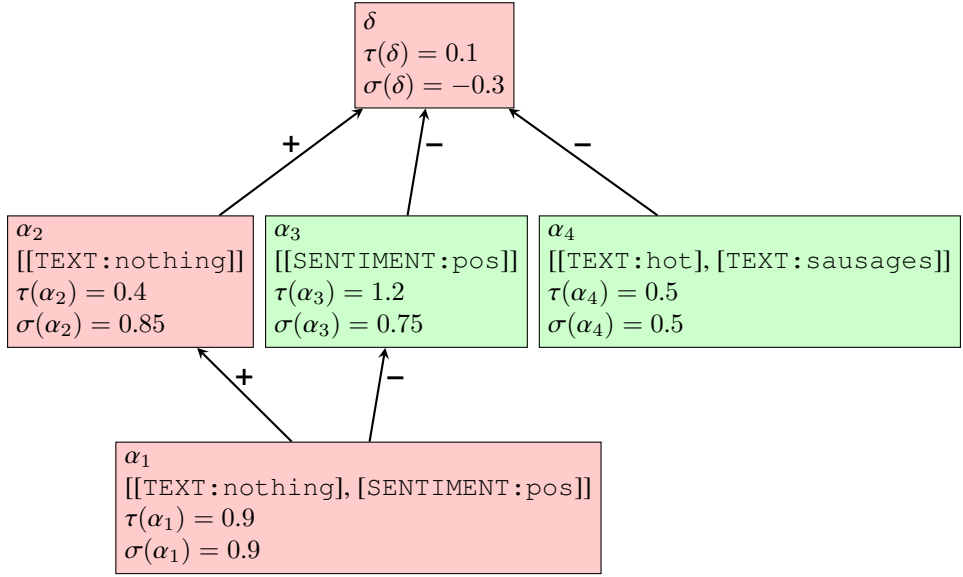

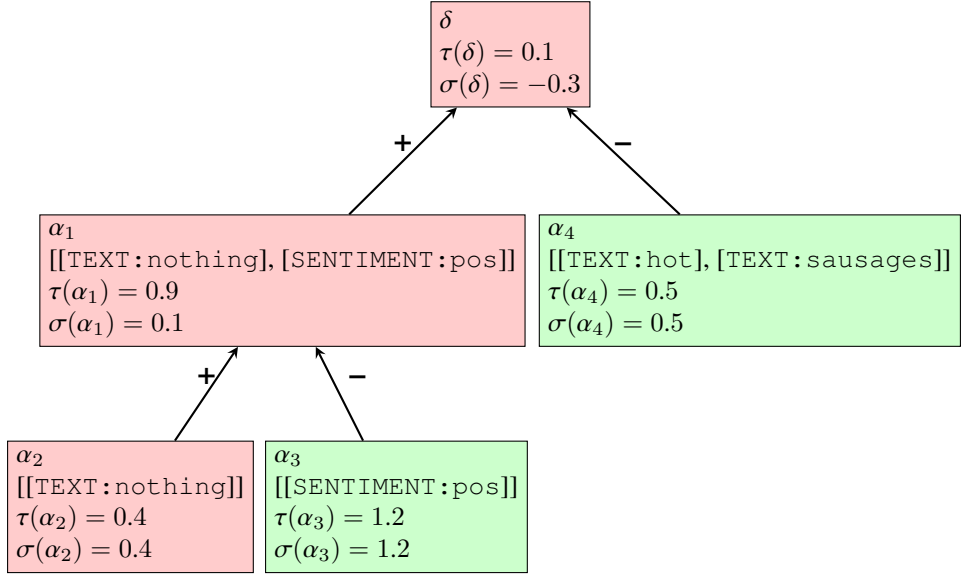
\begin{figure}[!t]
\centering
\small
\begin{tikzpicture}[node distance=2cm]
\node[draw,align=left,fill=red!20] (d) at (1,0) { $\delta$\\$\tau(\delta)=0.1$\\ $\sigma(\delta) = -0.3$};
\node[draw,align=left,fill=red!20] (a2) at (-4,-6) {$\alpha_2$ \\ $\mbox{[\as{TEXT:nothing}]}$\\$\tau(\alpha_2)=0.4$\\ $\sigma(\alpha_2) = 0.4$};
\node[draw,align=left,fill=green!20] (a3) at (-0.5,-6) {$\alpha_3$ \\ $\mbox{[\as{SENTIMENT:pos}]}$ \\$\tau(\alpha_3)=1.2$\\ $\sigma(\alpha_3) = 1.2$};
\node[draw,align=left,fill=green!20] (a4) at (4.15,-3) {$\alpha_4$ \\ $\mbox{[\as{TEXT:hot}, \as{TEXT:sausages}]}$\\$\tau(\alpha_4)=0.5$ \\ $\sigma(\alpha_4) = 0.5$};
\node[draw,align=left,fill=red!20] (a1) at (-2,-3) {$\alpha_1$ \\ $\mbox{[\as{TEXT:nothing}, \as{SENTIMENT:pos}]}$\\$\tau(\alpha_1)=0.9$\\ $\sigma(\alpha_1) = 0.1$};
\tsupp{a1}{d};
\tatt{a4}{d};
\draw [rel] (a3) -- node[anchor=south west] {\large \textbf{--}} (a1);
\tsupp{a2}{a1};
\end{tikzpicture}
\caption{The extracted bottom-up \QBAF for the example in Figure~\ref{fig5:ex}. The color represents the supported class (i.e., green for positive (1) and red for negative (0)). 
(The meaning of the equalities of the form $\tau(x)=v$ and $\sigma(x)=v$ will be explained later.)
}\label{fig5:exbqbaf}
\end{figure}

To explain, both the \TQBAF and the \BQBAF use the same $\Args$, $\tau$, and $c$. If the input text $x$ matches $n$ patterns, $\Args$ will have $n+1$ arguments. Amongst them, $n$ arguments (those of the form $\alpha_i$) are for the $n$ matched patterns, while the other one is for the default argument ($\delta$) corresponding to the bias term $b$ in the LR model. Therefore, the \QBAFs always have at least one argument, which is the default. 
The supported class ($c$) of each argument depends on whether the corresponding weight in the LR model is positive or negative. If $w_i$ is positive, it means that the existence of the pattern $p_i$ contributes to the positive class. So, the supported class of $\alpha_i$ should be positive (1). For the default argument, we consider the sign of the bias term $b$ instead.
Because the supported class encapsulates the sign, the base score ($\tau$) of the argument will be only the absolute value of the corresponding weight.

The extracted \TQBAF and \BQBAF for the example in Figure~\ref{fig5:ex} are shown in Figures~\ref{fig5:extqbaf} and \ref{fig5:exbqbaf}, respectively.
Following Definition~\ref{def:axplrqbaf}, the differences between the \TQBAF and \BQBAF are the $\Atts$ and $\Supps$ components. For the \TQBAF, (arguments for) more specific patterns attack or support (arguments for) more general patterns. The most general patterns, in turn, attack or support the default argument.
Hence, the more general patterns will stay ``closer'' to the default argument (which is usually placed at the top of \QBAFs when using graphs to visualise them in figures, as shown in Figure~\ref{fig5:extqbaf}. 
That is why we call \TQBAFs \textit{top-down}.
Conversely, for the \BQBAF, (arguments for) more general patterns attack or support (arguments for) more specific patterns. The most specific patterns, in turn, attack or support the default argument.
Therefore, the more specific patterns will stay ``closer'' to the default argument, as shown in Figure~\ref{fig5:exbqbaf}, so we call \BQBAFs \textit{bottom-up}.
To decide whether two arguments are related by attack or support, we check the classes supported by the arguments: if they support the same class, then they are related by support; otherwise, by attack.

We believe that both top-down and bottom-up arrangements may be legitimate, but in different situations.
Later, in Section~\ref{subsec:explanations}, we will show that, in \TQBAFs, we explain to users with general patterns first and provide more specific patterns as details when requested.
In \BQBAFs, by contrast, we explain to users with specific patterns first (as they contain more information) and mention general patterns as supporting or opposing reasons.

After this point, when we mention a \QBAF in this paper, we mean that it could be either a \TQBAF or a \BQBAF, unless otherwise stated. We assume that any generic \QBAF is of the form 
$\langle \Args, \Atts, \Supps, \tau, c \rangle$.
Furthermore, following notations in related work \cite{baroni2019fine}, we use $\Atts(a)$ and $\Supps(a)$ to represent sets of arguments attacking and supporting the argument $a$, respectively. 
Formally, $\Atts(a) = \{b \in \Args | (b, a) \in \Atts\}$ and $\Supps(a) = \{b \in \Args | (b, a) \in \Supps\}$.

\begin{prelem} \label{lem:deltaroot}
Given a \QBAF $\langle \Args, \Atts, \Supps, \tau, c \rangle$, then $\delta \notin \Atts(a)$ and $\delta \notin \Supps(a)$ for all $a \in \Args$. So, the out-degree of $\delta$ is 0.
\end{prelem}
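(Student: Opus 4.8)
The plan is to prove the statement by direct inspection of Definition~\ref{def:axplrqbaf}, since the claim is purely structural: it asserts that the default argument $\delta$ never occurs as the \emph{source} of an attack or a support edge. First I would unfold the definitions of $\Atts(a)$ and $\Supps(a)$, recalling that $\delta \in \Atts(a)$ if and only if $(\delta, a) \in \Atts$, and $\delta \in \Supps(a)$ if and only if $(\delta, a) \in \Supps$. Hence it suffices to show that no ordered pair whose first component is $\delta$ belongs to $\Atts$ or to $\Supps$, for both the \TQBAF and the \BQBAF.

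Next I would examine the four set-builder expressions defining $\Atts_T$, $\Supps_T$, $\Atts_B$, and $\Supps_B$. In each case the relation is a union of two pieces: a piece whose pairs target $\delta$ (of the form $(\alpha_i, \delta)$) and a piece relating two pattern-arguments (of the form $(\alpha_i, \alpha_j)$ for the top-down variant, and $(\alpha_j, \alpha_i)$ for the bottom-up variant). The key observation is that in every one of these pieces the first component is always an argument of the form $\alpha_i$ (equivalently $\alpha_j$), i.e., an argument drawn from a matched pattern, and never the default argument $\delta$. Consequently no pair of the form $(\delta, a)$ can lie in $\Atts$ or in $\Supps$, regardless of the variant chosen.

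From this I would conclude immediately that $\delta \notin \Atts(a)$ and $\delta \notin \Supps(a)$ for every $a \in \Args$, and therefore that the out-degree of $\delta$ --- the number of outgoing edges $(\delta, \cdot) \in \Atts \cup \Supps$ --- is $0$. I do not expect any genuine obstacle here, as the argument is a one-line case analysis over the definition. The only care required is to confirm that $\delta$ cannot be captured by the index notation $\alpha_i$ (it is introduced separately in $\Args$ as the unique non-pattern argument) and that the relation $\succ$, on which all the edge conditions depend, is defined in Definition~\ref{def:specific} only between patterns, so it never licenses an edge emanating from $\delta$.
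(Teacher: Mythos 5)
Your proposal is correct and follows essentially the same route as the paper, which justifies Lemma~\ref{lem:deltaroot} by direct inspection of Definition~\ref{def:axplrqbaf}: every pair in $\Atts_T$, $\Supps_T$, $\Atts_B$, and $\Supps_B$ has a pattern-argument $\alpha_i$ as its first component, so no edge ever emanates from $\delta$. Your additional remark that $\delta$ cannot be captured by the $\alpha_i$ notation is a sensible (if implicit in the paper) bookkeeping check, not a deviation in method.
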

Indeed, we can see from $\Atts_T$, $\Supps_T$, $\Atts_B$, and $\Supps_B$ in Definition~\ref{def:axplrqbaf} that $\delta$ never attacks or supports any other argument. So, its out-degree equals 0, and therefore we usually put it at the top of figures (as shown in Figures~\ref{fig5:extqbaf} and \ref{fig5:exbqbaf}).

Additionally, thanks to Definition~\ref{def:axplrqbaf} and Lemma~\ref{lem:deltaroot}, the graph structures underlying any \TQBAF and \BQBAF are directed acyclic graphs (DAGs).

\begin{pretheo}\label{theo:dag}
The graph structure of any \QBAF is a directed acyclic graph (DAG).

\end{pretheo}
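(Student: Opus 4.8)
The plan is to merge the attack and support relations into a single edge set $E = \Atts \cup \Supps$ and to show that the directed graph $(\Args, E)$ contains no directed cycle; since a DAG is precisely a directed graph without directed cycles, this argument will cover both the \TQBAF and the \BQBAF at once. First I would dispose of the default argument: by Lemma~\ref{lem:deltaroot}, $\delta$ has out-degree $0$, so once a directed walk reaches $\delta$ it can never continue. Consequently $\delta$ cannot lie on any directed cycle, and every hypothetical cycle must be confined to arguments of the form $\alpha_i$. I would also note in passing that there are no self-loops, since an edge $(\alpha_i,\alpha_i)$ would require either $c(\alpha_i)\neq c(\alpha_i)$ or $p_i \succ p_i$, both of which are impossible (the latter by irreflexivity from Lemma~\ref{lem:succ}).

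The key step is to read off from Definition~\ref{def:axplrqbaf} how the surviving edges among the $\alpha$-arguments relate to the specificity order $\succ$. In the \TQBAF, every edge between two $\alpha$-arguments comes from the second clauses of $\Atts_T$ and $\Supps_T$, so an edge $\alpha_i \to \alpha_j$ forces $p_i \succ p_j$ (more specific points to more general). In the \BQBAF the orientation is reversed: the relevant pairs in $\Atts_B$ and $\Supps_B$ are of the form $(\alpha_j,\alpha_i)$ with $p_i \succ p_j$, so an edge $\alpha_a \to \alpha_b$ forces $p_b \succ p_a$ (more general points to more specific). In either case, traversing an edge changes the associated pattern strictly in one fixed direction of $\succ$.

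I would then argue by contradiction. Suppose there is a directed cycle $\alpha_{i_1} \to \alpha_{i_2} \to \cdots \to \alpha_{i_m} \to \alpha_{i_1}$ among the $\alpha$-arguments. Chaining the implication of the previous step along the cycle yields a chain of strict specificity relations: either $p_{i_1} \succ p_{i_2} \succ \cdots \succ p_{i_m} \succ p_{i_1}$ (for the \TQBAF) or the reversed chain $p_{i_1} \succ p_{i_m} \succ \cdots \succ p_{i_2} \succ p_{i_1}$ (for the \BQBAF). Applying transitivity of $\succ$ (Lemma~\ref{lem:succ}) repeatedly collapses either chain to $p_{i_1} \succ p_{i_1}$, contradicting the irreflexivity of $\succ$ (again Lemma~\ref{lem:succ}). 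Hence no such cycle exists, and $(\Args, E)$ is a DAG.

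The work here is essentially bookkeeping rather than a deep argument: the only real care needed is to treat the \TQBAF and \BQBAF uniformly despite their opposite edge orientations relative to $\succ$, and to exclude the default argument and self-loops before invoking the partial-order argument. The conceptual crux — and the one place the hypotheses genuinely bite — is the observation that edges between $\alpha$-arguments encode the strict order $\succ$, so that acyclicity follows immediately from the transitivity and irreflexivity already established in Lemma~\ref{lem:succ}.
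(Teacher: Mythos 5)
Your proof is correct and follows essentially the same route as the paper's: both exclude $\delta$ from any cycle via Lemma~\ref{lem:deltaroot}, observe that every remaining edge between $\alpha$-arguments encodes a strict specificity relation $\succ$ (in opposite orientations for \TQBAFs and \BQBAFs), and derive a contradiction with irreflexivity after collapsing the cycle's chain by transitivity (Lemma~\ref{lem:succ}). Your explicit exclusion of self-loops is a small additional nicety not spelled out in the paper, but the argument is otherwise the same.
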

\begin{proof}
See Appendix~\ref{proof:theo:dag}.
\end{proof}

\subsection{Strength calculation} \label{subsec:semantics}
After we obtain the \QBAFs, the next step is to calculate the dialectical strength of each argument therein. To make this strength faithful to the underlying \PLR model, we propose the \textit{logistic regression semantics}, given by the strength function $\sigma$, defined next.

\begin{predef} \label{def:logregstr}
The \emph{logistic regression semantics} is defined as the strength function $\sigma:\Args\rightarrow\R$, where, for any $a \in \Args$: 
\begin{equation}\label{eq5:logregstr}
    \sigma(a) = \tau(a) + \sum_{b \in \Supps(a)}\frac{\sigma(b)}{\nu(b)} - \sum_{b \in \Atts(a)}\frac{\sigma(b)}{\nu(b)}
\end{equation}
where $\tau(a)$ is the base score of $a$ and $\nu(b)$ is the out-degree of $b$.
\end{predef}

This semantics can be applied to both \TQBAFs and \BQBAFs.
According to Equation~\ref{eq5:logregstr}, the strength of an argument starts from its base score, and it is increased and decreased by the strengths of its supporters and its attackers, respectively. 
However, the strength of each supporter/attacker must be divided by its out-degree (i.e., $\nu(b)$) before being combined with the base score.
Note that $\nu(b)$ in Equation~\ref{eq5:logregstr} is always greater than or equal to 1 because $b \in \Atts(a)$ or $b \in \Supps(a)$, meaning that $b$ attacks or supports at least one argument (which is $a$). So, no division by 0 may occur in this equation.
Additionally, any argument $a$ with no attackers or supporters (i.e., $\Atts(a) = \Supps(a) = \emptyset$) will have the strength equal to its base score, by Definition~\ref{def:logregstr}.

Because \QBAFs are DAGs (see Theorem~\ref{theo:dag}), we can use topological sorting to define the order to compute the arguments' strengths.
Considering the \TQBAF in Figure~\ref{fig5:extqbaf}, for example, $\alpha_1$ and $\alpha_4$ do not have any attacker or supporter, so their strengths equal their base scores. Next, we can calculate the strengths of $\alpha_2$ and $\alpha_3$, and then $\delta$:
\begin{align*}
    \sigma(\alpha_2) &= \tau(\alpha_2) + \sum_{b \in \Supps(\alpha_2)}\frac{\sigma(b)}{\nu(b)} - \sum_{b \in \Atts(\alpha_2)}\frac{\sigma(b)}{\nu(b)}\\
    &= 0.4 + \sum_{b \in \{\alpha_1\}}\frac{\sigma(b)}{\nu(b)} - \sum_{b \in \emptyset}\frac{\sigma(b)}{\nu(b)} = 0.4 + \frac{0.9}{2} = 0.85 \\
    % &\\
    \sigma(\alpha_3) &= \tau(\alpha_3) + \sum_{b \in \Supps(\alpha_3)}\frac{\sigma(b)}{\nu(b)} - \sum_{b \in \Atts(\alpha_3)}\frac{\sigma(b)}{\nu(b)}\\
    &= 1.2 + \sum_{b \in \emptyset}\frac{\sigma(b)}{\nu(b)} - \sum_{b \in \{\alpha_1\}}\frac{\sigma(b)}{\nu(b)} = 1.2 - \frac{0.9}{2} = 0.75 \\
    \sigma(\delta) &= \tau(\delta) + \sum_{b \in \Supps(\delta)}\frac{\sigma(b)}{\nu(b)} - \sum_{b \in \Atts(\delta)}\frac{\sigma(b)}{\nu(b)}\\
    &= 0.1 + \sum_{b \in \{\alpha_2\}}\frac{\sigma(b)}{\nu(b)} - \sum_{b \in \{\alpha_3, \alpha_4\}}\frac{\sigma(b)}{\nu(b)}
    = 0.1 + \frac{0.85}{1} - \frac{0.75}{1} - \frac{0.5}{1} = -0.3
\end{align*}

All the results are displayed in Figure~\ref{fig5:extqbaf}. Similarly, the strengths are computed for the \BQBAF and shown in Figure~\ref{fig5:exbqbaf}. We can see that the strength of the default arguments $\delta$ of both \TQBAF and \BQBAF is equal to the absolute of the logit $\sum\limits_{i=1}^dw_if_i + b$ of the \PLR model.

\begin{pretheo}\label{theo:predict}
For a given \QBAF, the prediction of the underlying \PLR model can be inferred from the strength of the default argument:
\begin{enumerate}
    \item The predicted probability for the class $c(\delta)$ equals $sigmoid(\sigma(\delta))$.
    \item Hence, if $\sigma(\delta) > 0$, the \PLR model predicts class $c(\delta)$. Otherwise, it predicts the opposite class (i.e., $1-c(\delta)$).
\end{enumerate}
\end{pretheo}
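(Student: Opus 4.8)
The plan is to reduce both claims to the single algebraic identity
\[
  \sigma(\delta) = (2c(\delta)-1)\,z, \qquad z := \sum_{i=1}^{d} w_i f_i + b,
\]
where $z$ is the logit of the \PLR model. Granting this, part~1 is immediate from the sigmoid properties recorded just after Equation~\ref{eq2:sigmoid}: if $c(\delta)=1$ then $\sigma(\delta)=z$ and the probability of class $c(\delta)$ is $\sigmo(z)=\sigmo(\sigma(\delta))$, whereas if $c(\delta)=0$ then $\sigma(\delta)=-z$ and the probability of class $c(\delta)=0$ is $1-\sigmo(z)=\sigmo(-z)=\sigmo(\sigma(\delta))$. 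Part~2 then follows because $\sigmo$ is strictly increasing and crosses $1/2$ at $0$: $\sigma(\delta)>0$ makes the probability of $c(\delta)$ exceed $1/2$, so $c(\delta)$ is predicted, and $\sigma(\delta)<0$ makes it fall below $1/2$, so $1-c(\delta)$ is predicted (the boundary $\sigma(\delta)=0$ is the measure-zero tie settled by the model's usual ``$\ge 1/2$'' convention).

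To obtain the identity I would pass to signed quantities that re-absorb the supported class. For every $a\in\Args$ set $\hat{w}(a) := (2c(a)-1)\tau(a)$ and $\hat{\sigma}(a) := (2c(a)-1)\sigma(a)$. By the definitions of $\tau$ and $c$ we get $\hat{w}(\alpha_i)=w_i$ and $\hat{w}(\delta)=b$, so, since $\Args=\{\alpha_i\mid f_i=1\}\cup\{\delta\}$, it follows that $\sum_{a\in\Args}\hat{w}(a)=z$. The key step is to rewrite the recursion of Definition~\ref{def:logregstr} in these variables. A supporter $b$ of $a$ has $c(b)=c(a)$, so $(2c(a)-1)\sigma(b)=\hat{\sigma}(b)$; an attacker $b$ has $c(b)=1-c(a)$, so $(2c(a)-1)\sigma(b)=-\hat{\sigma}(b)$, and the leading minus sign in front of the attacker sum flips this back to $+\hat{\sigma}(b)$. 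Hence supporters and attackers contribute identically and
\[
  \hat{\sigma}(a) \;=\; \hat{w}(a) + \sum_{b\in\Atts(a)\cup\Supps(a)} \frac{\hat{\sigma}(b)}{\nu(b)} .
\]
This sign bookkeeping is the most delicate part of the argument, but it is exactly what makes the computation collapse to a single flow.

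Finally I would prove a conservation identity by summing $\hat{w}(a)=\hat{\sigma}(a)-\sum_{b\in\Atts(a)\cup\Supps(a)}\hat{\sigma}(b)/\nu(b)$ over all $a\in\Args$ and exchanging the order of summation in the double sum. Each $b$ appears there once for every argument it attacks or supports, i.e.\ exactly $\nu(b)$ times, so its total contribution is $\nu(b)\cdot\hat{\sigma}(b)/\nu(b)=\hat{\sigma}(b)$ whenever $\nu(b)\ge 1$, and it does not occur at all when $\nu(b)=0$. By Lemma~\ref{lem:deltaroot} $\delta$ is a sink, and by the edge construction in Definition~\ref{def:axplrqbaf} together with finiteness and the strict-order property of $\succ$ (Lemma~\ref{lem:succ}) every $\alpha_i\in\Args$ has out-degree at least $1$; hence $\delta$ is the \emph{unique} sink. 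The double sum therefore equals $\sum_{b\in\Args}\hat{\sigma}(b)-\hat{\sigma}(\delta)$, and the whole telescopes to $\sum_{a\in\Args}\hat{w}(a)=\hat{\sigma}(\delta)$, i.e.\ $\hat{\sigma}(\delta)=z$ and so $\sigma(\delta)=(2c(\delta)-1)z$ as required. Well-definedness of every $\sigma(a)$ used here is guaranteed by the DAG structure (Theorem~\ref{theo:dag}), which lets the recursion be evaluated in topological order, and the whole argument is insensitive to the choice between a \TQBAF and a \BQBAF, since it uses only the shared $\Args,\tau,c$, the uniqueness of the sink, and the common sign convention for attacks and supports. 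The main obstacle is thus twofold: getting the attacker/supporter sign flips right in the reformulation, and justifying that $\delta$ is the only out-degree-zero argument, so that the telescoping sum leaves precisely $\hat{\sigma}(\delta)$.
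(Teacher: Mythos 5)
Your proof is correct, and it reaches the key identity $\sigma(\delta) = (2c(\delta)-1)\left(\sum_{i=1}^d w_i f_i + b\right)$ by a genuinely cleaner route than the paper's, although both ultimately rest on the same counting fact. The paper's proof (Appendix~\ref{proof:theo:predict}) splits into the cases $c(\delta)=1$ and $c(\delta)=0$, partitions $\Args \setminus \{\delta\}$ into the arguments supporting class $1$ and class $0$, solves Equation~\ref{eq5:logregstr} for $b$ and for each $w_i$, and then sums everything, cancelling two large groups of cross-terms by observing that each argument $g$ contributes exactly $\nu(g)$ copies of $\sigma(g)/\nu(g)$ across the equations. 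Your change of variables $\hat{\sigma}(a)=(2c(a)-1)\sigma(a)$, $\hat{w}(a)=(2c(a)-1)\tau(a)$ performs that sign bookkeeping once and for all: because supporters share the class of the supported argument and attackers have the opposite class (Definition~\ref{def:axplrqbaf}), attackers and supporters contribute identically to $\hat{\sigma}$, the case split on $c(\delta)$ disappears, and the whole argument collapses to a single telescoping sum over the DAG (Theorem~\ref{theo:dag} guaranteeing well-definedness, Lemma~\ref{lem:deltaroot} that $\delta$ is a sink). A further merit of your write-up is that it makes explicit a hypothesis the paper's cancellation step uses silently: that $\delta$ is the \emph{only} argument of out-degree zero. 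If some $\alpha_i$ had $\nu(\alpha_i)=0$, its strength would appear on one side of the cancellation but in none of the cross-terms, and both proofs would break at that point; your appeal to finiteness of $\Args$ and to $\succ$ being a strict order (Lemma~\ref{lem:succ}) is exactly the right justification, since a $\succ$-maximal (for a \TQBAF), respectively $\succ$-minimal (for a \BQBAF), pattern among those comparable to $p_i$ yields an outgoing edge; when writing this up formally you should spell out that one-line maximality argument rather than leave it as a remark.
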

\begin{proof}
See Appendix~\ref{proof:theo:predict}.
\end{proof}

In other words, we can read the prediction from the default argument $\delta$. The negative strength of $\delta$ implies that the argument can no longer support its originally supported class; therefore, the prediction must be the opposite class.
Since $\sigma(\delta)$ is computed from $\tau(\delta)$ and the strengths of the attackers and the supporters of $\delta$, we can use these attackers and supporters as explanation for the prediction.
Furthermore, we may generalize the results of Theorem~\ref{theo:predict} to other arguments $\alpha_i \in \Args$.
For instance, in Figure~\ref{fig5:exbqbaf}, we could say that the pattern [\as{TEXT:nothing}, \as{SENTIMENT:pos}] of $\alpha_1$ (weakly) supports the negative class with $\alpha_1$'s strength of 0.1, but it is not sufficient to make the final prediction become negative 
(indeed, even after the support by $\alpha_1$, the strength of $\delta$ remains negative, meaning that 
the final prediction is no longer the class $\delta$ originally supports, i.e., no longer the negative class).

\subsection{Post-Processing} \label{subsec:post}
We have shown how to extract \QBAFs, equipped with a suitable notion of dialectical strength to match the workings of \PLR so as to serve as a basis for explanation thereof.
Nevertheless, when arguments in these \QBAFs have a negative dialectical strength, the human interpretation of any resulting explanations may be difficult. 
Using Figure~\ref{fig5:extqbaf} as an example, we can see that argument $\alpha_2$ ([\as{TEXT:nothing}]), 
supporting the negative class, \textit{supports} argument $\delta$, which represents the final prediction. 
Due to $\delta$ supporting the negative class and $\sigma(\delta)$ being negative, we can read from the figure that the final prediction is the positive class. However, it is counterintuitive to say that a pattern for the negative class supports the prediction of the positive class.
Hence, we propose a post-processing step for \QBAFs to pave the way towards explanations better aligned with human interpretation.

\begin{predef} \label{def:postprocess}
Given a \QBAF 
$\langle \Args, \Atts, \Supps, \tau, c \rangle$ with 
$\sigma(a)$ the dialectical strength of any $a \in \Args$, the corresponding \emph{post-processed \QBAF}, denoted \emph{\QBAFP},  is defined as $\langle \Args', \Attsp, \Suppsp, \tau', c' \rangle$ where
\begin{itemize}
    \item $\Args' = \Args$.
    \item $\tau':\Args\rightarrow\R$ and $c':\Args\rightarrow\{0, 1\}$ are defined such that, for each $a \in \Args$,
        \begin{itemize}
            \item If $\sigma(a) \geq 0$, then $\tau'(a) = \tau(a)$ and $c'(a) = c(a)$.
            \item If $\sigma(a) < 0$, then $\tau'(a) = -\tau(a)$ and $c'(a) = 1-c(a)$.
        \end{itemize}
    \item $\Attsp = \{(a, b) \in \Atts \cup \Supps | c'(a) \neq c'(b) \wedge \sigma(a) \neq 0\}$.
    \item $\Suppsp = \{(a, b) \in \Atts \cup \Supps | c'(a) = c'(b) \wedge \sigma(a) \neq 0\}$.
\end{itemize}
\end{predef}

According to Definition~\ref{def:postprocess}, to post-process a \QBAF from the previous step, we change the supported class of arguments with negative strengths ($\sigma(a) < 0$) to the other class (i.e., $c'(a) = 1-c(a)$) and flip their base scores to be negative values (i.e., $\tau'(a) = -\tau(a)$). 
Then we re-label attacks and supports between arguments 
according to the new supported classes $c'$ while keeping the direction of the edges intact. 
Figures~\ref{fig5:extqbafpp} and \ref{fig5:exbqbafpp} show the post-processed \QBAFs of Figures~\ref{fig5:extqbaf} and \ref{fig5:exbqbaf}, respectively.
Note that, in this step, 
we also remove any edges where the strengths of the attackers or the supporters equal 0.
As a result of this post-processing, the dialectical strength for all arguments becomes positive (as shown also in Figures~\ref{fig5:extqbafpp} and \ref{fig5:exbqbafpp}). Generally:

\begin{pretheo} \label{theo:sigma'}
Given a \QBAF $\langle \Args, \Atts, \Supps, \tau, c \rangle$ and the corresponding \QBAFP $\langle \Args', \Attsp, \Suppsp, \tau', c' \rangle$, using the logistic regression semantics $\sigma$, let $\sigma(a)$ and $\sigma(a)'$ represent the dialectical strength of $a \in \Args = \Args'$ in \QBAF and \QBAFP, respectively. Then, the following statements are true:
\begin{itemize}
    \item If $\sigma(a) \geq 0$, then $\sigma(a)'= \sigma(a)$.
    \item If $\sigma(a) < 0$, then $\sigma(a)'= -\sigma(a)$.
\end{itemize}
\end{pretheo}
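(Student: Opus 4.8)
The plan is to prove the identity $\sigma(a)' = |\sigma(a)|$ for every $a \in \Args$ by induction following a topological ordering of the underlying DAG, which exists by Theorem~\ref{theo:dag}. A topological order processes the attackers and supporters of $a$ (i.e.\ the in-neighbours $b \in \Atts(a) \cup \Supps(a)$) before $a$ itself, so that when I reach $a$ I may assume the claim already holds for each such $b$. The base case consists of the source arguments, for which $\Atts(a) = \Supps(a) = \emptyset$; here $\sigma(a) = \tau(a) \geq 0$ and post-processing leaves $a$ untouched (no incoming edges and $\tau'(a) = \tau(a)$), so $\sigma(a)' = \tau(a) = \sigma(a) = |\sigma(a)|$.

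Before the inductive step I would record three structural facts about the passage from \QBAF to \QBAFP. First, Definition~\ref{def:postprocess} preserves the direction of every retained edge, so the DAG structure, and hence the topological order, is unchanged. Second, an edge $(b,a)$ is deleted exactly when $\sigma(b) = 0$; consequently every argument $b$ with $\sigma(b) \neq 0$ keeps all of its out-edges, so $\nu'(b) = \nu(b)$, while arguments of zero strength never occur as attackers or supporters in \QBAFP and therefore never enter the recursion. Third, writing $s_a = +1$ when $\sigma(a) \geq 0$ and $s_a = -1$ when $\sigma(a) < 0$, the definition gives $\tau'(a) = s_a\,\tau(a)$, and, crucially, $s_b\,|\sigma(b)| = \sigma(b)$ for every $b$ with $\sigma(b) \neq 0$.

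The heart of the argument is a relabelling identity for edges. In the original \QBAF an edge $(b,a)$ is a support iff $c(b) = c(a)$ and an attack iff $c(b) \neq c(a)$; encoding this as a sign $\chi(b,a) = (-1)^{c(b) \oplus c(a)}$ (so $\chi = +1$ for support and $-1$ for attack), the contribution of $b$ to $\sigma(a)$ is $\chi(b,a)\,\sigma(b)/\nu(b)$. In \QBAFP the same edge is labelled by $c'$ rather than $c$, giving $\chi'(b,a) = (-1)^{c'(b) \oplus c'(a)}$. Since $c'(x) = c(x) \oplus [\sigma(x) < 0]$, expanding the exponent yields $\chi'(b,a) = \chi(b,a)\,s_b\,s_a$, which formalises the fact that flipping the supported class of a negative-strength endpoint toggles the attacks and supports incident to it. I would then substitute the inductive hypothesis $\sigma(b)' = |\sigma(b)|$ together with $\nu'(b) = \nu(b)$ into the recursion for $\sigma(a)'$, so that each surviving predecessor contributes $\chi'(b,a)\,|\sigma(b)|/\nu(b) = s_a\,\chi(b,a)\,\sigma(b)/\nu(b)$, using $s_b|\sigma(b)| = \sigma(b)$; here the factor $s_a$ is common to all terms. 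Zero-strength predecessors may be reinserted into the sum without changing it, as they contribute $0$, so the sum runs over all of $\Supps(a) \cup \Atts(a)$. Pulling out $s_a$ and adding $\tau'(a) = s_a\tau(a)$ gives $\sigma(a)' = s_a\bigl(\tau(a) + \sum_{b \in \Supps(a)}\sigma(b)/\nu(b) - \sum_{b \in \Atts(a)}\sigma(b)/\nu(b)\bigr) = s_a\,\sigma(a) = |\sigma(a)|$, which completes the induction and delivers both cases of the statement.

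I expect the main obstacle to be the clean bookkeeping of the two simultaneous sign flips. Establishing the relabelling identity $\chi'(b,a) = s_a s_b\,\chi(b,a)$ is where the argument can most easily go wrong, because both endpoints of an edge may have had their supported class flipped, and one must verify that the two toggles compose correctly rather than cancelling the wrong way; the parity ($\oplus$) encoding is precisely what makes this step transparent. A secondary point requiring care is the treatment of deleted edges: I must confirm both that removing them leaves $\nu(b)$ unchanged for the surviving nonzero-strength predecessors and that the dropped zero-strength predecessors would anyway have contributed nothing to $\sigma(a)$, so that the original and post-processed recursions can be matched term by term.
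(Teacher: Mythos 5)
Your proof is correct, and it rests on the same skeleton as the paper's: induction along a topological ordering of the underlying DAG (Theorem~\ref{theo:dag}), the observation that post-processing only deletes out-edges of zero-strength arguments, and the fact that relation labels flip exactly according to which endpoints had their supported class flipped. Where you genuinely differ is in how the inductive step is executed. The paper partitions the arguments into those with $\sigma \geq 0$ and those with $\sigma < 0$, treats separately the base case, the non-negative-strength arguments preceding the first negative-strength one, the first negative-strength argument, and then the general case; in that general case it further splits the attackers and supporters of the target into four sets ($\Atts_g$, $\Atts_l$, $\Supps_g$, $\Supps_l$), tracks explicitly which become attacks and which become supports after post-processing, and runs the algebra twice, once for each sign of the target's strength. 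Your parity identity $\chi'(b,a) = s_a s_b\,\chi(b,a)$ compresses that entire case analysis into one sign computation, so a single chain of equalities $\sigma(a)' = s_a\,\sigma(a) = |\sigma(a)|$ disposes of every case simultaneously. This buys uniformity, makes the composition of the two possible endpoint flips transparent (which is exactly where a case analysis can go wrong), and also makes explicit a point the paper passes over silently: that $\nu'(b) = \nu(b)$ for every surviving attacker or supporter, since an argument with nonzero strength keeps all of its out-edges under Definition~\ref{def:postprocess}. What the paper's longer treatment buys in exchange is concreteness: the reader sees directly, configuration by configuration, which relations turn into attacks and which into supports, without having to decode sign arithmetic; both handle the deleted zero-strength edges the same way, by noting they contribute nothing to either recursion.
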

\begin{proof}
See Appendix~\ref{proof:theo:sigma'}.
\end{proof}

\begin{precor} \label{cor:sigma'}
Given a \QBAF and the corresponding \QBAFP, $\sigma(a)'= |\sigma(a)|$ for all $a \in \Args = \Args'$.
\end{precor}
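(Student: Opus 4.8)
The plan is to derive the corollary directly from Theorem~\ref{theo:sigma'}, which already characterises $\sigma(a)'$ in terms of the sign of $\sigma(a)$. Since the two bullet points of that theorem exhaust all possibilities for $\sigma(a) \in \R$ (namely $\sigma(a) \geq 0$ and $\sigma(a) < 0$), the only work left is to observe that both cases collapse into the single expression $|\sigma(a)|$. Concretely, I would fix an arbitrary $a \in \Args = \Args'$ and split on the sign of $\sigma(a)$.

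First I would treat the case $\sigma(a) \geq 0$. Here the first bullet of Theorem~\ref{theo:sigma'} gives $\sigma(a)' = \sigma(a)$, and since $\sigma(a)$ is nonnegative we have $\sigma(a) = |\sigma(a)|$ by the definition of absolute value; hence $\sigma(a)' = |\sigma(a)|$. Next I would treat the case $\sigma(a) < 0$. Here the second bullet gives $\sigma(a)' = -\sigma(a)$, and since $\sigma(a)$ is negative we have $-\sigma(a) = |\sigma(a)|$; hence again $\sigma(a)' = |\sigma(a)|$. As $a$ was arbitrary, the identity $\sigma(a)' = |\sigma(a)|$ holds for all $a \in \Args = \Args'$, which is exactly the claim.

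There is no real obstacle here: the corollary is a purely notational repackaging of Theorem~\ref{theo:sigma'}, rewriting the two sign-dependent cases as one absolute-value formula. The only thing to be careful about is that the case split is genuinely exhaustive and uses the correct branch of the theorem in each case, but this is immediate once the sign of $\sigma(a)$ is fixed. Accordingly I expect the write-up to be a few lines invoking Theorem~\ref{theo:sigma'} and the definition of $|\cdot|$.
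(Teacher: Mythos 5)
Your proposal is correct and matches the paper exactly: the paper states Corollary~\ref{cor:sigma'} as an immediate consequence of Theorem~\ref{theo:sigma'}, with no further argument needed, and your case split on the sign of $\sigma(a)$ combined with the definition of absolute value is precisely that derivation.
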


Furthermore, Theorem~\ref{theo:predict} also applies to \QBAFPs, as follows:

\begin{precor} \label{cor:predict'}
For a given \QBAFP, the prediction of the underlying \PLR model is the class $c'(\delta)$ with the predicted probability of $sigmoid(\sigma(\delta)')$.
\end{precor}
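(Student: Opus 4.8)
The plan is to obtain this corollary as an immediate consequence of Theorem~\ref{theo:predict}, combined with the sign relationship between $\sigma$ and $\sigma'$ from Theorem~\ref{theo:sigma'} (equivalently Corollary~\ref{cor:sigma'}) and the post-processing rules of Definition~\ref{def:postprocess}. Since all the analytical work is already carried out by those results, the argument reduces to a two-case bookkeeping on the sign of $\sigma(\delta)$, and the only external fact I will invoke is the identity $1-\sigmo(z)=\sigmo(-z)$ recorded in Section~\ref{subsec:LR}.

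First I would treat the case $\sigma(\delta)\geq 0$. Here Definition~\ref{def:postprocess} leaves the default argument unchanged, so $c'(\delta)=c(\delta)$, and Theorem~\ref{theo:sigma'} gives $\sigma(\delta)'=\sigma(\delta)$. Substituting these equalities into Theorem~\ref{theo:predict}(1) yields that the predicted probability for class $c'(\delta)=c(\delta)$ equals $\sigmo(\sigma(\delta))=\sigmo(\sigma(\delta)')$, and Theorem~\ref{theo:predict}(2) gives that (for $\sigma(\delta)>0$) the predicted class is $c(\delta)=c'(\delta)$, as claimed.

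Second, for $\sigma(\delta)<0$, Definition~\ref{def:postprocess} flips the default argument, so $c'(\delta)=1-c(\delta)$, while Theorem~\ref{theo:sigma'} gives $\sigma(\delta)'=-\sigma(\delta)>0$. By Theorem~\ref{theo:predict}(2) the \PLR model predicts the opposite class $1-c(\delta)$, which is exactly $c'(\delta)$. For the probability, Theorem~\ref{theo:predict}(1) states that the predicted probability of $c(\delta)$ is $\sigmo(\sigma(\delta))$; hence the probability of the predicted class $c'(\delta)=1-c(\delta)$ is $1-\sigmo(\sigma(\delta))=\sigmo(-\sigma(\delta))=\sigmo(\sigma(\delta)')$, using the sigmoid identity together with $\sigma(\delta)'=-\sigma(\delta)$. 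This closes the second case and establishes both claims whenever $\sigma(\delta)\neq 0$.

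The one point needing care, and the closest thing to an obstacle, is the degenerate boundary $\sigma(\delta)=0$: there the predicted probability is $\sigmo(\sigma(\delta)')=\sigmo(0)=1/2$, so the prediction is a genuine tie and the assignment of a single class label reduces purely to a tie-breaking convention. I would simply note that, under the convention that makes the branch split in Theorem~\ref{theo:predict} agree with that in Definition~\ref{def:postprocess} at the boundary, the reported class $c'(\delta)$ and probability $\sigmo(\sigma(\delta)')$ remain well defined and mutually consistent; in any case this measure-zero situation does not affect the generic argument above, so the corollary follows.
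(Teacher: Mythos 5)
Your proposal is correct and follows essentially the same route the paper intends: the paper states this corollary without a separate proof, treating it exactly as you do---an immediate consequence of Theorem~\ref{theo:predict} combined with Theorem~\ref{theo:sigma'} (equivalently Corollary~\ref{cor:sigma'}) and the class/base-score flipping rules of Definition~\ref{def:postprocess}, with the identity $1-\sigmo(z)=\sigmo(-z)$ handling the probability in the flipped case. Your added remark on the measure-zero boundary $\sigma(\delta)=0$ is a reasonable clarification of a tie-breaking convention the paper glosses over, but it does not change the substance of the argument.
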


Thus, intuitively, the effect of the post-processing step is to flip all the negative strengths to be positive, so we adjust the \QBAF accordingly, while preserving the interpretations of the arguments.
For instance, if the original argument $a$ has $\tau(a) = 0.3$, $c(a) = 1$ and $\sigma(a) = -0.5$, the meaning is that the argument initially supports the positive class with the base score of 0.3, but after taking into account dialectical relations, it supports the negative class instead with strength 0.5. After post-processing, we will obtain $\tau'(a) = -0.3$, $c'(a) = 0$ and $\sigma(a)' = 0.5$, with the (equivalent) meaning that the argument supports the negative class with strength 0.5.

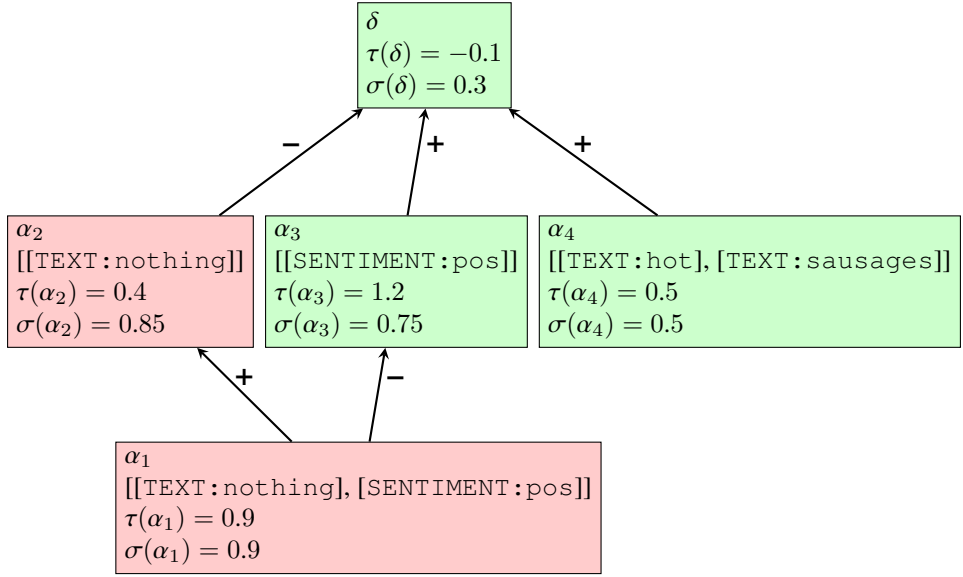
\begin{figure}[!t]
\centering
\small
\begin{tikzpicture}[node distance=2cm]
\node[draw,align=left,fill=green!20] (d) at (0,0) { $\delta$\\$\tau(\delta)=-0.1$\\ $\sigma(\delta) = 0.3$};
\node[draw,align=left,fill=red!20] (a2) at (-4,-3) {$\alpha_2$ \\ $\mbox{[\as{TEXT:nothing}]}$\\$\tau(\alpha_2)=0.4$\\ $\sigma(\alpha_2) = 0.85$};
\node[draw,align=left,fill=green!20] (a3) at (-0.5,-3) {$\alpha_3$ \\ $\mbox{[\as{SENTIMENT:pos}]}$ \\$\tau(\alpha_3)=1.2$\\ $\sigma(\alpha_3) = 0.75$};
\node[draw,align=left,fill=green!20] (a4) at (4.15,-3) {$\alpha_4$ \\ $\mbox{[\as{TEXT:hot}, \as{TEXT:sausages}]}$\\$\tau(\alpha_4)=0.5$ \\ $\sigma(\alpha_4) = 0.5$};
\node[draw,align=left,fill=red!20] (a1) at (-1,-6) {$\alpha_1$ \\ $\mbox{[\as{TEXT:nothing}, \as{SENTIMENT:pos}]}$\\$\tau(\alpha_1)=0.9$\\ $\sigma(\alpha_1) = 0.9$};
\tatt{a2}{d};
\draw [rel] (a3) -- node[anchor=south west] {\large \textbf{+}} (d);
\tsupp{a4}{d};
\tsupp{a1}{a2};
\draw [rel] (a1) -- node[anchor=south west] {\large \textbf{--}} (a3);
\end{tikzpicture}
\caption{The extracted top-down \QBAF in Figure~\ref{fig5:extqbaf} after being post-processed.}\label{fig5:extqbafpp}
\end{figure}

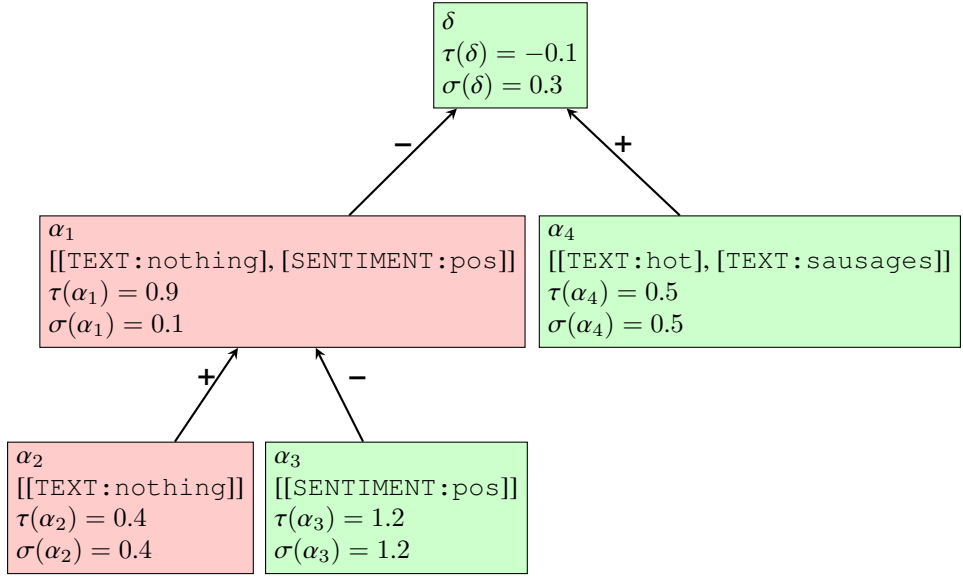
\begin{figure}[!t]
\centering
\small
\begin{tikzpicture}[node distance=2cm]
\node[draw,align=left,fill=green!20] (d) at (1,0) { $\delta$\\$\tau(\delta)=-0.1$\\ $\sigma(\delta) = 0.3$};
\node[draw,align=left,fill=red!20] (a2) at (-4,-6) {$\alpha_2$ \\ $\mbox{[\as{TEXT:nothing}]}$\\$\tau(\alpha_2)=0.4$\\ $\sigma(\alpha_2) = 0.4$};
\node[draw,align=left,fill=green!20] (a3) at (-0.5,-6) {$\alpha_3$ \\ $\mbox{[\as{SENTIMENT:pos}]}$ \\$\tau(\alpha_3)=1.2$\\ $\sigma(\alpha_3) = 1.2$};
\node[draw,align=left,fill=green!20] (a4) at (4.15,-3) {$\alpha_4$ \\ $\mbox{[\as{TEXT:hot}, \as{TEXT:sausages}]}$\\$\tau(\alpha_4)=0.5$ \\ $\sigma(\alpha_4) = 0.5$};
\node[draw,align=left,fill=red!20] (a1) at (-2,-3) {$\alpha_1$ \\ $\mbox{[\as{TEXT:nothing}, \as{SENTIMENT:pos}]}$\\$\tau(\alpha_1)=0.9$\\ $\sigma(\alpha_1) = 0.1$};
\tatt{a1}{d};
\tsupp{a4}{d};
\draw [rel] (a3) -- node[anchor=south west] {\large \textbf{--}} (a1);
\tsupp{a2}{a1};
\end{tikzpicture}
\caption{The extracted bottom-up \QBAF in Figure~\ref{fig5:exbqbaf} after being post-processed.}\label{fig5:exbqbafpp}
\end{figure}

\section{Analyzing Properties of \QBAF and \QBAFP} \label{subsec:properties}

In this section, we analyze the logistic regression semantics $\sigma$, when applied to \QBAFs and \QBAFPs, according to 11 group properties of gradual semantics proposed in \cite{baroni2019fine}.
These properties have been used to evaluate many argumentation frameworks and semantics in the literature \cite{albini2020dax,potyka2021interpreting,sukpanichnant2021lrp} in order to ensure that the frameworks and semantics will lead to explanations that are consistent with general human reasoning and debate.
Table~\ref{table:new} gives the formal definition of these properties for QBAFs $\langle \Args_*, \Atts_*, \Supps_*, \tau_* \rangle$
under semantics $\sigma_*$. Note that these properties apply naturally to \QBAFs of the form $\langle \Args, \Atts, \Supps, \tau, c \rangle$ and \QBAFPs of the form $\langle \Args', \Attsp, \Suppsp, \tau', c' \rangle$, 
under the LR semantics $\sigma$, for 
$\Args_*=\Args$ or $\Args_*=\Args'$,
$\Atts_*=\Atts$ or $\Atts_*=\Attsp$, and so on.
The definition of $<$ between two sets used in GP10 and GP11 is defined as follows. Given $P\subseteq \Args$ and $Q\subseteq \Args$, $P \leq Q$ if and only if there exists an injective mapping $f$ from $P$ to $Q$ such that $\forall \alpha \in P, \sigma(\alpha) \leq \sigma(f(\alpha))$. Furthermore, $P < Q$ if and only if $P \leq Q$ but $Q \not \leq P$. 

\begin{table}
\caption{Dialectical properties for %\QBAF 
QBAFs $\langle \Args_*, \Atts_*, \Supps_*, \tau_* \rangle$
under semantics $\sigma_*$ (adapted from \cite{baroni2019fine}).
\label{table:new}}
\begin{tabular}{|c|l|}
    \hline
    \textbf{GP1.} & If $\Atts_*(\alpha)=\emptyset$ and $\Supps_*(\alpha)=\emptyset$, then $\sigma_*(\alpha) = \tau_*(\alpha)$. \\ \hline
    \textbf{GP2.} & If $\Atts_*(\alpha)\neq\emptyset$ and $\Supps_*(\alpha)=\emptyset$, then $\sigma_*(\alpha) < \tau_*(\alpha)$. \\ \hline
    \textbf{GP3.} & If $\Atts_*(\alpha)=\emptyset$ and $\Supps_*(\alpha)\neq\emptyset$, then $\sigma_*(\alpha) > \tau_*(\alpha)$. \\ \hline
    \textbf{GP4.} & If $\sigma_*(\alpha) < \tau_*(\alpha)$, then $\Atts_*(\alpha)\neq\emptyset$. \\ \hline
    \textbf{GP5.} & If $\sigma_*(\alpha) > \tau_*(\alpha)$, then $\Supps_*(\alpha)\neq\emptyset$. \\ \hline
    \textbf{GP6.} & If $\Atts_*(\alpha)=\Atts_*(\beta)$, $\Supps_*(\alpha)=\Supps_*(\beta)$, and $\tau_*(\alpha)=\tau_*(\beta)$, then $\sigma_*(\alpha)=\sigma_*(\beta)$. \\ \hline
    \textbf{GP7.} & If $\Atts_*(\alpha)\subset\Atts_*(\beta)$, $\Supps_*(\alpha)=\Supps_*(\beta)$, and $\tau_*(\alpha)=\tau_*(\beta)$, then $\sigma_*(\alpha)>\sigma_*(\beta)$. \\ \hline
    \textbf{GP8.} & If $\Atts_*(\alpha)=\Atts_*(\beta)$, $\Supps_*(\alpha)\subset\Supps_*(\beta)$, and $\tau_*(\alpha)=\tau_*(\beta)$, then $\sigma_*(\alpha)<\sigma_*(\beta)$. \\ \hline
    \textbf{GP9.} & If $\Atts_*(\alpha)=\Atts_*(\beta)$, $\Supps_*(\alpha)=\Supps_*(\beta)$, and $\tau_*(\alpha)<\tau_*(\beta)$, then $\sigma_*(\alpha)<\sigma_*(\beta)$. \\ \hline
    \textbf{GP10.} & If $\Atts_*(\alpha)<\Atts_*(\beta)$, $\Supps_*(\alpha)=\Supps_*(\beta)$, and $\tau_*(\alpha)=\tau_*(\beta)$, then $\sigma_*(\alpha)>\sigma_*(\beta)$. \\ \hline
    \textbf{GP11.} & If $\Atts_*(\alpha)=\Atts_*(\beta)$, $\Supps_*(\alpha)<\Supps_*(\beta)$, and $\tau_*(\alpha)=\tau_*(\beta)$, then $\sigma_*(\alpha)<\sigma_*(\beta)$. \\ \hline
\end{tabular}
\end{table}

\begin{table*}[t]
\caption{Summary of the group properties for gradual semantics \cite{baroni2019fine} satisfied or unsatisfied by the logistic regression semantics $\sigma$ when applied on \QBAFs and \QBAFPs.}
    \label{tab5:properties}
    \centering
    \begin{tabular}{|l|c c c c c c c c c c c|}
    
    \hline
        & GP1 & GP2 & GP3 & GP4 & GP5 & GP6 & GP7 & GP8 & GP9 & GP10 & GP11 \\ \hline 
        $\langle \QBAF, \sigma \rangle$ & \cmark & \xmark & \xmark & \xmark & \xmark & \cmark & \xmark & \xmark & \cmark & \xmark & \xmark \\
        $\langle \QBAFP, \sigma \rangle$  & \cmark & \cmark & \cmark & \cmark & \cmark & \cmark & \cmark & \cmark & \cmark & \xmark & \xmark \\ \hline
    \end{tabular}
\end{table*}

Table~\ref{tab5:properties} summarizes our results (the proofs are in Appendix~\ref{proof:properties}).
To briefly explain here, 
GP1 is satisfied by both $\langle \QBAF, \sigma \rangle$ and $\langle \QBAFP, \sigma \rangle$ because when there is neither attacker nor supporter, the right side of Equation~\ref{eq5:logregstr} has only $\tau(\alpha)$ left, making the argument's strength equal its base score.
Meanwhile, GP2-GP5 are not satisfied by $\langle \QBAF, \sigma \rangle$ since the strengths of attackers and supporters of \QBAF (not yet post-processed) could be negative. 
As a result, when an argument has only attackers, it may not be the case that its strength becomes lower than its base score (i.e., GP2 may not be satisfied).
Similarly, when an argument has only supporters, it may not be the case that its strength becomes higher than its base score (i.e., GP3 may not be satisfied).
Furthermore, the strength of an argument in $\langle \QBAF, \sigma \rangle$ could be less than its base score due to not only attackers with positive strengths but also supporters with negative strengths, making GP4 unsatisfied.
Similarly, when the strength of an argument in $\langle \QBAF, \sigma \rangle$ is higher than its base score, it could also be due to attackers with negative strengths (not only supporters with positive strengths), making GP5 unsatisfied.
In contrast, after we post-process \QBAF to be \QBAFP, no argument strengths can be negative (see Corollary~\ref{cor:sigma'}); therefore, GP2-GP5 are satisfied by $\langle \QBAFP, \sigma \rangle$.
Next, GP6 and GP9 are satisfied by both $\langle \QBAF, \sigma \rangle$ and $\langle \QBAFP, \sigma \rangle$ as we can easily see from  Equation~\ref{eq5:logregstr}.
As for GP2-GP5, GP7 and GP8 may not be satisfied by $\langle \QBAF, \sigma \rangle$ due to the fact that, in \QBAF, the strengths of attackers and supporters could be negative, whereas 
$\langle \QBAFP, \sigma \rangle$ satisfies GP7 and GP8 since \QBAFP does not suffer from negative strengths.
Lastly, both $\langle \QBAF, \sigma \rangle$ and $\langle \QBAFP, \sigma \rangle$ do not satisfy GP10 and GP11 because the $<$ relation imposes a condition only on argument strengths while our semantics $\sigma$ considers not only the strengths of the attackers and the supporters but also their out-degrees. 
For illustrative counterexamples of these GPs, please see  Appendix~\ref{proof:properties}.

In conclusion, $\langle \QBAFP, \sigma \rangle$ satisfies nine out of the eleven group properties, while $\langle \QBAF, \sigma \rangle$ satisfies only three. This means that our post-processing step is important to make the argumentation framework align better with human interpretation and become more suitable for generating local explanations.

\section{Presenting \xlr to humans} \label{subsec:explanations}

Presenting the whole \QBAFP as a local explanation to lay users 
may not be a good idea since the graph could be very complicated (in terms of the number of arguments, relations, and depth). Also, the notions of attack and support may not be familiar to the users. 
So, the last step of \xlr is extracting the explanation from the \QBAFP.
We know from Theorem~\ref{theo:predict} and Corollary~\ref{cor:predict'} that the prediction of the LR model is associated to the strength of the default argument $\delta$. 
Hence, we can explain the prediction based on how $\sigma(\delta)'$ was calculated.
The value of $\sigma(\delta)'$ depends on $\tau'(\delta)$ (corresponding to the bias term in LR) and the strength $\sigma$ of all the attackers and supporters of $\delta$.
Therefore, we return, as the local explanation for $c'(\delta)$, a list of triplets $(p_{j}, \pi(p_{j}, x), \sigma(\alpha_j)')$  where $x$ is the input text, $\alpha_j$ (representing the pattern $p_j$) is one of the $k$ strongest supporters of $\delta$, and $\pi(p_{j}, x)$ is a part of $x$ that matches the pattern $p_{j}$.
If we want both evidence for and counter-evidence against the prediction, we can show triplets $(p_{j}, \pi(p_{j}, x), \sigma(\alpha_j)')$ for the top $k$ supporters and attackers with the highest $\sigma(\alpha_j)'$.
We call explanations of this form  \textbf{shallow \xlrs}.
Figure~\ref{fig5:shallowAXPLR} shows an example of shallow \xlr (extracted from a \BQBAFP) for the deceptive review detection task\footnote{This task aims to classify whether a review is genuine or fake.} where the color intensity represents the strengths of the arguments.
Shallow \xlrs\ are similar to the flat logistic regression explanations (FLXs) introduced in Section~\ref{subsec:plr}.
The only differences are that \textit{(i)} FLXs select the top $k$ patterns based on the size of $w_{j}f_{j}$ (which is equivalent to $\tau(\alpha_j)$) while our shallow \xlrs select top $k$ arguments based on the dialectical strength $\sigma(\alpha_j)$, and \textit{(ii)} any patterns matched in $x$ can be in FLXs whereas only attackers and supporters of $\delta$ can be in shallow \xlrs. 

\begin{figure}[t!]
    \centering
    \includegraphics[width=\textwidth]{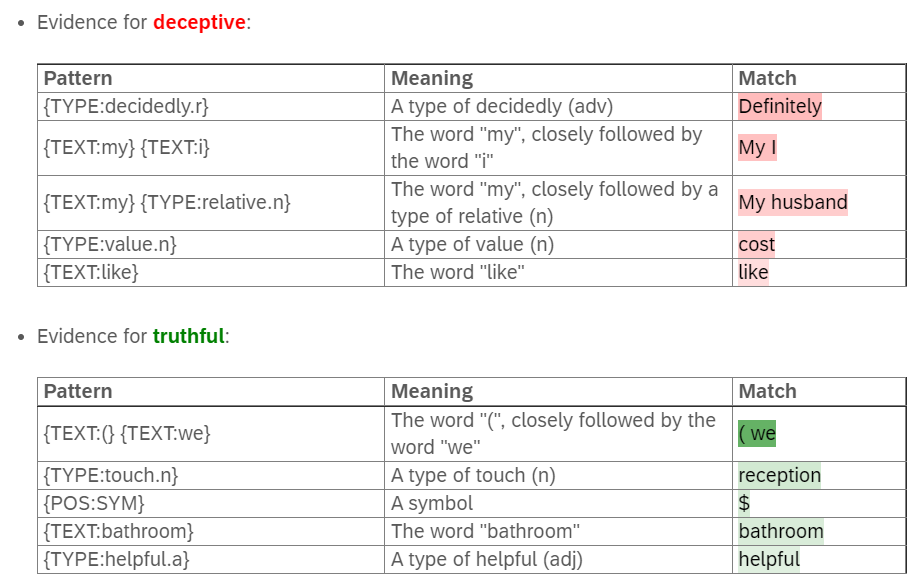}
    \caption{Example of shallow AXPLR for deceptive review detection.
    The patterns shown correspond to strongest supporters and attackers of $\delta$. The meaning of each pattern/argument is also provided. The color and its intensity represent the supported class and the strengths of the arguments, respectively.}
    \label{fig5:shallowAXPLR}
\end{figure}

Shallow \xlrs leverage only the attackers and supporters of $\delta$, but ignores the additional information available in the \QBAFP. Therefore, we propose another variation of \xlrs, called \textbf{deep \xlrs}, which also use other arguments and relations in the \QBAFP. Basically, deep \xlrs expand shallow \xlrs by additionally allowing  users to see attackers and supporters (if any) of arguments in shallow \xlrs as well as ``deeper'' arguments in the \QBAFP until there is no attacker or supporter for those arguments. 
Figure~\ref{fig5:deepAXPLR} shows a deep \xlr, explaining the same example and using the same \BQBAFP as the shallow \xlr in Figure~\ref{fig5:shallowAXPLR} does.
Note that a deep \xlr from a \BQBAFP (as in Figure~\ref{fig5:deepAXPLR}) shows specific patterns to the users first (as they directly support or attack $\delta$) and hides more general patterns as supporting or opposing reasons (to be expanded) in deeper levels.
In contrast, due to the opposite way of drawing attacks and supports, a deep \xlr from a \TQBAFP explains to users with general patterns first and provides more specific patterns as expandable details in deeper levels.

\begin{figure}[t!]
    \centering
    \includegraphics[width=\textwidth]{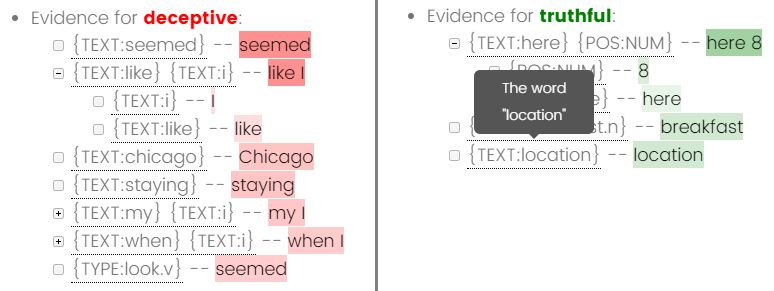}
    \caption{Example of deep AXPLR for deceptive review detection. A user can expand some patterns/arguments to see their sub-patterns (i.e., their attackers and/or supporters) such as, on the left, [\as{TEXT:i}] and [\as{TEXT:like}] supporting [\as{TEXT:like}, \as{TEXT:i}]% as displayed by using the same color
    . The meaning of each pattern is provided as a tooltip. The color and its intensity represent the supported class and the strengths of the arguments, respectively.}
    \label{fig5:deepAXPLR}
\end{figure}

Shallow and deep \xlr are just two examples of explanations that can be drawn from \QBAFPs. 
We leave the study of other forms of explanations, such as conversational explanations \cite{cocarascu2019extracting} and counterfactual explanations \cite{albini2021interpreting}, for future work.

\section{Experimental Setup} \label{sec:setup}

To evaluate \xlr, we conducted both empirical and human evaluations. For the empirical evaluation, we calculated some statistics for the \QBAFs extracted for target examples and performed analyses concerning \textit{sufficiency} of the generated explanations (see Section~\ref{sec:empirical}). 
For the human evaluation, we \textit{(i)} assessed plausibility of the explanations, i.e., how well the explanations from \xlr align with human explanations compared to a standard method for explaining LR results (see Section~\ref{sec:plausibility}), and \textit{(ii)} assessed how well \xlr can teach and support humans to perform a new task (see Section~\ref{sec:tutorial}). 

In the experiments, we targeted binary text classification using three English datasets as shown in Table~\ref{tab5:dataset}. The table also shows the classes we consider as positive and negative when running \grasp and \xlr
and the number of examples for each data split (used for training, developing, and testing the models; please see Appendix~\ref{app:terminology} for more details about these splits). 
Specifically, the datasets are:

\begin{itemize}
    \item \textbf{SMS Spam Collection} \cite{almeida2011contributions} focusing on detecting spams in a collection of SMS (short message service) messages. The dataset is imbalanced, containing 13.40\% spam messages and 86.60\% ham messages (i.e., non-spams). 
    \item \textbf{Amazon Clothes} \cite{he2016ups},  focusing on classifying whether a review (of clothing, shoes, and jewelry products) has positive or negative sentiment. The overall dataset is balanced.
    \item \textbf{Deceptive Hotel Reviews} \cite{ott-etal-2011-finding,ott-etal-2013-negative} focusing on identifying whether a given hotel review is truthful (genuine) or deceptive (fake). There are 1600 reviews in total for 20 hotels. For each hotel, there are 20 truthful positive, 20 truthful negative, 20 deceptive positive, and 20 deceptive negative reviews (positive and negative here refer to the review sentiment).
\end{itemize}

\begin{table}[t!]
\caption{Datasets used in the experiments.} \label{tab5:dataset}
\centering
\begin{tabular}{l c c c} 
 \hline
 Dataset & Positive class & Negative class & Training / Development / Testing\\
 \hline
 SMS Spam Collection  & Spam & Not spam & 3567 / 892 / 1115\\
 Amazon Clothes & Positive & Negative & 3000 / 300 / 10000 \\
 Deceptive Hotel Reviews & Deceptive & Truthful & 1024 / 256 / 320\\ 
 \hline
 \end{tabular}
\end{table}

For the LR classifiers of the first two datasets, the \grasp patterns were constructed with lemma, part-of-speech tags (POS), wordnet hypernyms, and sentiment attributes. We used alphabet size of 200, allowed two gaps in the patterns, and generated 100 patterns in total.
For the last dataset (Deceptive Hotel Reviews), the settings were the same except that we used text attributes (capturing the whole word) instead of the lemma attributes and we generated 200 patterns in total. 
The performance of the LR classifiers of the three datasets are reported in Table~\ref{tab5:accuracy}.
\textit{Accuracy} is the percentage of correct predictions on the test set, while \textit{F1} is a harmonic mean of \textit{Precision} and \textit{Recall} of the model. \textit{Positive F1} and \textit{Negative F1} are F1s when we consider the positive and the negative classes as the main class. These two F1s are then averaged to be \textit{Macro F1}. For more details about the evaluation metrics, please see Appendix~\ref{app:terminology}. 

\begin{table}[t!]
\caption{Performance of the pattern-based LR models on the test sets.} \label{tab5:accuracy}
\centering
\begin{tabular}{L{0.33\linewidth} C{0.12\linewidth} C{0.12\linewidth} C{0.11\linewidth} C{0.13\linewidth}} 
 \hline
 Dataset & Positive F1 & Negative F1 & Macro F1 & Accuracy\\
 \hline
 SMS Spam Collection  & 0.891 & 0.986 & 0.939 & 0.975 \\
 Amazon Clothes & 0.836 & 0.836 & 0.836 & 0.836 \\
 Deceptive Hotel Reviews & 0.847 & 0.859 & 0.853 & 0.853 \\ 
 \hline
 \end{tabular}
\end{table}

\section{Experiment 1: Empirical Evaluation} \label{sec:empirical}

We divide the empirical evaluation into two parts. The first part discusses the statistics for \QBAFs and \QBAFPs we generated from the test sets. This helps us understand what the argumentation graphs look like on average.
The second part focuses on \textit{sufficiency}, aiming to answer ``How many supporting arguments are needed on average so as to sufficiently make the model predict what it predicts?''.
This helps us decide how many arguments we should show in \xlrs\ generally.

\subsection{Statistics for \QBAFs and \QBAFPs}
Tables~\ref{tab5:statspam}-\ref{tab5:statdeceptive} show the statistics of the \QBAFs and \QBAFPs for the SMS Spam Collection, Amazon Clothes, and Deceptive Hotel Reviews, respectively.
We measure the number of arguments supporting positive and negative classes ($\Args^{+,\delta}$ and
$\Args^{-,\delta}$ respectively, defined below), the number of pairs in the (attacks and support) relations ($\mathcal{R}$), and specifically those not involving the default argument
($\mathcal{R_{\backslash\delta}}$):
\begin{equation*}
\begin{aligned}[c]
\Args^{+,\delta} &= \{a\in\Args|c(a)=1\} \\
\Args^{-,\delta} &= \{a\in\Args|c(a)=0\}
\end{aligned}
\qquad\qquad
\begin{aligned}[c]
\mathcal{R} &= \Atts \cup \Supps \\
\mathcal{R_{\backslash\delta}} &= \{(a, b) \in \mathcal{R} | b \neq \delta\}
\end{aligned}
\end{equation*}

Note that, if $\mathcal{R_{\backslash\delta}}=\emptyset$, then the generated \xlr amounts to a FLX where we do not consider relationships between features.
We consider both the statistics for the whole test sets and the statistics for each of the four possible situations -- true positives (TP), true negatives (TN), false positives (FP), and false negatives (FN). TP and TN are the cases where the model correctly predicts that the true class is 1 and 0, respectively. 
FP refers to cases when the predicted label is 1 but the true label is 0.
On the contrary, FN refers to cases when the predicted label is 0 but the true label is 1.
For more details, please see Appendix~\ref{app:terminology}.

\begin{table}[hbt!]
\caption{Statistics (Average $\pm$ SD) of \QBAF and \QBAFP for the SMS Spam Collection dataset. TP, TN, FP, and FN stand for true positives, true negatives, false positives, and false negatives, respectively. 
% \TODO{REWRITE THEM HERE?}
$\Args^{+,\delta}, \Args^{-,\delta}$ are sets of arguments supporting positive and negative classes, respectively. 
$\mathcal{R}$ is a set of (attack and support) relations, whereas $\mathcal{R}_{\backslash\delta}$ is a subset of $\mathcal{R}$ containing specifically those not involving the default argument $\delta$.
}
\label{tab5:statspam}
\centering
\begin{tabular}{l r r r r} 
 \hline
 Measurement & \TQBAF & \TQBAFP & \BQBAF & \BQBAFP\\
 \hline
 \# Examples & \multicolumn{4}{c}{1115 (TP: 115, TN: 972, FP: 5, FN: 23)} \\ \hline
$|\Args|$& 10.08$\pm$11.55& 10.08$\pm$11.55& 10.08$\pm$11.55& 10.08$\pm$11.55\\
- TP& 35.98$\pm$12.26& 35.98$\pm$12.26& 35.98$\pm$12.26& 35.98$\pm$12.26\\
- TN& 6.66$\pm$6.01& 6.66$\pm$6.01& 6.66$\pm$6.01& 6.66$\pm$6.01\\
- FP& 36.00$\pm$11.98& 36.00$\pm$11.98& 36.00$\pm$11.98& 36.00$\pm$11.98\\
- FN& 19.30$\pm$9.66& 19.30$\pm$9.66& 19.30$\pm$9.66& 19.30$\pm$9.66\\ \hline
$|\Args^{+,\delta}|$& 5.86$\pm$7.12& 6.25$\pm$7.77& 5.86$\pm$7.12& 6.47$\pm$8.07\\
- TP& 22.36$\pm$7.17& 24.68$\pm$7.70& 22.36$\pm$7.17& 25.42$\pm$7.52\\
- TN& 3.70$\pm$3.57& 3.85$\pm$3.69& 3.70$\pm$3.57& 4.00$\pm$4.01\\
- FP& 20.40$\pm$6.31& 22.20$\pm$6.87& 20.40$\pm$6.31& 24.20$\pm$6.02\\
- FN& 11.26$\pm$4.73& 11.74$\pm$4.83& 11.26$\pm$4.73& 12.48$\pm$5.69\\ \hline
$|\Args^{-,\delta}|$& 4.22$\pm$4.63& 3.83$\pm$4.14& 4.22$\pm$4.63& 3.61$\pm$3.81\\
- TP& 13.63$\pm$5.60& 11.30$\pm$5.37& 13.63$\pm$5.60& 10.57$\pm$5.41\\
- TN& 2.96$\pm$2.66& 2.81$\pm$2.65& 2.96$\pm$2.66& 2.66$\pm$2.33\\
- FP& 15.60$\pm$5.77& 13.80$\pm$5.50& 15.60$\pm$5.77& 11.80$\pm$6.06\\
- FN& 8.04$\pm$5.17& 7.57$\pm$5.29& 8.04$\pm$5.17& 6.83$\pm$4.25\\ \hline
$|\mathcal{R}|$& 11.64$\pm$16.77& 11.64$\pm$16.77& 13.82$\pm$21.40& 13.82$\pm$21.40\\
- TP& 49.31$\pm$19.63& 49.31$\pm$19.63& 61.43$\pm$26.11& 61.43$\pm$26.11\\
- TN& 6.69$\pm$8.17& 6.69$\pm$8.17& 7.57$\pm$10.37& 7.57$\pm$10.37\\
- FP& 50.60$\pm$21.14& 50.60$\pm$21.14& 64.20$\pm$28.35& 64.20$\pm$28.35\\
- FN& 23.96$\pm$14.27& 23.96$\pm$14.27& 29.17$\pm$19.09& 29.17$\pm$19.09\\ \hline
$|\mathcal{R}_{\backslash\delta}|$& 8.32$\pm$14.76& 8.32$\pm$14.76& 8.32$\pm$14.76& 8.32$\pm$14.76\\
- TP& 40.81$\pm$18.81& 40.81$\pm$18.81& 40.81$\pm$18.81& 40.81$\pm$18.81\\
- TN& 4.06$\pm$7.09& 4.06$\pm$7.09& 4.06$\pm$7.09& 4.06$\pm$7.09\\
- FP& 43.40$\pm$21.41& 43.40$\pm$21.41& 43.40$\pm$21.41& 43.40$\pm$21.41\\
- FN& 18.43$\pm$13.93& 18.43$\pm$13.93& 18.43$\pm$13.93& 18.43$\pm$13.93\\ \hline
$|\Atts|$& 6.39$\pm$8.48& 5.31$\pm$6.33& 7.51$\pm$10.90& 5.73$\pm$7.93\\
- TP& 24.92$\pm$9.91& 16.50$\pm$8.10& 31.50$\pm$13.48& 20.22$\pm$11.75\\
- TN& 3.95$\pm$4.43& 3.74$\pm$4.15& 4.36$\pm$5.43& 3.75$\pm$4.71\\
- FP& 26.60$\pm$10.83& 20.60$\pm$10.36& 32.40$\pm$14.84& 22.40$\pm$12.95\\
- FN& 12.78$\pm$7.11& 12.13$\pm$6.59& 15.30$\pm$9.18& 13.35$\pm$7.79\\ \hline
$|\Supps|$& 5.24$\pm$8.49& 6.33$\pm$10.88& 6.31$\pm$10.72& 8.10$\pm$13.88\\
- TP& 24.39$\pm$10.38& 32.81$\pm$12.21& 29.93$\pm$13.40& 41.22$\pm$15.19\\
- TN& 2.74$\pm$3.99& 2.95$\pm$4.28& 3.21$\pm$5.20& 3.82$\pm$5.85\\
- FP& 24.00$\pm$10.89& 30.00$\pm$11.07& 31.80$\pm$14.04& 41.80$\pm$15.69\\
- FN& 11.17$\pm$7.51& 11.83$\pm$7.95& 13.87$\pm$10.33& 15.83$\pm$11.62\\ \hline
 \end{tabular}
\end{table}

\begin{table}[hbt!]
\caption{Statistics (Average $\pm$ SD) of \QBAF and \QBAFP for the Amazon Clothes dataset. TP, TN, FP, and FN stand for true positives, true negatives, false positives, and false negatives, respectively. 
$\Args^{+,\delta}, \Args^{-,\delta}$ are sets of arguments supporting positive and negative classes, respectively. 
$\mathcal{R}$ is a set of (attack and support) relations, whereas $\mathcal{R}_{\backslash\delta}$ is a subset of $\mathcal{R}$ containing specifically those not involving the default argument $\delta$.
}
\label{tab5:statsentiment}
\centering
\begin{tabular}{l r r r r} 
 \hline
 Measurement & \TQBAF & \TQBAFP & \BQBAF & \BQBAFP\\
 \hline
 \# Examples & \multicolumn{4}{c}{10000 (TP: 4176, TN: 4186, FP: 848, FN: 790)} \\ \hline
$|\Args|$& 16.09$\pm$8.13& 16.09$\pm$8.13& 16.09$\pm$8.13& 16.09$\pm$8.13\\
- TP& 14.85$\pm$7.57& 14.85$\pm$7.57& 14.85$\pm$7.57& 14.85$\pm$7.57\\
- TN& 17.45$\pm$8.21& 17.45$\pm$8.21& 17.45$\pm$8.21& 17.45$\pm$8.21\\
- FP& 14.99$\pm$8.14& 14.99$\pm$8.14& 14.99$\pm$8.14& 14.99$\pm$8.14\\
- FN& 16.57$\pm$9.31& 16.57$\pm$9.31& 16.57$\pm$9.31& 16.57$\pm$9.31\\ \hline
$|\Args^{+,\delta}|$& 7.39$\pm$4.36& 7.45$\pm$4.50& 7.39$\pm$4.36& 7.43$\pm$4.88\\
- TP& 8.91$\pm$4.18& 9.74$\pm$4.07& 8.91$\pm$4.18& 10.22$\pm$4.51\\
- TN& 5.93$\pm$3.98& 5.20$\pm$3.76& 5.93$\pm$3.98& 4.75$\pm$3.68\\
- FP& 7.80$\pm$4.27& 8.47$\pm$4.12& 7.80$\pm$4.27& 8.49$\pm$4.34\\
- FN& 6.65$\pm$4.55& 6.10$\pm$4.29& 6.65$\pm$4.55& 5.84$\pm$4.37\\ \hline
$|\Args^{-,\delta}|$& 8.70$\pm$5.23& 8.64$\pm$5.89& 8.70$\pm$5.23& 8.65$\pm$6.24\\
- TP& 5.94$\pm$3.99& 5.10$\pm$4.18& 5.94$\pm$3.99& 4.63$\pm$4.27\\
- TN& 11.52$\pm$4.96& 12.25$\pm$5.36& 11.52$\pm$4.96& 12.71$\pm$5.53\\
- FP& 7.19$\pm$4.18& 6.52$\pm$4.38& 7.19$\pm$4.18& 6.50$\pm$4.44\\
- FN& 9.92$\pm$5.14& 10.47$\pm$5.49& 9.92$\pm$5.14& 10.73$\pm$5.57\\ \hline
$|\mathcal{R}|$& 17.64$\pm$10.25& 17.64$\pm$10.25& 21.68$\pm$13.29& 21.68$\pm$13.29\\
- TP& 16.33$\pm$9.78& 16.33$\pm$9.78& 20.55$\pm$12.49& 20.55$\pm$12.49\\
- TN& 19.10$\pm$10.20& 19.10$\pm$10.20& 23.07$\pm$13.47& 23.07$\pm$13.47\\
- FP& 16.38$\pm$10.47& 16.38$\pm$10.47& 20.06$\pm$13.57& 20.06$\pm$13.57\\
- FN& 18.18$\pm$11.57& 18.18$\pm$11.57& 22.03$\pm$15.32& 22.03$\pm$15.32\\ \hline
$|\mathcal{R}_{\backslash\delta}|$& 12.71$\pm$8.73& 12.71$\pm$8.73& 12.71$\pm$8.73& 12.71$\pm$8.73\\
- TP& 12.34$\pm$8.42& 12.34$\pm$8.42& 12.34$\pm$8.42& 12.34$\pm$8.42\\
- TN& 13.30$\pm$8.74& 13.30$\pm$8.74& 13.30$\pm$8.74& 13.30$\pm$8.74\\
- FP& 11.68$\pm$9.01& 11.68$\pm$9.01& 11.68$\pm$9.01& 11.68$\pm$9.01\\
- FN& 12.65$\pm$9.77& 12.65$\pm$9.77& 12.65$\pm$9.77& 12.65$\pm$9.77\\ \hline
$|\Atts|$& 5.27$\pm$3.79& 5.11$\pm$3.96& 8.10$\pm$5.54& 4.08$\pm$3.47\\
- TP& 4.74$\pm$3.55& 5.50$\pm$4.38& 8.20$\pm$5.18& 3.72$\pm$3.43\\
- TN& 5.73$\pm$3.86& 4.56$\pm$3.27& 8.00$\pm$5.70& 4.12$\pm$3.31\\
- FP& 5.20$\pm$3.82& 6.30$\pm$4.66& 8.09$\pm$5.73& 5.15$\pm$3.75\\
- FN& 5.61$\pm$4.25& 4.68$\pm$3.67& 8.14$\pm$6.31& 4.72$\pm$3.84\\ \hline
$|\Supps|$& 12.38$\pm$7.04& 12.53$\pm$7.20& 13.58$\pm$8.44& 17.59$\pm$10.52\\
- TP& 11.59$\pm$6.76& 10.83$\pm$5.97& 12.35$\pm$7.72& 16.83$\pm$9.79\\
- TN& 13.37$\pm$6.98& 14.54$\pm$7.64& 15.06$\pm$8.65& 18.95$\pm$10.82\\
- FP& 11.18$\pm$7.16& 10.08$\pm$6.30& 11.97$\pm$8.35& 14.91$\pm$10.26\\
- FN& 12.56$\pm$7.92& 13.49$\pm$8.52& 13.89$\pm$9.65& 17.32$\pm$11.89\\ \hline
 \end{tabular}
\end{table}

\begin{table}[hbt!]
\caption{Statistics (Average $\pm$ SD) of \QBAF and \QBAFP for the Deceptive Hotel Review dataset. TP, TN, FP, and FN stand for true positives, true negatives, false positives, and false negatives, respectively. 
$\Args^{+,\delta}, \Args^{-,\delta}$ are sets of arguments supporting positive and negative classes, respectively. 
$\mathcal{R}$ is a set of (attack and support) relations, whereas $\mathcal{R}_{\backslash\delta}$ is a subset of $\mathcal{R}$ containing specifically those not involving the default argument $\delta$.
} \label{tab5:statdeceptive}
\centering
\begin{tabular}{l r r r r} 
 \hline
 Measurement & \TQBAF & \TQBAFP & \BQBAF & \BQBAFP\\
 \hline
 \# Examples & \multicolumn{4}{c}{320 (TP: 130, TN: 143, FP: 26, FN: 21)} \\ \hline
$|\Args|$& 19.44$\pm$6.82& 19.44$\pm$6.82& 19.44$\pm$6.82& 19.44$\pm$6.82\\
- TP& 19.54$\pm$6.54& 19.54$\pm$6.54& 19.54$\pm$6.54& 19.54$\pm$6.54\\
- TN& 20.24$\pm$6.86& 20.24$\pm$6.86& 20.24$\pm$6.86& 20.24$\pm$6.86\\
- FP& 17.77$\pm$6.48& 17.77$\pm$6.48& 17.77$\pm$6.48& 17.77$\pm$6.48\\
- FN& 15.48$\pm$7.34& 15.48$\pm$7.34& 15.48$\pm$7.34& 15.48$\pm$7.34\\ \hline
$|\Args^{+,\delta}|$& 9.14$\pm$4.46& 9.89$\pm$5.02& 9.14$\pm$4.46& 10.34$\pm$5.00\\
- TP& 12.06$\pm$4.27& 13.58$\pm$4.53& 12.06$\pm$4.27& 13.85$\pm$4.50\\
- TN& 6.85$\pm$3.11& 6.92$\pm$3.21& 6.85$\pm$3.11& 7.43$\pm$3.32\\
- FP& 9.50$\pm$3.56& 10.77$\pm$3.74& 9.50$\pm$3.56& 11.42$\pm$4.09\\
- FN& 6.19$\pm$3.60& 6.29$\pm$3.61& 6.19$\pm$3.60& 7.10$\pm$3.96\\ \hline
$|\Args^{-,\delta}|$& 10.30$\pm$4.91& 9.55$\pm$5.33& 10.30$\pm$4.91& 9.10$\pm$5.13\\
- TP& 7.48$\pm$3.16& 5.96$\pm$3.14& 7.48$\pm$3.16& 5.69$\pm$2.93\\
- TN& 13.39$\pm$4.77& 13.32$\pm$4.81& 13.39$\pm$4.77& 12.81$\pm$4.68\\
- FP& 8.27$\pm$3.34& 7.00$\pm$3.27& 8.27$\pm$3.34& 6.35$\pm$2.86\\
- FN& 9.29$\pm$4.23& 9.19$\pm$4.19& 9.29$\pm$4.23& 8.38$\pm$3.77\\ \hline
$|\mathcal{R}|$& 22.03$\pm$9.43& 22.03$\pm$9.43& 21.48$\pm$9.62& 21.48$\pm$9.62\\
- TP& 23.96$\pm$9.93& 23.96$\pm$9.93& 23.12$\pm$10.32& 23.12$\pm$10.32\\
- TN& 21.41$\pm$8.69& 21.41$\pm$8.69& 21.01$\pm$8.67& 21.01$\pm$8.67\\
- FP& 20.04$\pm$8.88& 20.04$\pm$8.88& 20.00$\pm$9.75& 20.00$\pm$9.75\\
- FN& 16.81$\pm$9.35& 16.81$\pm$9.35& 16.29$\pm$9.37& 16.29$\pm$9.37\\ \hline
$|\mathcal{R}_{\backslash\delta}|$& 8.19$\pm$6.35& 8.19$\pm$6.35& 8.19$\pm$6.35& 8.19$\pm$6.35\\
- TP& 10.69$\pm$7.09& 10.69$\pm$7.09& 10.69$\pm$7.09& 10.69$\pm$7.09\\
- TN& 6.47$\pm$5.13& 6.47$\pm$5.13& 6.47$\pm$5.13& 6.47$\pm$5.13\\
- FP& 7.35$\pm$5.64& 7.35$\pm$5.64& 7.35$\pm$5.64& 7.35$\pm$5.64\\
- FN& 5.52$\pm$4.59& 5.52$\pm$4.59& 5.52$\pm$4.59& 5.52$\pm$4.59\\ \hline
$|\Atts|$& 9.30$\pm$4.35& 7.00$\pm$3.43& 8.43$\pm$4.25& 6.13$\pm$3.24\\
- TP& 11.41$\pm$4.43& 6.45$\pm$3.34& 10.43$\pm$4.50& 5.98$\pm$3.51\\
- TN& 7.75$\pm$3.48& 7.55$\pm$3.34& 6.96$\pm$3.29& 6.32$\pm$2.96\\
- FP& 9.58$\pm$4.03& 7.12$\pm$3.84& 8.81$\pm$4.04& 6.38$\pm$3.65\\
- FN& 6.52$\pm$3.75& 6.52$\pm$3.75& 5.52$\pm$3.14& 5.48$\pm$2.91\\ \hline
$|\Supps|$& 12.73$\pm$6.18& 15.03$\pm$7.09& 13.05$\pm$6.45& 15.34$\pm$7.35\\
- TP& 12.55$\pm$6.27& 17.52$\pm$7.66& 12.69$\pm$6.54& 17.14$\pm$7.66\\
- TN& 13.66$\pm$6.03& 13.86$\pm$6.11& 14.06$\pm$6.25& 14.69$\pm$6.81\\
- FP& 10.46$\pm$5.58& 12.92$\pm$5.59& 11.19$\pm$6.15& 13.62$\pm$6.66\\
- FN& 10.29$\pm$6.29& 10.29$\pm$6.29& 10.76$\pm$6.74& 10.81$\pm$7.10\\ \hline
 \end{tabular}
\end{table}

\paragraph{Number of arguments}
The spam dataset had the minimum average number of arguments ($\sim$ 10 arguments per example as shown by $|\Args|$ in Table~\ref{tab5:statspam}). However, if we look at examples for which the prediction is positive (i.e., both true positive examples TP and false positive examples FP in Table~\ref{tab5:statspam}), we find 36 arguments per example on average. 
Looking at the underlying PLR model, we found that the default argument $\delta$ before post processing supported the negative class with $\tau(\delta) = 5.800$ (not shown in the Table), which was very high compared to the base scores of other arguments. 
This means that the classifier answered ``Not spam'' by default unless it could identify sufficient evidence to answer ``Spam''. 
Even true negative examples (TN) had around three arguments for the negative class on average, including $\delta$.
Interestingly,  false negative examples (FN) had a relatively higher number of arguments than true negatives, but still less than those of true positives (TP). This implies that the false negative examples usually had some, but insufficient, evidence for the positive class, compared to the true negatives which almost have nothing.

Unlike the SMS Spam Collection dataset, the base scores of $\delta$ for the Amazon Clothes and the Deceptive Hotel Reviews datasets were 0.2597 and 0.6932 supporting the negative class, respectively (not shown in the tables).
In order to push the prediction to either positive or negative, we needed evidence. Hence, the average number of arguments for these two datasets were similar for both classes (as shown by $|\Args|$ of TP, TN, FP, FN in Tables~\ref{tab5:statsentiment}-\ref{tab5:statdeceptive}). 
Examples predicted as positive, therefore, had higher number of arguments for the positive class ($|\Args^{+,\delta}|$) than those predicted as negative.
Similarly, examples predicted as negative had higher number of arguments for the negative class ($|\Args^{-,\delta}|$) than those predicted as positive.
Overall, the statistics of \QBAF and \QBAFP reported here can help us notice the global behavior of the underlying PLR model (e.g., conditions for it to predict positive or negative). 

\paragraph{Number of relations}
$|\mathcal{R}|$ had the similar trend as the number of arguments. Texts predicted as spams had a significantly higher number of attacks and supports than those predicted as non-spams (see $|\Atts|$ and $|\Supps|$ in Table~\ref{tab5:statspam}).
For the other two datasets, they usually had more supports than attacks, especially after post-processing, to provide sufficient evidence for the predictions. 
In any case, all three datasets had $|\mathcal{R_{\backslash\delta}}|$ from 8 to 12, on average, making the explanations extracted from the \QBAFPs (i.e., the \xlrs) different from the standard explanations for logistic regressions (i.e., the FLXs) due to many relations between features.

\paragraph{Other remarks}
First, the number of arguments $|\Args|$ for the \TQBAF, \TQBAFP, \BQBAF, and \BQBAFP for the same example are always equal. This is expected from Definitions~\ref{def:axplrqbaf} and \ref{def:postprocess}.
Second, $|\mathcal{R}|$ is different for \TQBAFs and \BQBAFs but their $|\mathcal{R_{\backslash\delta}}|$ are the same. This is because \TQBAFs and \BQBAFs have the same relations between two non-default arguments except that the directions are reversed. For the relations with the default argument, \TQBAFs connect the arguments of the most general patterns to the default whereas \BQBAFs connect the most specific patterns to the default. That is why $|\mathcal{R}|$ was different between \TQBAFs and \BQBAFs.   
Lastly, post-processing does not change the number of pairs in relations in the experiments as shown by $|\mathcal{R}|$ of \TQBAFs and \TQBAFPs and $|\mathcal{R}|$ of \BQBAFs and \BQBAFPs. In theory, it could possibly change as the pairs $(a, b)$ with $\sigma(a) = 0$ are removed. However, because all the base scores in \QBAFs are from the weights of the trained LR model, each of which has around 15 decimal points, it is hardly possible to find an argument $a$ with $\sigma(a) = 0$ in practice. So, none of the pairs is removed during post-processing.

\subsection{Sufficiency}
Next, given a \QBAFP, we were interested in the number of supporting arguments needed in order to sufficiently explain the prediction. Here, \textit{sufficiently} means that given the base score of $\delta$ and all the attacking arguments, the strengths given by these supporting arguments are enough to make the strength of $\delta$ greater than 0. In other words, for each test example, we wanted to find the smallest $k$ such that $S \subseteq \Suppsp(\delta)$, $|S| = k$ and
\begin{equation}\label{eq5:sufficiency}
    \tau'(\delta) + \sum_{b \in S}\frac{\sigma(b)}{\nu(b)} - \sum_{b \in \Attsp(\delta)}\frac{\sigma(b)}{\nu(b)} > 0
\end{equation}
Furthermore, we extended our question to other arguments in \QBAFP which had at least one attacker or supporter. (We call them \textit{intermediate arguments}.) We wondered how many supporting arguments were needed to make the strength of the argument greater than 0, taking into account the base score and all the strengths from the attackers.
Knowing the answers to these questions helps us decide how many arguments we should show to the users for explaining the final prediction or the intermediate arguments.

\begin{figure}[t!]
\subfigure[Default $\delta$ (All)]{\includegraphics[width = 2.4in]{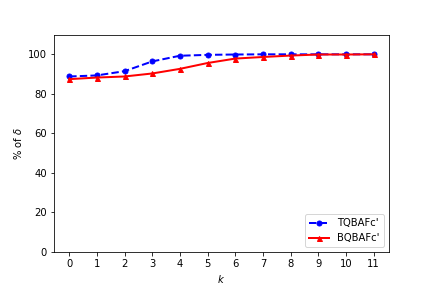}}
\subfigure[Intermediate $\alpha_i$ (All)]{\includegraphics[width = 2.4in]{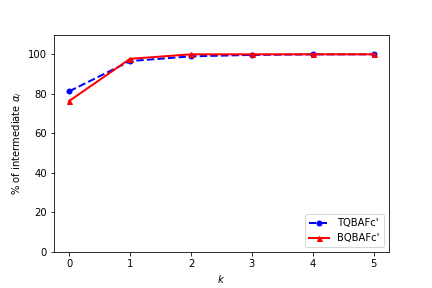}}\\
\subfigure[Default $\delta$ (Class not flipped)]{\includegraphics[width = 2.4in]{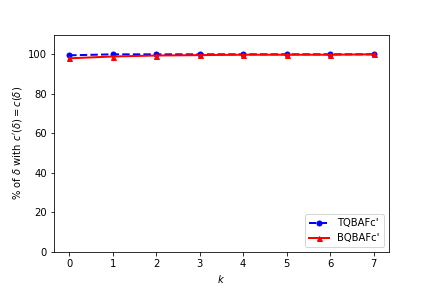}}
\subfigure[Intermediate $\alpha_i$ (Class not flipped)]{\includegraphics[width = 2.4in]{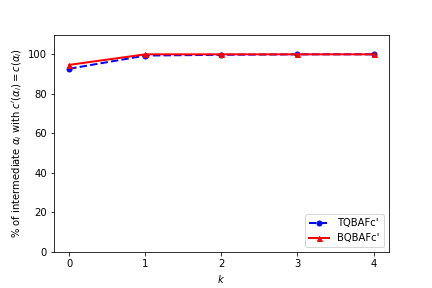}}\\
\subfigure[Default $\delta$ (Class flipped)]{\includegraphics[width = 2.4in]{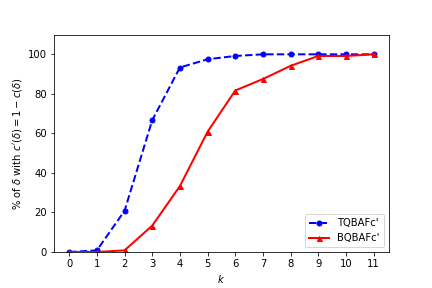}}
\subfigure[Intermediate $\alpha_i$ (Class flipped)]{\includegraphics[width = 2.4in]{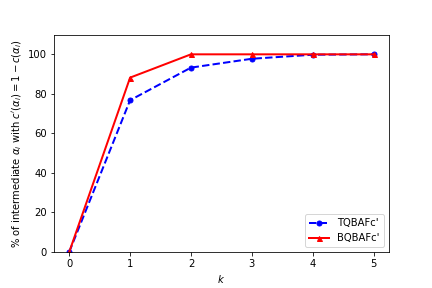}}
\caption{Plots showing the percentage of arguments (the default arguments $\delta$ or intermediate arguments $\alpha_i$) of which the strength can be greater than 0 using only $k$ supporting arguments. These arguments are extracted from test examples of the SMS Spam Collection dataset. \textit{Class flipped} means the supported class changes after post-processing.}
\label{fig5:suffspam}
\end{figure}

\begin{figure}[t!]
\subfigure[Default $\delta$ (All)]{\includegraphics[width = 2.4in]{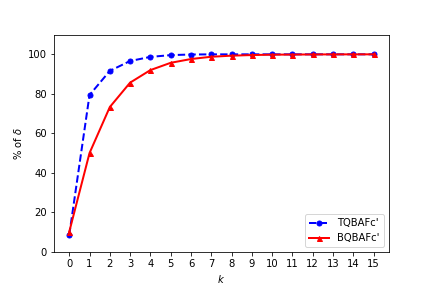}}
\subfigure[Intermediate $\alpha_i$ (All)]{\includegraphics[width = 2.4in]{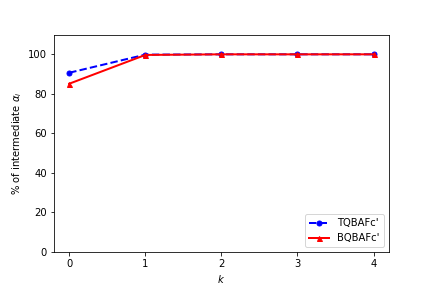}}\\
\subfigure[Default $\delta$ (Class not flipped)]{\includegraphics[width = 2.4in]{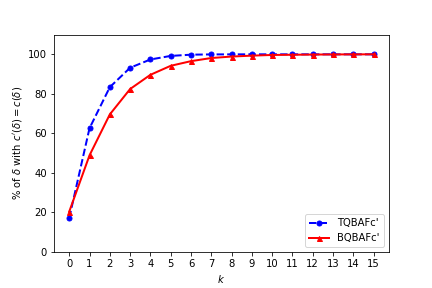}}
\subfigure[Intermediate $\alpha_i$ (Class not flipped)]{\includegraphics[width = 2.4in]{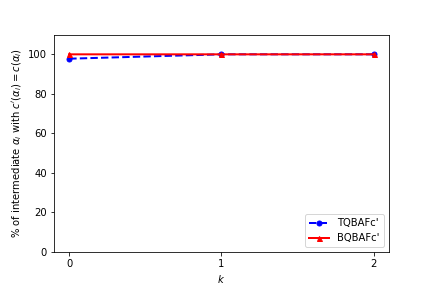}}\\
\subfigure[Default $\delta$ (Class flipped)]{\includegraphics[width = 2.4in]{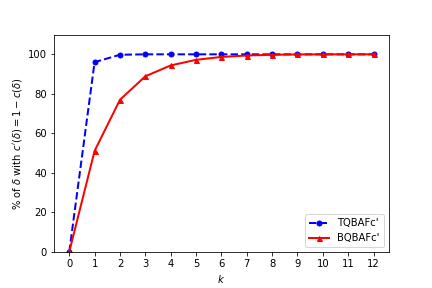}}
\subfigure[Intermediate $\alpha_i$ (Class flipped)]{\includegraphics[width = 2.4in]{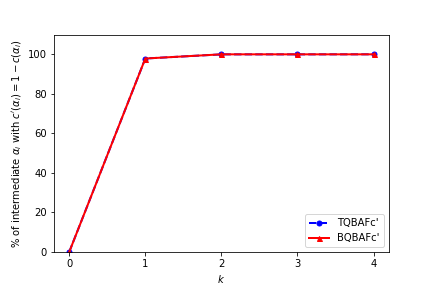}}
\caption{Plots showing the percentage of arguments (the default arguments $\delta$ or intermediate arguments $\alpha_i$) of which the strength can be greater than 0 using only $k$ supporting arguments. These arguments are extracted from test examples of the Amazon Clothes dataset. \textit{Class flipped} means the supported class changes after post-processing.}
\label{fig5:suffsentiment}
\end{figure}

\begin{figure}[t!]
\subfigure[Default $\delta$ (All)]{\includegraphics[width = 2.4in]{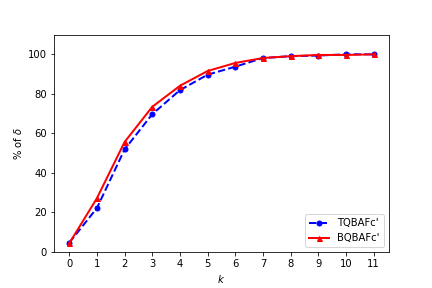}}
\subfigure[Intermediate $\alpha_i$ (All)]{\includegraphics[width = 2.4in]{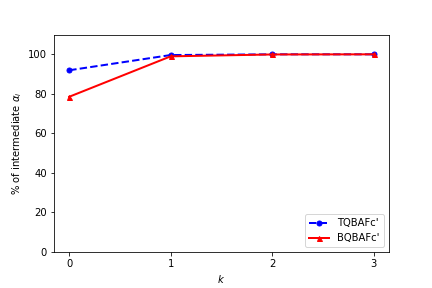}}\\
\subfigure[Default $\delta$ (Class not flipped)]{\includegraphics[width = 2.4in]{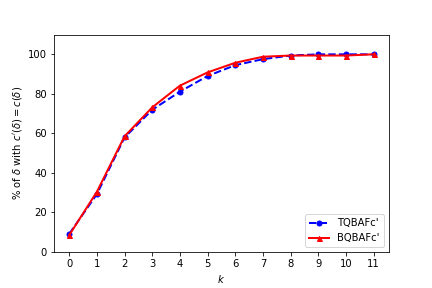}}
\subfigure[Intermediate $\alpha_i$ (Class not flipped)]{\includegraphics[width = 2.4in]{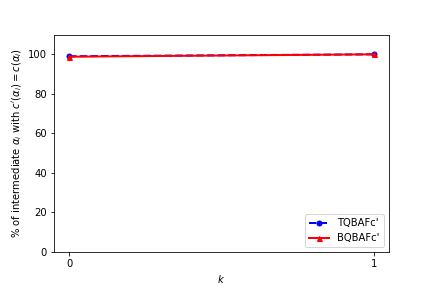}}\\
\subfigure[Default $\delta$ (Class flipped)]{\includegraphics[width = 2.4in]{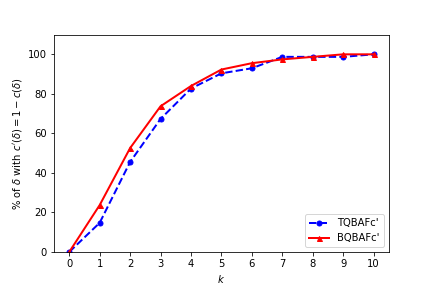}}
\subfigure[Intermediate $\alpha_i$ (Class flipped)]{\includegraphics[width = 2.4in]{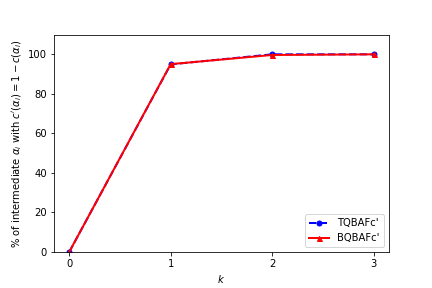}}
\caption{Plots showing the percentage of arguments (the default arguments $\delta$ or intermediate arguments $\alpha_i$) of which the strength can be greater than 0 using only $k$ supporting arguments. These arguments are extracted from test examples of the Deceptive Hotel Reviews dataset. \textit{Class flipped} means the supported class changes after post-processing.}
\label{fig5:suffdeceptive}
\end{figure}

Figures~\ref{fig5:suffspam}-\ref{fig5:suffdeceptive} show the results of this experiment. The x-axis of each plot is the number of supporting arguments used ($k$), whereas the y-axis shows the percentage of arguments (default or intermediate) of which the strength can be greater than 0 by using only $k$ supporting arguments. Considering plots on the left-hand side of the three figures, we can see that the numbers of supporting arguments needed for $\delta$ were different for each dataset.
The SMS Spam Collection dataset seemed to need the least. However, this was the case only for examples predicted as negative, i.e., the supported class was not flipped after post-processing, corresponding to Figure~\ref{fig5:suffspam}(c). The reason was that the base score of $\delta$ was relatively high. It could outnumber the strengths from the attackers even without the strengths from the supporters. 
Nevertheless, this was not true when the supported class flips from negative to positive as it required 4-6 supporting arguments to make 80\% of the test set have sufficient explanations.
Meanwhile, the Amazon Clothes and the Deceptive Hotel Reviews datasets required approximately 1-3 and 3-4 supporting arguments, respectively, for sufficient explanations of 80\% of the test set (regardless of the predicted class). 

Besides, for the SMS Spam Collection and the Amazon Clothes datasets, \BQBAFP required more supporting arguments than \TQBAFP. This was likely because \BQBAFP connected the arguments representing the most specific patterns to the default. For these two datasets, they outnumbered the most general patterns \TQBAFP connected to the default. 
So, the default argument of \BQBAFP had more supporters where the strengths were distributed. Therefore, more supporters were required to make the sufficient explanation.

Considering the plots on the right side of Figures~\ref{fig5:suffspam}-\ref{fig5:suffdeceptive}, only one supporting argument was usually sufficient to explain the supported class of an intermediate argument $\alpha_i$. 
Even without any supporters, only the base score was sufficient in most cases if the supported class is not flipped after post-processing.
Hence, if a user wants to see supporting information for an intermediate argument, when the space is limited, showing only 1-2 supporters are totally acceptable.

\section{Experiment 2: Plausibility} \label{sec:plausibility}

In this section, we aimed to evaluate the plausibility of \xlr, compared to FLX, to confirm our hypothesis that it is essential to consider relations between features (i.e., patterns) when we generate local explanations.
So, we compared the feature scores given by the explanation methods to scores reflecting how humans consider the features.
For instance, if a machine indicates pattern $p_1$ as a main reason for predicting the positive class and humans think that $p_1$ is truly a sign of the positive class, we can say that the machine explanation aligns well with human judgement (i.e., having high plausibility). 
In other words, the higher correlation between machine explanation scores and human scores implies the higher plausibility of the explanation method.
Hence, we chose Pearson's correlation as the metric in this experiment.

\subsection{Datasets} 
We used the SMS Spam Collection (spam filtering) and the Amazon Clothes (sentiment analysis) datasets since humans generally perform well on these two tasks, making the human scores reliable. 
For each dataset, we needed 500 test examples for evaluation. These examples must have the predicted probability of the output class greater than 0.9 to ensure that the bad quality of the explanation was not due to low model accuracy or text ambiguity.
Note that we did not conduct this experiment on the Deceptive Hotel Reviews dataset as lay humans are not adept at identifying deceptive reviews. The human accuracy was only around 55\% in \cite{lai2020chicago}, so we cannot trust human judgement on machine explanations in this task. We would work on the deceptive review detection task in the next experiment instead.

\subsection{Machine Explanations} 
As discussed in Section~\ref{subsec:explanations}, both FLX and \xlr use $(p_{j}, \pi(p_{j}, x), s_j)$ triplets as explanations where $s_j$ is the score of the pattern $p_j$ or the match $\pi(p_{j}, x)$ in the input $x$.
For FLX, $s_j$ equals $w_jf_j$, and any $p_j$ with the relatively large score $s_j$ can be chosen as a part of the explanation.
By contrast, shallow \xlr uses only arguments at the top level of the underlying \QBAF, i.e., arguments attacking or supporting $\delta$, as explanations. 
Meanwhile, deep \xlr can use any arguments in the \QBAF.
The $s_j$ of \xlr also depends on whether the \QBAF is \TQBAF or \BQBAF. So, we compared all of these variations in this experiment. 
Note that, because $\tau'$ and $\sigma$ of \xlr need to be interpreted with the supported class, we adjusted $s_j$ for \xlr to be self-contained. To put it simply, we multiplied $\tau'(\alpha_j)$ and $\sigma(\alpha_j)'$ of \xlr with 1 if $c'(\alpha_j) = 1$, or with -1 if $c'(\alpha_j) = -1$. This made the higher $s_j$ always imply the stronger evidence for the positive class (similar to FLX).

\subsection{Human Scores}
We recruited human participants via Amazon Mechanical Turk (MTurk)\footnote{https://www.mturk.com/} and asked them whether the pattern $p_j$ or the matched phrase $\pi(p_{j}, x)$ was the evidence for the positive or the negative class. 
Since the pattern $p_j$ only may be difficult to understand, we provided the translation to help lay users on MTurk, as shown in Figure~\ref{fig5:mturk_questions} (a).
Another way to present the pattern is to show samples of phrases (from the training set) matched by the pattern. We also collected human answers for this pattern representation showing five unique samples per pattern\footnote{If we have less than five unique matched phrases in the training set, we just show all of them.}, as displayed in Figure~\ref{fig5:mturk_questions} (b). 
Finally, a question for a single matched phrase $\pi(p_{j}, x)$ was a lot simpler as shown in Figure~\ref{fig5:mturk_questions} (c).
We provided five options for each question, ranging from definitely positive, positive, not sure, negative, and definitely negative. These correspond to the score 2, 1, 0, -1, and -2, respectively. For the SMS Spam Collection dataset, these options were instead definitely spam, spam, not sure, non-spam, and definitely non-spam. Each question was answered by five participants, and the scores were averaged before comparing with machine explanation scores.

Concerning the payment for answering questions, we paid the participants \$0.30 per 10 pattern questions, \$0.20 per 10 group-of-phrases questions, and \$0.20 per 20 matched phrase questions.

\begin{figure}[t!]
    \centering
    \subfigure[An example question for a pattern.]{\includegraphics[width=0.8\textwidth]{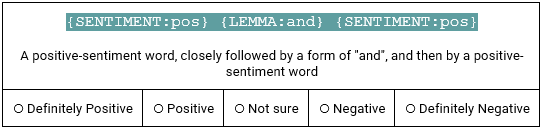}}
    \subfigure[An example question for a group of phrases sampled from the pattern.]{\includegraphics[width=0.8\textwidth]{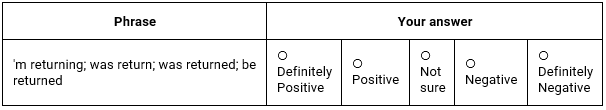}}
    \subfigure[An example question for a matched phrase.]{\includegraphics[width=0.8\textwidth]{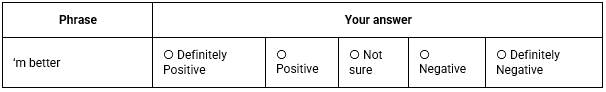}}
    \caption{Examples of questions (from the Amazon Clothes dataset) posted on Amazon Mechanical Turk to elicit human scores.}
    \label{fig5:mturk_questions}
\end{figure}

\subsection{Results}
Tables~\ref{tab5:plausibilityspam} and \ref{tab5:plausibilitysentiment} report the Pearson's correlations between the machine explanation scores and the human scores collected from Amazon Mechanical Turk for both datasets.
The last row of each table shows inter-rater agreement measures (Fleiss' kappa) \cite{fleiss1971measuring}.
We observe that the agreement measures for the SMS Spam Collection dataset were very close to zero (especially for the questions for patterns and samples), while the agreement rates for the Amazon Clothes dataset (sentiment analysis task) were significantly higher. 
This was likely because evidence from the sentiment analysis task (including patterns, samples, matched phrases) usually conveys clear meanings even without contexts, whereas evidence from the spam detection task often requires contexts for humans to make decisions.
For example, \textit{upset}, \textit{worthless}, and \textit{disappointed} were surely for negative reviews.
In contrast, \textit{mobile}, \textit{win}, and \textit{call} could appear both in spam and non-spam texts.
This caused higher disagreements in human answers though the model used these words certainly as evidence for the spam class.
As a result, the human scores for the spam task were less reliable than the scores for the sentiment analysis task.
Consequently, the overall correlations in Table~\ref{tab5:plausibilityspam} were also less than the scores in Table~\ref{tab5:plausibilitysentiment}. 

Hence, we focused on discussing the results in Table~\ref{tab5:plausibilitysentiment} with more reliable human scores. 
For each row, the correlation between the explanations and the human scores for patterns was lower than for samples and matched phrase.
Hence, we should show not only the patterns but also some matched samples of the patterns to generate better plausible explanations.
In addition, for \TQBAFP and \BQBAFP, the strengths of top-level arguments $\sigma(\alpha_j)'$ were better than the base scores $\tau'(\alpha_j)$ in terms of the alignment with human judgement. 
The correlations were also significantly higher than FLX.
This confirmed the advantage of the prominent feature of \xlr, i.e., considering interactions between patterns when generating local explanations.
However, by extending from arguments in the top level to all levels in the \QBAFP, only the correlations in \BQBAFP remained high, while the correlations in \TQBAFP dropped.
Therefore, deep \xlr, utilizing arguments of all levels in the graph, would go along better with \BQBAFP than \TQBAFP. 
It also implied that the base scores of the most specific patterns, which equaled their strengths in \TQBAFP, required some adjustments to align well with human judgement.

\begin{table}[t!]
\caption{Pearson's correlation between explanation scores and human  scores for the SMS Spam Collection dataset.} \label{tab5:plausibilityspam}
\setlength{\tabcolsep}{2.5pt}
\centering
\begin{tabular}{ L{0.3\linewidth} C{0.1\linewidth} C{0.1\linewidth} C{0.1\linewidth}} 
 \hline
 &\multicolumn{3}{c}{\textbf{Human scores}}\\
 \textbf{Explanation scores} & Pattern & Samples & Matched Phrase \\ \hline
 \textit{FLX} & 0.227 & 0.175 & -0.022 \\ \hline
 \textit{\TQBAFP} \\
 \quad $s_j = \tau'(\alpha_j)$ (top level) & \textbf{0.687} & \textbf{0.533} & -0.019\\ 
 \quad $s_j = \sigma(\alpha_j)'$ (top level) & 0.520 & 0.462 & \textbf{0.125}\\ 
 \quad $s_j = \sigma(\alpha_j)'$ (all levels) & 0.176 & 0.100 & 0.005\\ \hline 
 \textit{\BQBAFP} \\
 \quad $s_j = \tau'(\alpha_j)$ (top level) & 0.197 & -0.005 & -0.047\\ 
 \quad $s_j = \sigma(\alpha_j)'$ (top level) & 0.240 & 0.175 & 0.046\\ 
 \quad $s_j = \sigma(\alpha_j)'$ (all levels) & 0.271 & 0.308 & 0.053\\
 \hline
 Fleiss $\kappa$ & 0.001 & 0.068 & 0.118 \\
 \hline
 \end{tabular}
\end{table}

\begin{table}[t!]
\caption{Pearson's correlation between explanation scores and human  scores for the Amazon Clothes dataset.} \label{tab5:plausibilitysentiment}
\setlength{\tabcolsep}{2.5pt}
\centering
\begin{tabular}{ L{0.3\linewidth} C{0.1\linewidth} C{0.1\linewidth} C{0.1\linewidth}} 
 \hline
 &\multicolumn{3}{c}{\textbf{Human scores}}\\
 \textbf{Explanation scores} & Pattern & Samples & Matched Phrase \\ \hline
 \textit{FLX} & 0.503 & 0.525 & 0.529 \\ \hline
 \textit{\TQBAFP} \\
 \quad $s_j = \tau'(\alpha_j)$ (top level) & 0.423 & 0.491 & 0.487\\ 
 \quad $s_j = \sigma(\alpha_j)'$ (top level) & \textbf{0.632} & \textbf{0.693} & \textbf{0.688}\\ 
 \quad $s_j = \sigma(\alpha_j)'$ (all levels) & 0.490 & 0.503 & 0.501\\ \hline 
 \textit{\BQBAFP} \\
 \quad $s_j = \tau'(\alpha_j)$ (top level) & 0.442 & 0.466 & 0.486\\ 
 \quad $s_j = \sigma(\alpha_j)'$ (top level) & 0.599 & 0.621 & 0.634\\ 
 \quad $s_j = \sigma(\alpha_j)'$ (all levels) & 0.610 & 0.627 & 0.627\\
 \hline
 Fleiss $\kappa$ & 0.210 & 0.297 & 0.358 \\
 \hline
 \end{tabular}
\end{table}

\section{Experiment 3: Tutorial and Real-time Assistance} \label{sec:tutorial}
Among the three datasets, the deceptive review detection task is the most difficult tasks for humans. 
In this experiment, we follow the study \cite{lai2020chicago} to evaluate how effective \xlr can be used to teach and support humans to perform deceptive review detection.

\subsection{Setup}
We recruited participants via Amazon Mechanical Turk and redirected them to our a survey created using Qualtrics\footnote{\url{https://imperial.eu.qualtrics.com/}}.
The survey aimed to assess the capability of humans to detect deceptive hotel reviews before and after they learn from explanations. It consisted of five parts.
\begin{enumerate}
    \item Attention-check questions (4 questions) – The participant needed to answer all the questions in this part correctly to proceed.
    \item Pre-test (10 questions) -- For each question, the participant was asked whether a given hotel review was truthful or deceptive.
    \item Tutorial (10 questions) -- The format was the same as part 2, but then, we revealed to the participant the correct answer and the AI-generated prediction and explanation for them to learn from.
    \item Post-test (20 questions) -- For the first ten questions, the questions and the format were the same as part 2. We additionally showed what the participant had answered during the pre-test as a reference. The next ten questions were the same as the first ten except that we also provided AI explanations (without the predictions) for these questions, as \textit{real-time assistance} \cite{lai2020chicago}. The format of the explanations was the same as what s/he had seen during the tutorial phase. The corresponding previous answer (from the first ten questions) was also provided when the participant answered each of the last ten questions.
    \item Additional questions (5 questions) -- The participant was asked general questions before finishing the survey. These include, for example, how they detected deceptive and truthful reviews and any (free-text) feedback they might want to tell us.
\end{enumerate}
At the end of the survey, each participant was given a Reference ID as a proof that s/he had completed the task (i.e., the HIT) for claiming the reward from the MTurk system.
The improved performance of humans after being trained and assisted by the explanations showed how useful the explanations were. 
To motivate the participants to pay attention to the tasks, we divided the payment into two parts.
\begin{itemize}
    \item A guaranteed reward (\$2.00) was given after the participant completed the whole survey.
    \item A bonus reward -- The participant was given an additional bonus reward of \$0.10 for each question answered correctly (both in the pre-test and in the post-test). Therefore, the maximum bonus reward each participant could get was \$0.10 x 30 = \$3.00.
\end{itemize}

\subsection{Explanations}
We compared four explanation methods in this experiment including SVM, FLX, shallow \xlr, and deep \xlr. We selected linear SVM since there is a study \cite{lai2020chicago} showing that tutorials from simple models such as linear SVM worked better than tutorials from deep models such as BERT \cite{devlin-etal-2019-bert}. 
To train the SVM, we used TF-IDF vectorizer and employed exhaustive search to find the best hyperparameter $C\in \{1, 10, 100, 1000\}$.
As a result, the model achieved the accuracy and the macro F1 of 0.891. We generated the explanations for the SVM model by showing the most important 10 words according to the absolute value of SVM coefficients. We also highlighted these words in text with the color and the intensity reflecting the sign and the magnitude of the coefficient, respectively. An example of SVM explanations during the tutorial phase is shown in Figure~\ref{fig5:svm}. 

\begin{figure}[t]
    \centering
    \includegraphics[width=0.8\linewidth]{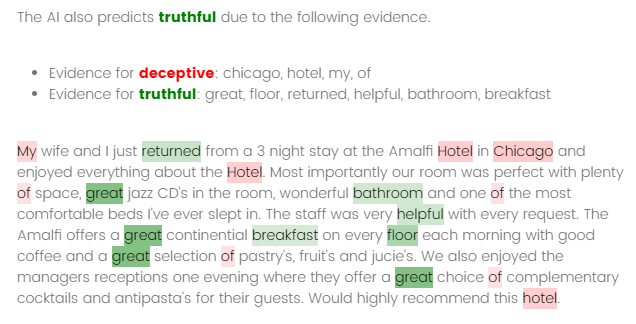}
    \caption{Example of SVM explanation during the tutorial phase}
    \label{fig5:svm}
\end{figure}

FLX, shallow \xlr, and deep \xlr were extracted from the same pattern-based LR model, of which the performance was shown in Table~\ref{tab5:accuracy}. Note that the LR model underperformed the SVM model, with the accuracy of 0.853 and 0.891, respectively. 
We decided to use \BQBAFP for both shallow and deep \xlr due to two reasons. First, the top-level arguments of \BQBAFP provided more contexts than those of \TQBAFP, and n-gram explanations are usually better than word-level explanations thanks to more contexts provided \cite{lertvittayakumjorn-toni-2019-human}.
Second, deep \xlr went along better with \BQBAFP than \TQBAFP as discussed in Section~\ref{sec:plausibility}.
Both FLX and shallow \xlr showed top 10 patterns/arguments and   
share the same presentation, as shown in Figure~\ref{fig5:shallowAXPLR}.
Deep \xlr also started from the top 10 arguments but allowed the users to expand them to see attacking and supporting arguments, as shown in Figure~\ref{fig5:deepAXPLR}. 
Moreover, we provided the input text with highlights similar to SVM explanations to help the users locate where the patterns appear in the input.
The intensity of the highlight represented the sum of the explanation scores of all patterns that the word matched. For \xlr, we summed the scores from only the top-level patterns as the scores from other levels had been aggregated into the top level.

\subsection{Question Selection}
For test questions, we randomly selected 50 questions from the test set of the Deceptive Hotel Reviews dataset. Then we partitioned them into five question sets. One participant was assigned one set of test questions and one explanation method (for tutorial and real-time assistance). Each pair of explanation method and question set was assigned to five people. Overall, we had 4 explanation methods $\times$ 5 question sets $\times$ 5 annotations = 100 surveys in total. So, we recruited exactly 100 participants on MTurk without allowing a participant to do the survey twice. 

For the ten tutorial questions for each explanation method, we selected them from the development set of the Deceptive Hotel Reviews dataset using submodular pick \cite{ribeiro2016should} to ensure that the selected examples covered important features of the task. 
Although submodular pick is a greedy algorithm, it provides a constant-factor approximation guarantee of $1-e^{-1}$ to the optimum \cite{krause2014submodular}.
This made the tutorial questions different for each explanation method except that shallow \xlr and deep \xlr share the same set of tutorial questions.

\subsection{Results}
The average scores of human participants are displayed in Table~\ref{tab5:tutorial}.
Using the pre-test scores as a baseline, we observe that the tutorial phase only did not help the participants perform better as the post-test scores without real-time assistance were not significantly greater than the baseline.
However, the real-time assistance after the tutorial indeed helped. 
By the approximate randomization test with 1,000 iterations and a significance level of 0.05 \cite{noreen1989computer,graham-etal-2014-randomized}, the post-test scores with real-time assistance from the explanations were significantly higher than the pre-test scores and the post-test scores with no assistance of the same explanation methods.
Nevertheless, we see no significant difference across explanation groups, so we can conclude only that FLX, shallow \xlr, and deep \xlr are competitive with SVM for providing explanations to teach and support humans to detect deceptive reviews.

The last column in Table~\ref{tab5:tutorial} shows the average performance of the underlying AI model on the same set of questions. SVM achieved 9 out of 10 in three question sets and 10 out of 10 in the other two, whereas the LR model (underlying FLX and \xlr) got 7, 7, 8, 9, and 10 for the five question sets (regardless of the order).
The total numbers of people that 
% performed as well as the AI or better 
scored better than or equal to the AI
during the pre-test, post-test with no assistance, and post-test with assistance are 6, 8, and 26 out of 100 people, respectively.
This again shows the effectiveness of real-time assistance after the participants learned from the tutorial.

\begin{table}[t!]
\caption{Scores of the human participants (Average $\pm$ SD) in the tutorial and real-time assistance experiment using the Deceptive Hotel Reviews dataset. 
The last column shows the average score of the model that provides real-time assistance. The maximum score is 10.} \label{tab5:tutorial}
\setlength{\tabcolsep}{2.5pt}
\centering
\begin{tabular}{ l c c c c } 
 \hline
 \multirow{2}{*}{Explanation} & \multirow{2}{*}{Pre-test score} & \multicolumn{2}{c}{Post-test score}& \multirow{2}{*}{Model}\\
 &&No assistance & + assistance& \\
 \hline
 SVM & 5.68$\pm$1.60 & 5.12$\pm$1.24 & 6.56$\pm$1.87 & 9.40$\pm$0.50\\
 FLX & 5.64$\pm$1.38 & 5.68$\pm$1.44 & 6.56$\pm$1.73 & 8.20$\pm$1.19\\
 Shallow AXPLR& 5.40$\pm$1.53 & 5.60$\pm$1.35 & 6.56$\pm$1.89 & 8.20$\pm$1.19\\
 Deep AXPLR& 5.24$\pm$1.36 & 5.44$\pm$1.61 & 6.76$\pm$1.79 & 8.20$\pm$1.19\\
 \hline
 \end{tabular}
\end{table}

Finally, we asked the participants in the final part of the survey how they detected deceptive and truthful reviews.
We manually selected interesting answers from the participants who got 8 correct answers or more during the post-test with AI assistance. The answers are shown in Tables~\ref{tab5:deceptivecomments} and \ref{tab5:truthfulcomments}.
As expected, participants learning from SVM explanations rarely mentioned patterns but individual words. Some used the majority of highlighting colors as a heuristic (which was surprisingly effective, probably due to the good performance of SVM). 
Since FLX was extracted from the LR model with \grasp patterns, we noticed some patterns and generalizations noted by participants who learned from FLX such as ``when my was closely followed by 1  and hotel was followed by different words'' and ``the language used, and symbols and punctuation''.
Similarly, we also saw patterns noted by participants who learned from both types of \xlr such as ``It uses pronouns closely together'' and ``The patterns of specific words close together stood out, like luxury hotel.'', as well as implicit patterns such as ``There's also much less usage of city and hotel names''. 
On the other hand, they could also cover word-level cues, as we can see from the comments like ``I would also assume it was deceptive if the reviewer said ``I'' a lot.''.
However, there was no participant in the deep \xlr group mentioning the usefulness of sub-patterns (which could be expanded or collapsed). Also, the average scores of both types of \xlr were not significantly different. It could imply that shallow \xlr is already sufficient for tutorial and real-time assistance, without the need to go deep.
Last but not least, we found two interesting comments from the deep \xlr group. One contrasted deceptive and truthful reviews -- ``If the review said ``location'' as apposed to naming the city, I was more likely to assume it was true, or if it mentioned the elevators or doormen. If it said ``we'' instead of ``I'' I was usually more inclined say it was truthful.''. The other theorized the reason behind prominent patterns -- ``human phrasing that doesn't have hallmarks of being algorithmically generated or designed with the obvious intent to be picked up by a search engine (repeatedly mentioning the word Chicago was one example of this used).''. 

\begin{table}[t!]
\caption{Some answers from the participants on how they knew that a review was \textbf{deceptive}. These answers were manually picked from the participants who got 8 correct answers or more during the post-test with AI assistance. We also show the explanation method they were assigned and the final scores they got.} \label{tab5:deceptivecomments}
\setlength{\tabcolsep}{2.5pt}
\centering
\begin{tabular}{ B{0.1\linewidth} N{0.10\linewidth} B{0.75\linewidth}} 
 \hline
 Explanation & Score & Answer \\
 \hline
 \multirow{3}{*}{SVM} & 10 & If it seems too biased and sounds exaggerated. \\
 & 9 & Certain key words are used repeatedly and unnaturally. \\
 & 8 & If it has more red than green. \\ \hline
 \multirow{3}{*}{FLX} & 9 & Extreme and/or superlative language. \\
 & 8 & when my was closely followed by 1  and hotel was followed by different words \\
 & 8 & because of the words used, and naming the location, etc \\ \hline
 \multirow{3}{*}{\parbox{2cm}{Shallow\\ \xlr}} & 10 & A city was not capitalized or the overuse and closeness of ``my'' and ``I''.\\ 
 & 9 & There were methods to look at the text or the type, as well as sentiment and identify some un natural responses. The patterns of specific words close together stood out, like luxury hotel. \\ 
 & 8 & It uses pronouns closely together, uses proper names for hotels and cities oddly, and so on.\\ \hline
 \multirow{3}{*}{\parbox{2cm}{Deep\\ \xlr}} & 9 & The review mentioned the city by name a few times and was accompanied by odd sounding and separated facts.\\
 & 8 & If the review kept mentioning the name of the city or referring to things as being luxurious or smelly, then I would generally assume that the review was deceptive. I would also assume it was deceptive if the reviewer said ``I'' a lot.\\
 & 8 & It uses certain turns of phrase that are highly improbable or likely to come from a genuine human. Syntax issues can also be indicative of a deceptive review. \\
 \hline
 \end{tabular}
\end{table}

\begin{table}[t!]
\caption{Some answers from the participants on how they knew that a review was \textbf{truthful}. These answers were manually picked from the participants who got 8 correct answers or more during the post-test with AI assistance. We also show the explanation method they were assigned and the final scores they got.} \label{tab5:truthfulcomments}
\setlength{\tabcolsep}{2.5pt}
\centering
\begin{tabular}{ B{0.1\linewidth} N{0.10\linewidth} B{0.75\linewidth}} 
 \hline
 Explanation & Score & Answer \\
 \hline
 \multirow{3}{*}{SVM} & 10 & It sounds truthful and may sometimes talk about both the good and bad of the experience. \\
 & 9 & Words are not frequently repeated and they are used in a natural manner.\\
 & 8 & If it has more green than red \\ \hline
 \multirow{3}{*}{FLX} & 9 & Down to earth. Pros and cons are expressed in a balanced, not hyperbolic way. \\
 & 8 & when the text wasnt too long and sounded realistic \\
 & 8 & the language used, and symbols and punctuation \\ \hline
 \multirow{3}{*}{\parbox{2cm}{Shallow\\ \xlr}} & 10 & The use of brackets or parentheses.\\ 
 & 9 & The way the sentence was structured was far different then the other ones. The deceptive ones tried to appear truthful but the other ones just came off as natural.\\ 
 & 8 & It describes the layout and number of things in a more detailed fashion. There's less of a focus on repetitious usage of pronouns. There's also much less usage of city and hotel names. \\ \hline
 \multirow{3}{*}{\parbox{2cm}{Deep\\ \xlr}} & 9 & The review spoke on a personal level and did mention city names many times.\\
 & 8 & If the review said ``location'' as apposed to naming the city, I was more likely to assume it was true, or if it mentioned the elevators or doormen. If it said ``we'' instead of ``I'' I was usually more inclined say it was truthful. I also just payed attention to the overall vibe of the review.\\
 & 8 & Review features ordinary, human phrasing that doesn't have hallmarks of being algorithmically generated or designed with the obvious intent to be picked up by a search engine (repeatedly mentioning the word Chicago was one example of this used). \\
 \hline
 \end{tabular}
\end{table}

\subsection{Discussion}

We may conclude from the results of Experiment 3 that \xlr is competitive with SVM and FLX in terms of assisting humans in detecting deceptive reviews.
Also, according to the qualitative analysis, \xlr helps humans capture non-obvious patterns which are helpful to perform the task to some degree.
Still, there is a gap between human performance and model performance as we can notice in the last two columns of Table~\ref{tab5:tutorial}.
To narrow down this gap further, there are some interesting directions that could be explored.
First, how could we make the tutorial part more effective? 
We hypothesize that submodular pick might not be the best method to select tutorial questions.
In fact, \cite{lai2020chicago} has tried the \textit{spaced repetition} strategy where humans are presented with important features repeatedly (with some space in-between). However, it cannot be concluded from their experiment that spaced repetition is significantly better than submodular pick when it comes to selecting tutorial examples.
It would be interesting to study whether there is a better method to select and arrange tutorial questions for supporting human learning.

Additionally, in our experiment, \xlr transformed \QBAFP into a local input-based explanation, identifying important parts in the input together with the associated patterns.
However, there are other forms of explanations which could be extracted from \QBAFP and might be more suitable for this task. 
One is counterfactual explanation, showing which arguments should be added or removed from the current \QBAFP in order to change the model prediction.
This may help humans better learn relative importance of the patterns. 
It is likely possible to extract counterfactual explanation from our QBAF', in line with a recent work by 
\cite{albini2021interpreting} extracting counterfactual explanations from argumentation frameworks for PageRank \cite{page1999pagerank}.
Besides, if needed, we could generate synthetic example(s) and/or \QBAFPs to teach humans cases which are interesting but do not exist in the training data. 
For example, an input $x_1$ has a group of patterns strongly supporting the positive class, while an input $x_2$ has another group of patterns strongly supporting the negative class. What would happen if the two groups of patterns appeared in the same input?
The answer to this question could aid humans in prioritizing knowledge learned from individual real examples.
Combining groups of patterns is easier to do with \xlr, but not FLX since FLX does not group related patterns together.
Thus, overall, %We can see that, 
although \xlr did not outperform existing methods significantly in this experiment, the experiment is a first step towards %there are 
several possible extensions of \xlr that may be worth exploring to better support human learning of new tasks.

\section{General Considerations on \xlr} \label{sec:discussions}
According to Experiment 2, \xlr renders highly plausible explanations compared to FLX, the traditional explanation method of LR.
One possible reason for \xlr not shining in Experiment 3 is that plausibility is not necessary for the tutorial and real-time assistance task.
The task only requires humans to learn and apply useful patterns though they may not know the reasons why such patterns are for the genuine class or the deceptive class. 
On the other hand,  \xlr would be more suitable for the task where plausibility is needed.
For example, if we use the classifier as a decision support tool, we want the explanation to provide insights about the input text that align well with human reasoning.
Even though the prediction is correct, if the explanation does not make sense, it is possible that the humans distrust the model and make a wrong final decision, leading to undesirable consequences.

Another context where \xlr could be useful is explanation-based human debugging of the model \cite{lertvittayakumjorn-toni-2021-explanation}.
The individual model weight $w_i$ for the pattern feature $p_i$ may not make sense to humans when $p_i$ is in fact related to other pattern features (as we can see in Experiment 2, where $\tau(\alpha_i)$ does not quite correlate with human reasoning).
This may cause misunderstanding in the humans and lead to their feedback being harmful to the overall model performance.
The \QBAFs of \xlr would provide a more accurate view of how the pattern features have been used by the model.
So, we believe that it is more likely leading to a successful model debugging than FLX.
Moreover, with the argumentative structure of \xlr, it would be interesting to see whether and how \xlr could let humans argue with the model, contributing to a richer way of human-AI collaboration for reversing an undesirable output or improving the model.

\section{Related Work}
\label{sec:relatedwork}

\paragraph{Local Explanations for Text Classification}
Text classification is a fundamental task in natural language processing, so there exist many explanation methods which are applicable to this task.
Focusing on local explanation methods (aiming to explain specific predictions), we can see several forms of explanations in literature such as extracted rationales \cite{lei-etal-2016-rationalizing,jain-etal-2020-learning}, attribution scores \cite{shrikumar2017gradient,arras-etal-2016-explaining}, rules \cite{stumpf2009interacting,ribeiro2018anchors}, influential training examples \cite{koh2017understanding,khanna2019interpreting}, and counterfactual examples \cite{yang-etal-2020-generating,ross-etal-2021-explaining}. 
Since \xlr forms an explanation using triplets of a pattern $p_{i'}$, a matched phrase $\pi(p_{i'}, x)$, and a score $s_{i'}$, it could be considered a mixture of rationales, attribution scores, and rules. 
This is another novelty aspect of \xlr as we rarely find XAI work that combines multiple forms of explanations together.
However, what makes this possible are the transparency of logistic regression (LR) and the interpretability of \grasp patterns.
So, we classify \xlr as a \textit{model-specific} explanation method, unlike LIME \cite{ribeiro2016should} and SHAP \cite{lundberg2017unified} which are \textit{model-agnostic} methods being applicable to any model architectures.
Nonetheless, the issue of dependency between features is also found in other architectures besides LR, such as convolutional neural network (learned) features in \cite{lertvittayakumjorn-etal-2020-find}. 
Therefore, it would be interesting to study how to extend \xlr to other architectures.

\paragraph{Computational Argumentation for Explainable AI}
As discussed in the introduction, computational argumentation has been used to support some XAI methods and construct argumentative explanations in the literature.
According to \cite{vcyras2021argumentative}, existing works in this area can be divided into two groups. 
The first group (i.e., \textit{intrinsic}) draws explanations from models that are natively using argumentative techniques such as AA-CBR \cite{vcyras2019explanations} and DeLP \cite{rodriguez2017educational}.
The second group (i.e., \textit{post-hoc}) extracts argumentative explanations from non-argumentative models such as neural networks \cite{dejl2021argflow} and Bayesian networks \cite{timmer2017two}.
Following \cite{vcyras2021argumentative}, some post-hoc approaches create a \textit{complete} mapping between the target model and the argumentation framework from which explanations are derived such as \cite{schulz2016justifying,albini2021interpreting}, while other post-hoc methods create an \textit{incomplete} mapping between the model and the argumentation framework (so called \textit{approximate} approaches) such as \cite{dejl2021argflow,timmer2017two}.
However, \xlr is a post-hoc approach (due to the non-argumentative PLR model) that does not fit nicely into this complete-approximate dichotomy.
On one hand, \xlr constructs a complete mapping between the PLR model and the \QBAF since every activated feature in the model (as well as the bias term) has a corresponding argument in the \QBAF.
On the other hand, the logistic regression semantics $\sigma$ of \xlr approximates the dialectical strength of every argument given that this strength does not actually exist in the PLR model.
The approximation of $\sigma$ is under an assumption that the strength of an argument is distributed equally and accordingly to every argument it attacks or supports, as represented by the fragments in Equation~\ref{eq5:logregstr}.
So, we could say that \xlr is a complete but intentionally approximate post-hoc approach so as to yield plausible explanations.

\paragraph{Argument Mining} 
Our work stays at the intersection of explainable AI, natural language processing, and computational argumentation.
Another research area that is similar to ours is \textit{argument mining}, which involves natural language processing and computational argumentation.
Argument mining is the process of automatically detecting and modeling the structure of inference and reasoning given in natural language texts \cite{lawrence2020argument}.
However, our work is not considered an argument mining work because the arguments in our \QBAF are arguing about the predicted output of a text classifier (PLR), whereas arguments in general argument mining works are arguing about a specific claim or conclusion in text.
Therefore, input texts in argument mining works must possess the argumentative spirit inside, while input texts for \xlr do not need to be argumentative but the classifier instead turns parts of them to be arguments for making classifications.

\section{Conclusion} \label{sec:conclusion}
To generate local explanations for pattern-based logistic regression models, we proposed \xlr, an explanation method enabled by quantitative bipolar argumentation frameworks we defined (\TQBAF and \BQBAF), capturing interactions among the patterns.
We proved that the extracted and post-processed frameworks underpinning \xlr are faithful to the LR model and satisfy many desirable properties.
After that, we proposed two presentations of \xlr, shallow and deep, specifying whether we present only the top-level arguments or all the arguments in the explanations.
We also conducted a number of experiments with \xlr, amounting to empirical as well as  human studies. The former discussed the statistics of the underlying argumentation frameworks for all input texts in the test sets
and analyzed sufficiency of the explanations in terms of the number of supporting arguments needed.
The latter  assessed whether \xlr is more plausible and helpful for human learning than traditional explanation methods for pattern-based LR models.
The results show that taking into account relations between arguments as \xlr does indeed helps the explanations align better with human judgement, particularly in the sentiment analysis task. 
Though \xlr performed competitively with traditional explanation methods in tutoring and supporting humans to detect deceptive hotel reviews, there were many participants learning from \xlr that could recall well-generalized patterns and important but implicit patterns deemed useful for the task.

\begin{acks}
We would like to thank Alessandra Russo and Simone Stumpf for their helpful comments.
Piyawat Lertvittayakumjorn wishes to thank the support from Anandamahidol Foundation, Thailand.
Francesca Toni was funded in part by the European Research Council (ERC) under the European Union’s Horizon 2020 research and innovation programme (grant agreement No. 101020934) and in part by J.P. Morgan and by the Royal Academy of Engineering under the Research Chairs and Senior Research Fellowships scheme.  Any views or opinions expressed herein are solely those of the authors listed, and may differ from the views and opinions expressed by J.P. Morgan or its affiliates. This material is not a product of the Research Department of J.P. Morgan Securities LLC. This material should not be construed as an individual recommendation for any particular client and is not intended as a recommendation of particular securities, financial instruments or strategies for a particular client.  This material does not constitute a solicitation or offer in any jurisdiction.
\end{acks}

\begin{appendix}
\section{Proofs}

\subsection{Proof of Lemma~\ref{lem:succ}} \label{proof:lem:succ}
\noindent\textbf{Lemma~\ref{lem:succ}.} 
The relation $\succ$ is not reflexive and not symmetric, but it is transitive. 
\begin{proof}
Let us consider each of the properties.
\begin{itemize}
    \item \textit{Not reflexive}: Proof by contradiction. Assume that $\succ$ is reflexive. Thus, $p \succ p$. According to Definition~\ref{def:specific}, it implies that $p \succeq p$ and $p \nsucceq p$, resulting in contradiction. Hence, $\succ$ is not reflexive. 
    \item \textit{Not symmetric}: Assume $p_1 \succ p_2$. By Definition~\ref{def:specific}, we obtain that $p_1 \succeq p_2$ and $p_2 \nsucceq p_1$. So, $p_2 \nsucc p_1$, implying that $\succ$ is not symmetric.
    \item \textit{Transitive}: Before proving that $\succ$ is transitive, we will prove that $\succeq$ is transitive first. Assume that $p_1 \succeq p_2$ and $p_2 \succeq p_3$. Because $p_1 \succeq p_2$, by Definition~\ref{def:specific}, every text $t$ matched by $p_1$ is also matched by $p_2$. Similarly, because $p_2 \succeq p_3$, every text $t$ matched by $p_2$, including those matched by $p_1$, is also matched by $p_3$. Therefore, $p_1 \succeq p_3$, i.e., $\succeq$ is transitive.
    
    Next, we will prove that $\succ$ is transitive using proof by contradiction. Assume that $p_1 \succ p_2$, $p_2 \succ p_3$, but $p_1 \nsucc p_3$. Because $p_1 \succ p_2$, by Definition~\ref{def:specific}, we get that $p_1 \succeq p_2$ and $p_2 \nsucceq p_1$. Similarly, because $p_2 \succ p_3$, we get that $p_2 \succeq p_3$ and $p_3 \nsucceq p_2$. Due to the transitivity of $\succeq$, we know that $p_1 \succeq p_3$. Now, let us consider $p_1 \nsucc p_3$. It is true iff $p_1 \nsucceq p_3$ or $p_3 \succeq p_1$. Still, we know that $p_1 \nsucceq p_3$ cannot be true, so it must be the case that $p_3 \succeq p_1$. 
    Due to the transitivity of $\succeq$, $p_3 \succeq p_1$ and $p_1 \succeq p_2$ imply that $p_3 \succeq p_2$. However, this contradicts with the result of $p_2 \succ p_3$. With this contradiction, $p_1 \nsucc p_3$ cannot be true. In other words, $p_1 \succ p_3$, implying that $\succ$ is transitive.
\end{itemize}
Hence, $\succ$ is not reflexive, not symmetric, but transitive.
\end{proof}

\subsection{Proof of Theorem~\ref{theo:dag}} \label{proof:theo:dag}
\noindent\textbf{Theorem~\ref{theo:dag}.} 
The graph structure of any \QBAF is a directed acyclic graph (DAG).
\begin{proof}
Let us consider the \TQBAF first. From the graph theory perspective, our graph $\langle \Args, \Atts_T, \Supps_T \rangle$ is equivalent to $G = \langle V, E \rangle$ where $V = \Args$ is a set of vertices and $E = \Atts_T \cup \Supps_T$ is a set of edges. According to Definition~\ref{def:axplrqbaf}, we can write $E$ explicitly as
\begin{align*}
    E =& \Atts_T \cup \Supps_T \\
    E =& \{(\alpha_i, \delta)| c(\alpha_i) \neq c(\delta) \wedge \nexists j[\alpha_j \in \Args \wedge p_i \succ p_j]\} \cup \\
        & \{(\alpha_i, \alpha_j)| c(\alpha_i) \neq c(\alpha_j) \wedge p_i \succ p_j \wedge \nexists k[\alpha_k \in \Args \wedge p_i \succ p_k \succ p_j]\} \cup \\
        & \{(\alpha_i, \delta)| c(\alpha_i) = c(\delta) \wedge \nexists j[\alpha_j \in \Args \wedge p_i \succ p_j]\} \cup \\
        & \{(\alpha_i, \alpha_j)| c(\alpha_i) = c(\alpha_j) \wedge p_i \succ p_j \wedge \nexists k[\alpha_k \in \Args \wedge p_i \succ p_k \succ p_j]\} \\
    E =& \{(\alpha_i, \delta)| \nexists j[\alpha_j \in \Args \wedge p_i \succ p_j]\} \cup \\
        & \{(\alpha_i, \alpha_j)| p_i \succ p_j \wedge \nexists k[\alpha_k \in \Args \wedge p_i \succ p_k \succ p_j]\}
\end{align*}
We will prove the result by contradiction. Assume that the graph $G$ is not a DAG. Hence, there must be a non-trivial path which forms a cycle in $G$. Assume that the path is $\alpha_{i_0}, \alpha_{i_1}, \ldots, \alpha_{i_k}, \alpha_{i_0}$ with $k\geq 1$. In this path, there must not be $\delta$ since the out-degree of $\delta$ is 0 (from Lemma~\ref{lem:deltaroot}). So, every edge in this path must be in the second set of the union above. Hence, we obtain that $p_{i_0} \succ p_{i_1}, p_{i_1} \succ p_{i_2}, \ldots, p_{i_k} \succ p_{i_0}$. Because $\succ$ is transitive (from Lemma~\ref{lem:succ}), $p_{i_0} \succ p_{i_0}$, but this is impossible since $\succ$ is not reflexive (also from Lemma~\ref{lem:succ}). Here is the contradiction. Thus, the graph structure of the \TQBAF is a DAG.

The proof for \BQBAF is similar to the proof for \TQBAF. We will obtain that 
\begin{align*}
    E =& \Atts_B \cup \Supps_B \\
    E =& \{(\alpha_i, \delta)| \nexists j[\alpha_j \in \Args \wedge p_j \succ p_i]\} \cup \\
        & \{(\alpha_j, \alpha_i)| p_i \succ p_j \wedge \nexists k[\alpha_k \in \Args \wedge p_i \succ p_k \succ p_j]\}
\end{align*}
Assume the directed cyclic path in $G$ is $\alpha_{i_0}, \alpha_{i_1}, \ldots, \alpha_{i_k}, \alpha_{i_0}$ with $k\geq 1$. Hence, $p_{i_0} \succ p_{i_k}$ (from the last edge in the path), $p_{i_k} \succ p_{i_{k-1}}$ (from the second last edge), $\ldots, p_{i_1} \succ p_{i_0}$ (from the first edge). Because $\succ$ is transitive, $p_{i_0} \succ p_{i_0}$, but this is impossible since $\succ$ is not reflexive. Here is the contradiction. Thus, the graph structure of any \BQBAF is also a DAG.
\end{proof}

\subsection{Proof of Theorem~\ref{theo:predict}} \label{proof:theo:predict}
\noindent\textbf{Theorem~\ref{theo:predict}.}
For a given \QBAF, the prediction of the underlying LR model can be inferred from the strength of the default argument:
\begin{enumerate}
    \item The predicted probability for the class $c(\delta)$ equals $sigmoid(\sigma(\delta))$.
    \item Hence, if $\sigma(\delta) > 0$, the LR model predicts class $c(\delta)$. Otherwise, it predicts the opposite class (i.e., $1-c(\delta)$).
\end{enumerate}
\begin{proof}
First, we will prove that the predicted probability for the class $c(\delta)$ equals $sigmoid(\sigma(\delta))$. 
In other words, we need to prove that, for $c(\delta) = 1$, $sigmoid(\sigma(\delta)) = sigmoid(\sum\limits_{i=1}^dw_if_i + b)$, i.e., $\sigma(\delta) = \sum\limits_{i=1}^dw_if_i + b$. 
Also, we need to prove that, for $c(\delta) = 0$, $sigmoid(\sigma(\delta)) = 1 - sigmoid(\sum\limits_{i=1}^dw_if_i + b) = sigmoid(-\sum\limits_{i=1}^dw_if_i - b)$, i.e., $\sigma(\delta) = -\sum\limits_{i=1}^dw_if_i - b$.\footnote{$1-sigmoid(x)=sigmoid(-x)$.}   

\textit{Case 1: $c(\delta) = 1$ -- The  class supported by  $\delta$ is Positive.}

Each argument $\alpha_i \in \Args - \{\delta\}$  supports class $c(\alpha_i) \in \{0, 1\}$. We partition $\Args - \{\delta\}$ into two sets -- one with Positive (1) as the  supported class and the other with Negative (0) as the supported class. We use $\Args^+$ and $\Args^-$ to represent the two sets, respectively.
% , where $r, s \geq 0$ and $r+s=|\Args - \{\delta\}|$.

Applying Definition~\ref{def:logregstr} to $\delta$, we obtain
\begin{align*}
    \sigma(\delta) &= \tau(\delta) + 
    \sum_{g \in \Supps(\delta)}\frac{\sigma(g)}{\nu(g)} - 
    \sum_{g \in \Atts(\delta)}\frac{\sigma(g)}{\nu(g)}
\end{align*}
Because $c(\delta) = 1$, we know from Definition~\ref{def:axplrqbaf} that the bias term of the underlying LR model is $b \geq 0$. Hence, $\tau(\delta) = |b| = b$.
Furthermore, $\Supps(\delta) \subseteq \Args^+$ and $\Atts(\delta) \subseteq \Args^-$. So, we obtain that 
\begin{align} \label{eq5:b}
    b &= \sigma(\delta) - 
    \sum_{g \in \Supps(\delta)}\frac{\sigma(g)}{\nu(g)} +
    \sum_{g \in \Atts(\delta)}\frac{\sigma(g)}{\nu(g)}
\end{align}
Next, for $\alpha_i$ in $\Args^+$, we know that $c(\alpha_i) = 1$, so the corresponding weight in the LR model is $w_i \geq 0$. Hence, $\tau(\alpha_i) = |w_i| = w_i$. Again, $\Supps(\alpha_i) \subseteq \Args^+$ and $\Atts(\alpha_i) \subseteq \Args^-$. By Definition~\ref{def:logregstr},
\begin{align} \label{eq5:wi}
    w_i = \tau(\alpha_i) &= \sigma(\alpha_i) - 
    \sum_{g \in \Supps(\alpha_i)}\frac{\sigma(g)}{\nu(g)} +
    \sum_{g \in \Atts(\alpha_i)}\frac{\sigma(g)}{\nu(g)}
\end{align}
For $\alpha_j \in \Args^-$, in contrast, we know that $c(\alpha_j) = 0$, so the corresponding weight in the LR model is $w_j < 0$. Hence, $\tau(\alpha_j) = |w_j| = -w_j$. By Definition~\ref{def:axplrqbaf}, all the supporters must support the same class whereas all the attackers must support the opposite  class. So, $\Supps(\alpha_j) \subseteq \Args^-$ and $\Atts(\alpha_j) \subseteq \Args^+$. By Definition~\ref{def:logregstr},
\begin{align} \label{eq5:wj}
    w_j = -\tau(\alpha_j) &= -\sigma(\alpha_j) + 
    \sum_{g \in \Supps(\alpha_j)}\frac{\sigma(g)}{\nu(g)} -
    \sum_{g \in \Atts(\alpha_j)}\frac{\sigma(g)}{\nu(g)}
\end{align}
By Definition~\ref{def:axplrqbaf}, $\Args - \{\delta\} = \Args^+ \cup \Args^- = \{\alpha_i|f_i=1\}$, so $\sum\limits_{i=1}^dw_if_i + b = \sum\limits_{\alpha_i \in \Args^+}w_i + \sum\limits_{\alpha_j \in \Args^-}w_j + b$. 
By summing up Equations~\ref{eq5:b}, \ref{eq5:wi} (for all $\alpha_i \in \Args^+$), and \ref{eq5:wj} (for all $\alpha_i \in \Args^-$), we obtain that
\begin{align*}
    \sum\limits_{i=1}^dw_if_i + b &= \sigma(\delta) + \sum\limits_{\alpha_i \in \Args^+}\sigma(\alpha_i) - \sum\limits_{\alpha_j \in \Args^-}\sigma(\alpha_j)\\
    &\quad - 
    \sum_{g \in \Supps(\delta)}\frac{\sigma(g)}{\nu(g)} +
    \sum_{g \in \Atts(\delta)}\frac{\sigma(g)}{\nu(g)} \\
    &\quad + \sum\limits_{\alpha_i \in \Args^+} \left(- 
    \sum_{g \in \Supps(\alpha_i)}\frac{\sigma(g)}{\nu(g)} +
    \sum_{g \in \Atts(\alpha_i)}\frac{\sigma(g)}{\nu(g)}\right)\\
    &\quad +\sum\limits_{\alpha_j \in \Args^-} \left(\sum_{g \in \Supps(\alpha_j)}\frac{\sigma(g)}{\nu(g)} -
    \sum_{g \in \Atts(\alpha_j)}\frac{\sigma(g)}{\nu(g)}\right) \\
    \sum\limits_{i=1}^dw_if_i + b &= \sigma(\delta) + \sum\limits_{\alpha_i \in \Args^+}\sigma(\alpha_i) - \sum\limits_{\alpha_j \in \Args^-}\sigma(\alpha_j)\\
    &\quad - {\color{blue} \left(\sum_{g \in \Supps(\delta)}\frac{\sigma(g)}{\nu(g)} + \sum\limits_{\alpha_i \in \Args^+}\sum_{g \in \Supps(\alpha_i)}\frac{\sigma(g)}{\nu(g)}+\sum\limits_{\alpha_j \in \Args^-}\sum_{g \in \Atts(\alpha_j)}\frac{\sigma(g)}{\nu(g)}\right)}\\ 
    &\quad + {\color{magenta} \left(\sum_{g \in \Atts(\delta)}\frac{\sigma(g)}{\nu(g)}+ \sum\limits_{\alpha_i \in \Args^+}\sum_{g \in \Atts(\alpha_i)}\frac{\sigma(g)}{\nu(g)}+\sum\limits_{\alpha_j \in \Args^-}\sum_{g \in \Supps(\alpha_j)}\frac{\sigma(g)}{\nu(g)}\right)}
\end{align*}
Next, we will show that $\sum\limits_{\alpha_i \in \Args^+}\sigma(\alpha_i)$ and the {\color{blue}blue} part above are equal. As $\alpha_i \in \Args^+$, it can appear as $g$ only in the blue part. In other words, $\alpha_i$ can either support another argument in $\Args^+$ or $\delta$ or attack another argument in $\Args^-$. If $\alpha_i$ supports or attacks $\nu(\alpha_i)$ arguments in total (where $\nu(\alpha_i)$ is the out-degree of $\alpha_i$), we will find exactly $\nu(\alpha_i)$ terms of $\frac{\sigma(\alpha_i)}{\nu(\alpha_i)}$ in the blue part, and they sum up to $\sigma(\alpha_i)$.    
Hence, for every $\sigma(\alpha_i)$ in $\sum\limits_{\alpha_i \in \Args^+}\sigma(\alpha_i)$, we can find the equivalent amount in the blue part. 
Meanwhile, $g$ in the blue part must come from $\Args^+$ only (not $\delta$ or $\Args^-$ according to Definition~\ref{def:axplrqbaf}).
Thereby, $\sum\limits_{\alpha_i \in \Args^+}\sigma(\alpha_i)$ and the blue part are equal and cancelling each other.

Similarly, $\alpha_j \in \Args^-$ can either support another argument in $\Args^-$ or attack another argument in $\Args^+$ or $\delta$. With the same logic as for the blue part, we obtain that $\sum\limits_{\alpha_j \in \Args^-}\sigma(\alpha_j)$ and the {\color{magenta}magenta} part are equal and cancelling each other.

Finally, we obtain $\sum\limits_{i=1}^dw_if_i + b = \sigma(\delta)$ as required.

\textit{Case 2: $c(\delta) = 0$ -- The  class supported by $\delta$ is Negative.}

The proof of this case is similar to the previous case, so we will highlight only the differences here. First, because $c(\delta) = 0$, the bias term of the LR model is $b < 0$. Hence, $\tau(\delta) = |b| = -b$. Equation~\ref{eq5:b} then becomes
\begin{align} \label{eq5:b-}
    b &= - \sigma(\delta) + 
    \sum_{g \in \Supps(\delta)}\frac{\sigma(g)}{\nu(g)} -
    \sum_{g \in \Atts(\delta)}\frac{\sigma(g)}{\nu(g)}
\end{align}
However, Equations~\ref{eq5:wi} and \ref{eq5:wj} remain the same.
By summing up Equations~\ref{eq5:b-}, \ref{eq5:wi} (for all $\alpha_i \in \Args^+$), and \ref{eq5:wj} (for all $\alpha_i \in \Args^-$), we obtain that
\begin{align*}
    \sum\limits_{i=1}^dw_if_i + b &= -\sigma(\delta) + \sum\limits_{\alpha_i \in \Args^+}\sigma(\alpha_i) - \sum\limits_{\alpha_j \in \Args^-}\sigma(\alpha_j)\\
    &\quad - {\color{blue} \left(\sum_{g \in \Atts(\delta)}\frac{\sigma(g)}{\nu(g)} + \sum\limits_{\alpha_i \in \Args^+}\sum_{g \in \Supps(\alpha_i)}\frac{\sigma(g)}{\nu(g)}+\sum\limits_{\alpha_j \in \Args^-}\sum_{g \in \Atts(\alpha_j)}\frac{\sigma(g)}{\nu(g)}\right)}\\ 
    &\quad + {\color{magenta} \left(\sum_{g \in \Supps(\delta)}\frac{\sigma(g)}{\nu(g)}+ \sum\limits_{\alpha_i \in \Args^+}\sum_{g \in \Atts(\alpha_i)}\frac{\sigma(g)}{\nu(g)}+\sum\limits_{\alpha_j \in \Args^-}\sum_{g \in \Supps(\alpha_j)}\frac{\sigma(g)}{\nu(g)}\right)}
\end{align*}
Because $\alpha_i \in \Args^+$ can either support another argument in $\Args^+$ or attack another argument in $\Args^-$ or $\delta$, with the same logic as in the previous case, we obtain that $\sum\limits_{\alpha_i \in \Args^+}\sigma(\alpha_i)$ and the {\color{blue}blue} part are equal and cancelling each other.
Similarly, $\sum\limits_{\alpha_j \in \Args^-}\sigma(\alpha_j)$ and the {\color{magenta}magenta} part are equal and cancelling each other. What remains is $\sum\limits_{i=1}^dw_if_i + b = -\sigma(\delta)$. So, $\sigma(\delta) = -\sum\limits_{i=1}^dw_if_i - b$ as required.

Finally, the second point of the theorem is pretty obvious. Using the result from the first point, the predicted probability of class $c(\delta)$ equals $sigmoid(\sigma(\delta))$ which is greater than 0.5 if $\sigma(\delta)>0$. So, it predicts $c(\delta)$. Otherwise, $sigmoid(\sigma(\delta)) < 0.5$, and it predicts the other class which is $1-c(\delta)$.
\end{proof}

\subsection{Proof of Theorem~\ref{theo:sigma'}} \label{proof:theo:sigma'}
\noindent\textbf{Theorem~\ref{theo:sigma'}.}
Given a \QBAF $\langle \Args, \Atts, \Supps, \tau, c \rangle$ and the corresponding \QBAFP $\langle \Args', \Attsp, \Suppsp, \tau', c' \rangle$, using the logistic regression semantics, we use $\sigma(a)$ and $\sigma(a)'$ to represent the strengths of $a \in \Args = \Args'$ in \QBAF and \QBAFP, respectively. The following statements are true for $a \in \Args = \Args'$.
\begin{itemize}
    \item If $\sigma(a) \geq 0$, then $\sigma(a)'= \sigma(a)$.
    \item If $\sigma(a) < 0$, $\sigma(a)'= -\sigma(a)$.
\end{itemize}

\begin{proof}
According to Theorem~\ref{theo:dag}, $G = \langle V, E \rangle$ for the \QBAF is a DAG where $V = \Args$ and $E = \Atts \cup \Supps$. Let $G' = \langle V', E' \rangle$ be the graph structure of \QBAFP where  $V' = \Args'$ and $E' = \Attsp \cup \Suppsp$. By Definition~\ref{def:postprocess}, we know that $V' = \Args' = \Args = V$ and $E' = \Attsp \cup \Suppsp \subseteq \Atts \cup \Supps = E$. Hence, $G'$ is a subgraph of $G$ and also a DAG\footnote{A subgraph of a DAG must be a DAG, since it cannot contain a cycle that does not exist in the supergraph.}. 

Then we can obtain the topological ordering $t$ of arguments in $V'$ which is the ordering of strength computation for \QBAFP. Assume that the order $t$ is $a_1, a_2, \ldots, a_k$. Obviously, $a_1$ must be an argument in $V'$ that has no attack or support. 
Furthermore, we can divide the vertices in $t$ into two groups (corresponding to the two bullet points of this theorem), one with $\sigma(a_i) \geq 0$ and the other with $\sigma(a_i) < 0$. 
We name arguments in the former group and the latter group as $g_1, \ldots, g_r$ and $l_1, \ldots, l_s$, respectively, where $g_i$ must be before $g_{i+1}$ in $t$, $l_i$ must be before $l_{i+1}$ in $t$, and $r+s=k$. 
So, $t$ could be written as, for example, $g_1, g_2, l_1, g_3, l_2, l_3, \ldots, g_r, l_{s-1}, l_s$.
In any case, $g_1$ must be $a_1$ since $a_1$ has neither attacker nor supporter and, therefore, $\sigma(a_1) = \tau(a_1) \geq 0$.
In general, the first part of $t$ must be $g_1, \ldots, g_{r^*}, l_1, \ldots$ where $1 \leq r^* \leq r$, and arguments after $l_1$ could be from either $g$ or $l$.
We will use mathematical induction on this topological ordering $t$ to prove the two bullet points of this theorem.
\begin{itemize}
    \item For $g_1$ (the base case): Because it has neither attacker nor supporter, $\sigma(g_1) = \tau(g_1)$ and $\sigma(g_1)' = \tau'(g_1)$. Since $\sigma(g_1) \geq 0$, by Definition~\ref{def:postprocess}, $\tau'(g_1) = \tau(g_1)$. Hence, $\sigma(g_1)' = \sigma(g_1)$ as required.
    \item For $g_i$ with $i \leq r^*$ (using strong induction): We have shown that $\sigma(g_1)' = \sigma(g_1)$. Next, assuming that $\sigma(g_h)' = \sigma(g_h)$ for $1 \leq h \leq i < r^*$, we need to show that $\sigma(g_{i+1})' = \sigma(g_{i+1})$ where $i+1 \leq r^*$. 
    
    Due to the topological ordering, all the original attackers and supporters of $g_{i+1}$ must be in $\{g_h | 1 \leq h \leq i\}$. As a result, for $a \in \Atts(g_{i+1}) \cup \Supps(g_{i+1})$, $\tau'(a) = \tau(a)$, $c'(a) = c(a)$, and  $\sigma(a)' = \sigma(a)$. Also, $\tau'(g_{i+1}) = \tau(g_{i+1})$ and $c'(g_{i+1}) = c(g_{i+1})$ because $\sigma(g_{i+1}) \geq 0$ by the definition of $g$.
    Since the classes of both $g_{i+1}$ and its original attackers and supporters do not change, $\Attsp(g_{i+1})=\Atts(g_{i+1})$ and $\Suppsp(g_{i+1})=\Supps(g_{i+1})$\footnote{For simplicity, we include the attackers and supporters $a$ where $\sigma(a) = 0$ in $\Attsp(g_{i+1})$ and $\Suppsp(g_{i+1})$, respectively, as they play no role when computing $\sigma(g_{i+1})'$ anyway. The same logic also applies to the next cases in this proof.}.
    
    Applying Definition~\ref{def:logregstr} to $g_{i+1}$ in \QBAFP, we obtain
    \begin{align*}
        \sigma(g_{i+1})' &= \tau'(g_{i+1}) + \sum_{b \in \Suppsp(g_{i+1})}\frac{\sigma(b)'}{\nu(b)} - \sum_{b \in \Attsp(g_{i+1})}\frac{\sigma(b)'}{\nu(b)} \\
        &= \tau(g_{i+1}) + \sum_{b \in \Supps(g_{i+1})}\frac{\sigma(b)}{\nu(b)} - \sum_{b \in \Atts(g_{i+1})}\frac{\sigma(b)}{\nu(b)} \\
        &= \sigma(g_{i+1})
    \end{align*}
    Hence, $\sigma(g_{i+1})' = \sigma(g_{i+1})$ where $i+1 \leq r^*$ as required. 
    \item For $l_1$: Because $\sigma(l_1) < 0$ by the definition of $l$, we need to show that $\sigma(l_1)' = -\sigma(l_1)$.
    
    According to the ordering $t$, all the original attackers and supporters of $l_1$ must be in $\{g_h | 1 \leq h \leq r^*\}$. As a result, for $a \in \Atts(l_1) \cup \Supps(l_1)$, $\tau'(a) = \tau(a)$, $c'(a) = c(a)$, and $\sigma(a)' = \sigma(a)$ (as proven above). 
    In contrast, since $\sigma(l_1) < 0$, by Definition~\ref{def:postprocess}, $\tau'(l_1) = -\tau(l_1)$ and $c'(l_1) = 1-c(l_1)$. In other words, the base score and the supported class of $l_1$ are flipped after post-processing. 
    Because the classes of the attackers and supporters are not change whereas the class of $l_1$ is flipped, $\Attsp(l_1) = \Supps(l_1)$ and $\Suppsp(l_1) = \Atts(l_1)$.
    
    Applying Definition~\ref{def:logregstr} to $l_1$ in \QBAFP, we obtain
    \begin{align*}
        \sigma(l_1)' &= \tau'(l_1) + \sum_{b \in \Suppsp(l_1)}\frac{\sigma(b)'}{\nu(b)} - \sum_{b \in \Attsp(l_1)}\frac{\sigma(b)'}{\nu(b)} \\
        &= -\tau(l_1) + \sum_{b \in \Atts(l_1)}\frac{\sigma(b)}{\nu(b)} - \sum_{b \in \Supps(l_1)}\frac{\sigma(b)}{\nu(b)} \\
        &= -\left(\tau(l_1) + \sum_{b \in \Supps(l_1)}\frac{\sigma(b)}{\nu(b)} - \sum_{b \in \Atts(l_1)}\frac{\sigma(b)}{\nu(b)}\right) \\
        &= -\sigma(l_1)
    \end{align*}
    Hence, $\sigma(l_1)' = -\sigma(l_1)$ as required. 
    
    \item For any argument $a_i$ in $t$ after $l_1$ (using strong induction): So far, we have shown that this theorem is true for $g_1, \ldots, g_{r^*}$ and $l_1$. Next, assuming that the theorem is true for any argument $a_1, \ldots, a_i$, we need to show that the theorem is also true for $a_{i+1}$ regardless of the group it belongs to.
    
    If $a_{i+1}$ belongs to the $g$ group (i.e., $\sigma(a_{i+1}) \geq 0$), we obtain that $\tau'(a_{i+1}) = \tau(a_{i+1})$ and $c'(a_{i+1}) = c(a_{i+1})$. The supported class of $a_{i+1}$ is not flipped after post-processing. The original attackers of $a_{i+1}$ could belong to either the $g$ group or the $l$ group. We use $\Atts_g(a_{i+1})$ and $\Atts_l(a_{i+1})$ to represent those sets of original attackers, respectively. 
    After post-processing, the attackers in $\Atts_g(a_{i+1})$ will still be attackers (due to the unchanged supported classes on both sides of the relations) whereas the ones in $\Atts_l(a_{i+1})$ will become supporters (due to the supported class flipped only on one side). 
    Similarly, the original supporters of $a_{i+1}$ could be split into $\Supps_g(a_{i+1})$ and $\Supps_l(a_{i+1})$. After post-processing, those in $\Supps_g(a_{i+1})$ will still be supporters while those in $\Supps_l(a_{i+1})$ will become attackers. 
    To sum up, $\Suppsp(a_{i+1})$ is the union of two disjoint sets -- $\Supps_g(a_{i+1})$ and $\Atts_l(a_{i+1})$. Meanwhile, $\Attsp(a_{i+1})$ is the union of two disjoint sets -- $\Atts_g(a_{i+1})$ and $\Supps_l(a_{i+1})$.
    
    Applying Definition~\ref{def:logregstr} to $a_{i+1}$ in \QBAFP, we obtain
    \begin{align*}
        \sigma(a_{i+1})' =&\quad \tau'(a_{i+1}) + \sum_{b \in \Suppsp(a_{i+1})}\frac{\sigma(b)'}{\nu(b)} - \sum_{b \in \Attsp(a_{i+1})}\frac{\sigma(b)'}{\nu(b)} \\
        =&\quad  \tau(a_{i+1}) + \sum_{b \in \Supps_g(a_{i+1})}\frac{\sigma(b)'}{\nu(b)} + \sum_{b \in \Atts_l(a_{i+1})}\frac{\sigma(b)'}{\nu(b)} \\
        &\quad - \sum_{b \in \Atts_g(a_{i+1})}\frac{\sigma(b)'}{\nu(b)} - \sum_{b \in \Supps_l(a_{i+1})}\frac{\sigma(b)'}{\nu(b)}\\
        =&\quad  \tau(a_{i+1}) + \sum_{b \in \Supps_g(a_{i+1})}\frac{\sigma(b)}{\nu(b)} + \sum_{b \in \Atts_l(a_{i+1})}\frac{-\sigma(b)}{\nu(b)} \\
        &\quad - \sum_{b \in \Atts_g(a_{i+1})}\frac{\sigma(b)}{\nu(b)} - \sum_{b \in \Supps_l(a_{i+1})}\frac{-\sigma(b)}{\nu(b)}\\
        =&\quad  \tau(a_{i+1}) + \left(\sum_{b \in \Supps_g(a_{i+1})}\frac{\sigma(b)}{\nu(b)} 
        + \sum_{b \in \Supps_l(a_{i+1})}\frac{\sigma(b)}{\nu(b)}\right)\\
        &\quad - \left(\sum_{b \in \Atts_g(a_{i+1})}\frac{\sigma(b)}{\nu(b)} + \sum_{b \in \Atts_l(a_{i+1})}\frac{\sigma(b)}{\nu(b)}\right) \\
        =&\quad  \tau(a_{i+1}) + \sum_{b \in \Supps(a_{i+1})}\frac{\sigma(b)}{\nu(b)} 
        - \sum_{b \in \Atts(a_{i+1})}\frac{\sigma(b)}{\nu(b)}\\
        =&\quad \sigma(a_{i+1})
    \end{align*}
    Hence, for $a_{i+1}$ where $\sigma(a_{i+1}) \geq 0$, $\sigma(a_{i+1})' = \sigma(a_{i+1})$ as the theorem stated.
    
    Analogously, if $a_{i+1}$ belongs to the $l$ group (i.e., $\sigma(a_{i+1}) < 0$), we obtain that $\tau'(a_{i+1}) = -\tau(a_{i+1})$ and $c'(a_{i+1}) = 1 - c(a_{i+1})$. After post-processing, the supported class of $a_{i+1}$ is flipped. 
    Also, the attackers in $\Atts_g(a_{i+1})$ will become supporters (due to the supported class flipped only on one side of the relations at $a_{i+1}$) whereas the ones in $\Atts_l(a_{i+1})$ will still be attackers (due to the supported class flipped on both sides).
    Similarly, the supporters in $\Supps_g(a_{i+1})$ will become attackers, while those in $\Supps_l(a_{i+1})$ will still be supporters.
    To sum up, $\Suppsp(a_{i+1})$ is the union of two disjoint sets -- $\Supps_l(a_{i+1})$ and $\Atts_g(a_{i+1})$. Meanwhile, $\Attsp(a_{i+1})$ is the union of two disjoint sets -- $\Atts_l(a_{i+1})$ and $\Supps_g(a_{i+1})$.
    
    Applying Definition~\ref{def:logregstr} to $a_{i+1}$ in \QBAFP, we obtain
    \begin{align*}
        \sigma(a_{i+1})' =&\quad \tau'(a_{i+1}) + \sum_{b \in \Suppsp(a_{i+1})}\frac{\sigma(b)'}{\nu(b)} - \sum_{b \in \Attsp(a_{i+1})}\frac{\sigma(b)'}{\nu(b)} \\
        =&\quad  -\tau(a_{i+1}) + \sum_{b \in \Supps_l(a_{i+1})}\frac{\sigma(b)'}{\nu(b)} + \sum_{b \in \Atts_g(a_{i+1})}\frac{\sigma(b)'}{\nu(b)} \\
        &\quad - \sum_{b \in \Atts_l(a_{i+1})}\frac{\sigma(b)'}{\nu(b)} - \sum_{b \in \Supps_g(a_{i+1})}\frac{\sigma(b)'}{\nu(b)}\\
        =&\quad  -\tau(a_{i+1}) + \sum_{b \in \Supps_l(a_{i+1})}\frac{-\sigma(b)}{\nu(b)} + \sum_{b \in \Atts_g(a_{i+1})}\frac{\sigma(b)}{\nu(b)} \\
        &\quad - \sum_{b \in \Atts_l(a_{i+1})}\frac{-\sigma(b)}{\nu(b)} - \sum_{b \in \Supps_g(a_{i+1})}\frac{\sigma(b)}{\nu(b)}\\
        =&\quad  -\tau(a_{i+1}) - \left(\sum_{b \in \Supps_g(a_{i+1})}\frac{\sigma(b)}{\nu(b)} 
        + \sum_{b \in \Supps_l(a_{i+1})}\frac{\sigma(b)}{\nu(b)}\right)\\
        &\quad + \left(\sum_{b \in \Atts_g(a_{i+1})}\frac{\sigma(b)}{\nu(b)} + \sum_{b \in \Atts_l(a_{i+1})}\frac{\sigma(b)}{\nu(b)}\right) \\
        =&\quad  -\tau(a_{i+1}) - \sum_{b \in \Supps(a_{i+1})}\frac{\sigma(b)}{\nu(b)} 
        + \sum_{b \in \Atts(a_{i+1})}\frac{\sigma(b)}{\nu(b)}\\
        =&\quad -\sigma(a_{i+1})
    \end{align*}
    Hence, for $a_{i+1}$ where $\sigma(a_{i+1}) < 0$, $\sigma(a_{i+1})' = -\sigma(a_{i+1})$ as the theorem stated.
    
    From both cases, the induction step is completed.
\end{itemize}
Our proof has covered all the arguments in the topological ordering $t$. Thus, the theorem is true for $a \in \Args = \Args'$.
\end{proof}

\subsection{Proof of Dialectical Properties} \label{proof:properties}

Here, we conduct proofs of the results in Table~\ref{tab5:properties} concerning the group properties for \QBAF and \QBAFP listed in Table~\ref{table:new}.
When conducting the proofs, we may use the \QBAF and the corresponding \QBAFP in Figures~\ref{fig5:exqb} and \ref{fig5:exqbpp}, respectively, as a counterexample.

\begin{figure}[!t]
\centering
\small
\begin{tikzpicture}[node distance=2cm]
\node [fill=green!20,label={[xshift=1.3cm, yshift=-0.7cm](0.1, 0.9)}] (d) at (0, 0) [argsm] {$\delta$};
\node [fill=red!20,label={[xshift=0.0cm, yshift=0.07cm](0.4, 0.3)}] (a1) at (-5.4, -3) [argsm] {$\alpha_1$};
\node [fill=green!20,label={[xshift=0.5cm, yshift=0.07cm](0.4, 0.5)}] (a2) at (-3.6, -3) [argsm] {$\alpha_2$};
\node [fill=red!20,label={[xshift=0.0cm, yshift=-1.5cm](0.4, 0.4)}] (a3) at (-1.8, -3) [argsm] {$\alpha_3$};
\node [fill=green!20,label={[xshift=0.0cm, yshift=0.07cm](0.8, 0.4)}] (a4) at (0, -3) [argsm] {$\alpha_4$};
\node [fill=green!20,label={[xshift=0.0cm, yshift=0.07cm](0.8, 0.5)}] (a5) at (1.8, -3) [argsm] {$\alpha_5$};
\node [fill=red!20,label={[xshift=0.0cm, yshift=0.07cm](0.2, 0.5)}] (a6) at (3.6, -3) [argsm] {$\alpha_6$};
\node [fill=green!20,label={[xshift=0.0cm, yshift=0.07cm](0.2, 0.6)}] (a7) at (5.4, -3) [argsm] {$\alpha_7$};
\node [fill=red!20,label={[xshift=1.3cm, yshift=-0.7cm](0.3, -0.2)}] (a8) at (-4.5, -5.5) [argsm] {$\alpha_8$};
\node [fill=green!20,label={[xshift=1.3cm, yshift=-0.7cm](0.5, 0.5)}] (a9) at (-4.5, -8) [argsm] {$\alpha_9$};
\node [fill=red!20,label={[xshift=0.0cm, yshift=-1.5cm](0.4, 0.4)}] (a10) at (0, -5.5) [argsm] {$\alpha_{10}$};
\node [fill=red!20,label={[xshift=0.0cm, yshift=-1.5cm](0.6, 0.6)}] (a11) at (2.7, -5.5) [argsm] {$\alpha_{11}$};
\node [fill=green!20,label={[xshift=0.0cm, yshift=-1.5cm](0.4, 0.4)}] (a12) at (5.4, -5.5) [argsm] {$\alpha_{12}$};
\tatt{a1}{d};
\tsupp{a2}{d};
\tatt{a3}{d};
\draw [rel] (a4) -- node[anchor=south west] {\large \textbf{+}} (d);
\tsupp{a5}{d};
\tatt{a6}{d};
\tsupp{a7}{d};
\tsupp{a8}{a1};
\tatt{a8}{a2};
\draw [rel] (a9) -- node[anchor=south west] {\large \textbf{--}} (a8);
\draw [rel] (a10) -- node[anchor=south west] {\large \textbf{--}} (a4);
\tatt{a11}{a5}
\tsupp{a11}{a6}
\draw [rel] (a12) -- node[anchor=south west] {\large \textbf{+}} (a7);
\end{tikzpicture}
\caption{Example of \QBAF used as a counterexample in Appendix~\ref{proof:properties}. With each argument, there is a value pair $(x, y)$ where $x$ and $y$ represent the base score and the strength (based on the logistic regression semantics $\sigma$) of the argument, respectively. The color represents the argument's supported class (i.e., green for positive (1) and red for negative (0)).}\label{fig5:exqb}
\end{figure}
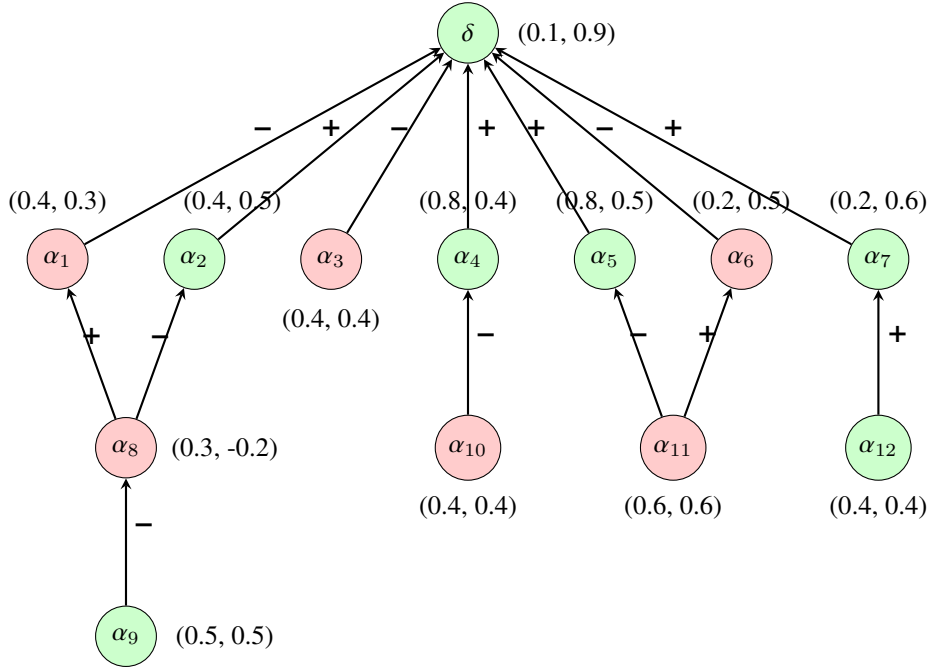

\begin{preprop}
Both $\langle \QBAF, \sigma \rangle$ and $\langle \QBAFP, \sigma \rangle$ satisfy GP1.
\end{preprop}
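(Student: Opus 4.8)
The plan is to appeal directly to the definition of the logistic regression semantics (Definition~\ref{def:logregstr}), which in fact already anticipates this property in the discussion immediately following it. First I would observe that the strength formula
\[
\sigma(\alpha) = \tau(\alpha) + \sum_{b \in \Supps(\alpha)}\frac{\sigma(b)}{\nu(b)} - \sum_{b \in \Atts(\alpha)}\frac{\sigma(b)}{\nu(b)}
\]
has exactly the same shape whether it is evaluated on a \QBAF (using $\tau$, $\Atts$, $\Supps$) or on the corresponding \QBAFP (using $\tau'$, $\Attsp$, $\Suppsp$), because the semantics $\sigma$ is applied uniformly to both frameworks. Hence it suffices to carry out the argument once in the generic notation $\tau_*$, $\Atts_*$, $\Supps_*$, $\sigma_*$ of Table~\ref{table:new}, and then read off both instantiations.

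Next I would substitute the hypothesis $\Atts_*(\alpha)=\emptyset$ and $\Supps_*(\alpha)=\emptyset$ into the strength formula. Both summations then range over the empty set and therefore evaluate to $0$, so that only the base-score term survives, giving $\sigma_*(\alpha) = \tau_*(\alpha)$, which is precisely the conclusion of GP1. Specialising to the \QBAF framework yields $\sigma(\alpha)=\tau(\alpha)$, and specialising to the \QBAFP framework yields $\sigma(\alpha)'=\tau'(\alpha)$; in both cases the equality matches the statement of the property, so both $\langle \QBAF, \sigma \rangle$ and $\langle \QBAFP, \sigma \rangle$ satisfy GP1.

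There is essentially no obstacle to overcome here: the result is immediate from Definition~\ref{def:logregstr} and requires neither the DAG structure of Theorem~\ref{theo:dag} nor the relationship between $\sigma$ and $\sigma'$ established in Theorem~\ref{theo:sigma'}. The only point I would state explicitly is the convention that an empty sum contributes $0$, which is what leaves the base score as the sole term. This uniformity across the two frameworks is exactly why GP1 is one of the few properties that both \QBAF and \QBAFP satisfy without any appeal to the post-processing step.
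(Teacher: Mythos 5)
Your proposal is correct and follows exactly the paper's own argument: substitute the empty attacker and supporter sets into Definition~\ref{def:logregstr}, note that the empty sums vanish, and conclude $\sigma(\alpha)=\tau(\alpha)$ for both \QBAF and \QBAFP under the uniform semantics. The explicit remarks about the empty-sum convention and the non-reliance on Theorem~\ref{theo:dag} or Theorem~\ref{theo:sigma'} are accurate but add nothing beyond what the paper's one-line proof already contains.
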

\begin{proof}
According to Definition~\ref{def:logregstr}, with $\Atts(\alpha)=\emptyset$ and $\Supps(\alpha)=\emptyset$, we have
$\sigma(\alpha) = \tau(\alpha) + \sum_{b \in \emptyset}\frac{\sigma(b)}{\nu(b)} - \sum_{b \in \emptyset}\frac{\sigma(b)}{\nu(b)} = \tau(\alpha)$
as required.

Hence, both $\langle \QBAF, \sigma \rangle$ and $\langle \QBAFP, \sigma \rangle$ satisfy GP1.
\end{proof}

\begin{preprop}
$\langle \QBAF, \sigma \rangle$ does not satisfy GP2, but $\langle \QBAFP, \sigma \rangle$ does.
\end{preprop}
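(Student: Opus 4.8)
The plan is to handle the two halves of the proposition separately, since they call for different techniques: a concrete counterexample to refute GP2 for $\langle \QBAF, \sigma \rangle$, and a short direct argument to establish it for $\langle \QBAFP, \sigma \rangle$. The conceptual thread linking both is the observation, already flagged in the surrounding discussion, that attacker strengths can be negative before post-processing but never after it.

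For the negative direction, I would exhibit an argument in a \QBAF that has attackers but no supporters yet whose strength is not below its base score. The \QBAF in Figure~\ref{fig5:exqb} already supplies one: consider $\alpha_2$, which is attacked only by $\alpha_8$ and has no supporters, so $\Atts(\alpha_2) = \{\alpha_8\} \neq \emptyset$ while $\Supps(\alpha_2) = \emptyset$. The crucial point is that, before post-processing, an attacker may carry a negative dialectical strength: here $\alpha_8$ is itself attacked by $\alpha_9$ and ends up with $\sigma(\alpha_8) = -0.2 < 0$. Applying Definition~\ref{def:logregstr} to $\alpha_2$ then yields $\sigma(\alpha_2) = \tau(\alpha_2) - \sigma(\alpha_8)/\nu(\alpha_8) = 0.4 - (-0.2)/2 = 0.5 > 0.4 = \tau(\alpha_2)$. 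Thus an argument with only attackers has strength strictly above its base score, contradicting GP2, so $\langle \QBAF, \sigma \rangle$ does not satisfy it.

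For the positive direction, I would argue directly from Definition~\ref{def:logregstr} together with the non-negativity of post-processed strengths. Suppose $\alpha$ is any argument of a \QBAFP with $\Attsp(\alpha) \neq \emptyset$ and $\Suppsp(\alpha) = \emptyset$. Then Equation~\ref{eq5:logregstr}, applied in the post-processed framework, reduces to $\sigma(\alpha)' = \tau'(\alpha) - \sum_{b \in \Attsp(\alpha)} \sigma(b)'/\nu(b)$. By Corollary~\ref{cor:sigma'} every post-processed strength satisfies $\sigma(b)' = |\sigma(b)| \geq 0$, and, crucially, by Definition~\ref{def:postprocess} any edge whose source has strength $0$ is dropped, so every $b \in \Attsp(\alpha)$ has $\sigma(b) \neq 0$ and hence $\sigma(b)' > 0$. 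Since each $\nu(b) \geq 1$ and the index set is non-empty, the subtracted sum is strictly positive, giving $\sigma(\alpha)' < \tau'(\alpha)$, which is exactly GP2.

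The step I would be most careful about is securing the \emph{strict} inequality in the second part. This hinges entirely on the pruning of zero-strength edges built into Definition~\ref{def:postprocess}: without it, an attacker of strength $0$ could keep $\Attsp(\alpha)$ non-empty while contributing nothing to the sum, and the inequality would only be weak. Once that observation is in place the argument is immediate, and it simultaneously explains the contrast the proposition records—GP2 fails for $\langle \QBAF, \sigma \rangle$ because attackers may have negative strength, and holds for $\langle \QBAFP, \sigma \rangle$ because all strengths are non-negative and zero-strength attackers have been removed.
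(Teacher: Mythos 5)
Your proof is correct and takes essentially the same route as the paper's: the same counterexample ($\alpha_2$ in Figure~\ref{fig5:exqb}, whose sole attacker $\alpha_8$ has negative strength) for the \QBAF half, and the same direct argument---combining Corollary~\ref{cor:sigma'} with the removal of zero-strength edges in Definition~\ref{def:postprocess} to get strictly positive attacker strengths---for the \QBAFP half. Your explicit numerical check of the counterexample and your emphasis on where strictness comes from merely spell out what the paper leaves implicit.
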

\begin{proof}
According to Definition~\ref{def:logregstr}, with $\Supps(\alpha)=\emptyset$, we have
\[\sigma(\alpha) = \tau(\alpha) + 
\sum_{b \in \emptyset}\frac{\sigma(b)}{\nu(b)} - 
\sum_{b \in \Atts(\alpha)}\frac{\sigma(b)}{\nu(b)} = 
\tau(\alpha) - 
\sum_{b \in \Atts(\alpha)}\frac{\sigma(b)}{\nu(b)}.\]
In \QBAF, $\sigma(b)$ could be either positive or negative, so it is possible that $\sum_{b \in \Atts(\alpha)}\frac{\sigma(b)}{\nu(b)} < 0$, making $\sigma(\alpha) > \tau(\alpha)$. 
$\alpha_2$ in Figure~\ref{fig5:exqb} is one counterexample of this GP.
By contrast, in \QBAFP, we have removed any attacker and supporter of which the strength is zero. According to this and Corollary~\ref{cor:sigma'}, $\sigma(b) > 0$ for $b \in \Atts(\alpha)$. Therefore, $\sum_{b \in \Atts(\alpha)}\frac{\sigma(b)}{\nu(b)} > 0$, resulting in $\sigma(\alpha) < \tau(\alpha)$ as required.

Hence, $\langle \QBAF, \sigma \rangle$ does not satisfy GP2, but $\langle \QBAFP, \sigma \rangle$ does.
\end{proof}

\begin{preprop}
$\langle \QBAF, \sigma \rangle$ does not satisfy GP3, but $\langle \QBAFP, \sigma \rangle$ does.
\end{preprop}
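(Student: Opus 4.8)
The plan is to treat GP3 as the exact dual of GP2 and to reuse the mechanism from the immediately preceding proof. First I would apply Definition~\ref{def:logregstr} to an argument $\alpha$ satisfying the hypothesis of GP3, i.e.\ $\Atts(\alpha)=\emptyset$ and $\Supps(\alpha)\neq\emptyset$. The attack sum then vanishes, leaving
\[
\sigma(\alpha) = \tau(\alpha) + \sum_{b \in \Supps(\alpha)}\frac{\sigma(b)}{\nu(b)},
\]
so the whole question reduces to the sign of the support sum.

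For $\langle \QBAF, \sigma \rangle$ the point is that, before post-processing, a supporter's strength $\sigma(b)$ may be negative, so the support sum can be strictly negative and push $\sigma(\alpha)$ \emph{below} $\tau(\alpha)$, contradicting GP3. To make this concrete I would exhibit a counterexample inside the \QBAF of Figure~\ref{fig5:exqb}: the argument $\alpha_1$ has no attacker and a single supporter $\alpha_8$ whose strength is $-0.2$; since $\nu(\alpha_8)=2$, this yields $\sigma(\alpha_1)=0.4+(-0.2)/2=0.3<0.4=\tau(\alpha_1)$, so GP3 fails for $\langle \QBAF, \sigma \rangle$.

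For $\langle \QBAFP, \sigma \rangle$ I would instead invoke Corollary~\ref{cor:sigma'}, which guarantees $\sigma(b)' = |\sigma(b)| \geq 0$ for every argument after post-processing, together with the clause of Definition~\ref{def:postprocess} that deletes every relation whose source has zero strength. Consequently every $b \in \Suppsp(\alpha)$ has $\sigma(b)' > 0$ strictly, the support sum in the \QBAFP version of the displayed identity is strictly positive, and hence $\sigma(\alpha)' > \tau'(\alpha)$, establishing GP3.

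The only genuinely delicate point, which is the mirror image of the subtlety already handled in the GP2 argument, is the passage from ``nonnegative'' to ``strictly positive'' for supporter strengths: Corollary~\ref{cor:sigma'} on its own gives only $\sigma(b)' \geq 0$, so the conclusion would weaken to $\sigma(\alpha)' \geq \tau'(\alpha)$ unless one also appeals to the removal of zero-strength edges. I would therefore cite that clause of Definition~\ref{def:postprocess} explicitly to secure the strict inequality. Everything else is routine symbol pushing that simply dualises the GP2 proof.
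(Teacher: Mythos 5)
Your proposal is correct and follows essentially the same route as the paper: the same reduction via Definition~\ref{def:logregstr} with the attack sum vanishing, the same counterexample $\alpha_1$ from Figure~\ref{fig5:exqb} for \QBAF (your computation $\sigma(\alpha_1)=0.4+(-0.2)/2=0.3<0.4$ matches the figure), and the same appeal to post-processed strength positivity for \QBAFP. The strictness point you flag is also handled the same way in the paper, just stated in its GP2 proof (removal of zero-strength attackers/supporters plus Corollary~\ref{cor:sigma'}) and inherited by reference in GP3, so making it explicit is a presentational difference only.
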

\begin{proof}
According to Definition~\ref{def:logregstr}, with $\Atts(\alpha)=\emptyset$, we have
\[\sigma(\alpha) = \tau(\alpha) + 
\sum_{b \in \Supps(\alpha)}\frac{\sigma(b)}{\nu(b)} - 
\sum_{b \in \emptyset}\frac{\sigma(b)}{\nu(b)} = 
\tau(\alpha) + 
\sum_{b \in \Supps(\alpha)}\frac{\sigma(b)}{\nu(b)}.\]
As in the proof of GP2, for \QBAF, $\sum_{b \in \Supps(\alpha)}\frac{\sigma(b)}{\nu(b)}$ could be negative, rendering $\sigma(\alpha) < \tau(\alpha)$.
$\alpha_1$ in Figure~\ref{fig5:exqb} is one counterexample of this GP.
Meanwhile, in \QBAFP, $\sum_{b \in \Supps(\alpha)}\frac{\sigma(b)}{\nu(b)} > 0$. Therefore, $\sigma(\alpha) > \tau(\alpha)$ as required.

Hence, $\langle \QBAF, \sigma \rangle$ does not satisfy GP3, but $\langle \QBAFP, \sigma \rangle$ does.
\end{proof}

\begin{preprop}
$\langle \QBAF, \sigma \rangle$ does not satisfy GP4, but $\langle \QBAFP, \sigma \rangle$ does.
\end{preprop}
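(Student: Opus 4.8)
The plan is to treat the two halves of the claim separately, exactly mirroring the structure already used for GP2 and GP3. For the negative direction (that \QBAF does not satisfy GP4) I would exhibit a single counterexample rather than reason in generality. The natural witness is the argument $\alpha_1$ in the running \QBAF of Figure~\ref{fig5:exqb}: it has no attacker, so $\Atts(\alpha_1)=\emptyset$, and a single supporter $\alpha_8$ whose strength is negative. Applying Definition~\ref{def:logregstr} gives
\[
\sigma(\alpha_1)=\tau(\alpha_1)+\frac{\sigma(\alpha_8)}{\nu(\alpha_8)}=0.4+\frac{-0.2}{2}=0.3<0.4=\tau(\alpha_1),
\]
so the antecedent of GP4 holds ($\sigma(\alpha_1)<\tau(\alpha_1)$) while its consequent ($\Atts(\alpha_1)\neq\emptyset$) fails. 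This is the same phenomenon that already breaks GP3: a supporter in a (pre-post-processing) \QBAF may carry negative strength and thereby drag the strength of an unattacked argument below its base score.

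For the positive direction (that \QBAFP satisfies GP4) I would argue by contrapositive, assuming $\Attsp(\alpha)=\emptyset$ and aiming to show $\sigma(\alpha)'\geq\tau'(\alpha)$. Expanding the strength of $\alpha$ in \QBAFP via Definition~\ref{def:logregstr}, the attack sum vanishes and leaves
\[
\sigma(\alpha)' = \tau'(\alpha) + \sum_{b\in\Suppsp(\alpha)}\frac{\sigma(b)'}{\nu(b)}.
\]
The key ingredient is that every supporter surviving in a \QBAFP has strictly positive strength: by Corollary~\ref{cor:sigma'} we have $\sigma(b)'=|\sigma(b)|\geq 0$, and Definition~\ref{def:postprocess} explicitly deletes every edge whose source has zero strength, so $\sigma(b)'>0$ for all $b\in\Suppsp(\alpha)$. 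Together with $\nu(b)\geq 1$, the remaining sum is non-negative, giving $\sigma(\alpha)'\geq\tau'(\alpha)$. Contraposing this yields GP4 for \QBAFP.

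I do not anticipate a serious obstacle; the whole argument reduces to the sign control on argument strengths that post-processing provides, precisely as in the preceding GP2 and GP3 proofs. The one point worth stating carefully is that GP4 constrains only the existence of an \emph{attacker}, not a supporter, so the relevant reasoning concerns whether the support term alone can push $\sigma$ strictly below $\tau$: in \QBAF it can, via a negative-strength supporter, whereas in \QBAFP it cannot, since all strengths are positive. The only routine check is confirming, by reading the edges of Figure~\ref{fig5:exqb}, that $\alpha_1$ indeed has $\alpha_8$ as its sole (supporting) neighbour and no attacker.
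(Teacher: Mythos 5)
Your proposal is correct and takes essentially the same route as the paper's proof: the paper likewise cites $\alpha_1$ of Figure~\ref{fig5:exqb} (whose only incident relation is the negative-strength supporter $\alpha_8$) as the \QBAF counterexample, and proves the \QBAFP half by observing that if $\Attsp(\alpha)=\emptyset$ then $\sigma(\alpha)'<\tau'(\alpha)$ would force $\sum_{b\in\Suppsp(\alpha)}\sigma(b)'/\nu(b)<0$, which is impossible because all strengths in a \QBAFP are positive. The only cosmetic difference is that you argue the second half by contrapositive while the paper argues by contradiction; the underlying content is identical.
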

\begin{proof}
From $\sigma(\alpha) < \tau(\alpha)$ with Definition~\ref{def:logregstr}, we obtain that
\[\sigma(\alpha) = \tau(\alpha) + 
\sum_{b \in \Supps(\alpha)}\frac{\sigma(b)}{\nu(b)} - 
\sum_{b \in \Atts(\alpha)}\frac{\sigma(b)}{\nu(b)} < \tau(\alpha).\]
Therefore, 
\[\sum_{b \in \Supps(\alpha)}\frac{\sigma(b)}{\nu(b)} - 
\sum_{b \in \Atts(\alpha)}\frac{\sigma(b)}{\nu(b)} < 0.\]
For \QBAF, it is possible that $\Atts(\alpha)=\emptyset$ because $\sum_{b \in \Supps(\alpha)}\frac{\sigma(b)}{\nu(b)}$ could be negative, satisfying the inequality. So, the property is not satisfied. 
$\alpha_1$ in Figure~\ref{fig5:exqb} is one counterexample of this GP.
In contrast, for \QBAFP, let us proof by contradiction. Assume that $\Atts(\alpha)=\emptyset$. We will obtain $\sum_{b \in \Supps(\alpha)}\frac{\sigma(b)}{\nu(b)} < 0$ which is impossible under \QBAFP where $\sigma(b) > 0$. Therefore, it must be the case that $\Atts(\alpha)\neq\emptyset$. 

Hence, $\langle \QBAF, \sigma \rangle$ does not satisfy GP4, but $\langle \QBAFP, \sigma \rangle$ does.
\end{proof}

\begin{preprop}
$\langle \QBAF, \sigma \rangle$ does not satisfy GP5, but $\langle \QBAFP, \sigma \rangle$ does.
\end{preprop}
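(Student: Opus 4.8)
The plan is to mirror the proof of GP4 essentially verbatim, exchanging the roles of supporters and attackers throughout. Starting from the hypothesis $\sigma(\alpha) > \tau(\alpha)$ and expanding $\sigma(\alpha)$ via Definition~\ref{def:logregstr}, I would isolate the net dialectical contribution to obtain
\[\sum_{b \in \Supps(\alpha)}\frac{\sigma(b)}{\nu(b)} - \sum_{b \in \Atts(\alpha)}\frac{\sigma(b)}{\nu(b)} > 0.\]
Everything then hinges on whether this positive net contribution forces $\Supps(\alpha)$ to be non-empty, which in turn depends entirely on the admissible signs of the strengths appearing in the two sums.

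For $\langle \QBAF, \sigma \rangle$ I would first show that the property fails. Because strengths in a (not-yet-post-processed) \QBAF may be negative, the displayed inequality can hold even when $\Supps(\alpha) = \emptyset$: this happens whenever $\alpha$ has an attacker of negative strength, since then $-\sum_{b \in \Atts(\alpha)}\sigma(b)/\nu(b) > 0$ alone suffices. I would exhibit $\alpha_2$ in Figure~\ref{fig5:exqb} as the explicit counterexample, as it has no supporter yet is attacked by the negative-strength argument $\alpha_8$ (of out-degree $2$), so that $\sigma(\alpha_2) = 0.4 - (-0.2)/2 = 0.5 > 0.4 = \tau(\alpha_2)$, violating GP5 despite $\Supps(\alpha_2) = \emptyset$.

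For $\langle \QBAFP, \sigma \rangle$ I would argue by contradiction, exactly as in the GP4 case. Assuming $\Supps(\alpha) = \emptyset$, the displayed inequality collapses to $\sum_{b \in \Atts(\alpha)}\sigma(b)/\nu(b) < 0$; but after post-processing every surviving argument has strictly positive strength (by Corollary~\ref{cor:sigma'}, together with the convention that zero-strength attackers and supporters are removed), so each summand is positive and the sum cannot be negative, giving the contradiction. Hence $\Supps(\alpha) \neq \emptyset$, as required. The only point demanding genuine care, rather than a real obstacle, is the appeal to Corollary~\ref{cor:sigma'} and the edge-removal convention to secure \emph{strict} positivity of the remaining attacker strengths; once that is in place, the remainder is a direct transcription of the GP4 argument with attacks and supports interchanged.
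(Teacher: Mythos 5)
Your proposal is correct and follows essentially the same route as the paper's own proof: expand $\sigma(\alpha)$ via Definition~\ref{def:logregstr}, isolate the net dialectical contribution, exhibit $\alpha_2$ in Figure~\ref{fig5:exqb} as the counterexample for $\langle \QBAF, \sigma \rangle$ (your explicit computation $\sigma(\alpha_2) = 0.4 - (-0.2)/2 = 0.5$ matches the figure), and argue by contradiction for $\langle \QBAFP, \sigma \rangle$ using the strict positivity of post-processed strengths from Corollary~\ref{cor:sigma'} together with the removal of zero-strength edges. The paper's proof is a direct transcription of its GP4 argument with attacks and supports interchanged, exactly as you planned, so there is nothing to add.
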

\begin{proof}
From $\sigma(\alpha) > \tau(\alpha)$ with Definition~\ref{def:logregstr}, we obtain that
\[\sigma(\alpha) = \tau(\alpha) + 
\sum_{b \in \Supps(\alpha)}\frac{\sigma(b)}{\nu(b)} - 
\sum_{b \in \Atts(\alpha)}\frac{\sigma(b)}{\nu(b)} > \tau(\alpha).\]
Therefore, 
\[\sum_{b \in \Supps(\alpha)}\frac{\sigma(b)}{\nu(b)} - 
\sum_{b \in \Atts(\alpha)}\frac{\sigma(b)}{\nu(b)} > 0.\]
As in the proof of GP4, for \QBAF, it is possible that $\Supps(\alpha)=\emptyset$ because $\sum_{b \in \Atts(\alpha)}\frac{\sigma(b)}{\nu(b)}$ could be negative, satisfying the inequality. So, the property is not satisfied.
$\alpha_2$ in Figure~\ref{fig5:exqb} is one counterexample of this GP.
In contrast, for \QBAFP, let us proof by contradiction. Assume that $\Supps(\alpha)=\emptyset$. We will obtain $- 
\sum_{b \in \Atts(\alpha)}\frac{\sigma(b)}{\nu(b)} > 0$. Thus, $\sum_{b \in \Atts(\alpha)}\frac{\sigma(b)}{\nu(b)} < 0$  which is impossible under \QBAFP where $\sigma(b) > 0$. Therefore, it must be the case that $\Supps(\alpha)\neq\emptyset$. 

Hence, $\langle \QBAF, \sigma \rangle$ does not satisfy GP5, but $\langle \QBAFP, \sigma \rangle$ does.
\end{proof}

\begin{preprop}
Both $\langle \QBAF, \sigma \rangle$ and $\langle \QBAFP, \sigma \rangle$ satisfy GP6.
\end{preprop}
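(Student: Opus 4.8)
The plan is to read GP6 straight off the defining equation for the logistic regression semantics. By Definition~\ref{def:logregstr}, for any argument $a$,
\[
\sigma(a) = \tau(a) + \sum_{b \in \Supps(a)}\frac{\sigma(b)}{\nu(b)} - \sum_{b \in \Atts(a)}\frac{\sigma(b)}{\nu(b)},
\]
and this quantity is well-defined because the underlying graph is a DAG by Theorem~\ref{theo:dag}, so the strengths can be computed in a topological order and the recursion bottoms out. The key observation is that the right-hand side depends on $a$ only through the three pieces of data $\tau(a)$, $\Supps(a)$, and $\Atts(a)$: each summand $\sigma(b)/\nu(b)$ is an intrinsic quantity of the argument $b$ (its own strength divided by its own out-degree) and does not depend on which argument $b$ happens to be supporting or attacking.

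First I would instantiate the equation at $\alpha$ and at $\beta$. Using the hypotheses $\tau(\alpha)=\tau(\beta)$, $\Supps(\alpha)=\Supps(\beta)$, and $\Atts(\alpha)=\Atts(\beta)$, the two sums over supporters range over the same index set and add up the same terms $\sigma(b)/\nu(b)$, and likewise for the two sums over attackers. Hence the base-score term and both sums coincide across $\alpha$ and $\beta$, giving $\sigma(\alpha)=\sigma(\beta)$ at once.

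Finally, I would note that this reasoning is agnostic to whether we work in a \QBAF or a \QBAFP: in both cases $\sigma$ is defined by the same Equation~\ref{eq5:logregstr}, and replacing $\tau,\Atts,\Supps$ by their primed counterparts $\tau',\Attsp,\Suppsp$ leaves the argument untouched. There is no genuine obstacle here, since GP6 expresses exactly the fact that $\sigma$ is a function of the triple $(\tau,\Supps,\Atts)$; the only point needing a moment's care is to state explicitly that $\sigma(b)$ and $\nu(b)$ are well-defined and depend on $b$ alone, which is precisely what the DAG structure from Theorem~\ref{theo:dag} guarantees.
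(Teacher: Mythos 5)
Your proposal is correct and follows essentially the same route as the paper: instantiate Equation~\ref{eq5:logregstr} at $\alpha$ and $\beta$, use the hypotheses $\tau(\alpha)=\tau(\beta)$, $\Atts(\alpha)=\Atts(\beta)$, $\Supps(\alpha)=\Supps(\beta)$ to identify the two right-hand sides term by term, and note that the argument applies verbatim to both \QBAF and \QBAFP. Your extra remarks on well-definedness via the DAG structure and on $\sigma(b)/\nu(b)$ being intrinsic to $b$ are sound but only make explicit what the paper's substitution step implicitly relies on.
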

\begin{proof}
According to Definition~\ref{def:logregstr}, we have
\[\sigma(\alpha) = \tau(\alpha) + 
\sum_{b \in \Supps(\alpha)}\frac{\sigma(b)}{\nu(b)} - 
\sum_{b \in \Atts(\alpha)}\frac{\sigma(b)}{\nu(b)}.\]
Because $\Atts(\alpha)=\Atts(\beta)$, $\Supps(\alpha)=\Supps(\beta)$, and $\tau(\alpha)=\tau(\beta)$, we can replace $\tau(\alpha)$, $\Supps(\alpha)$, and $\Atts(\alpha)$ by $\tau(\beta)$, $\Supps(\beta)$, and $\Atts(\beta)$, respectively. Therefore, 
\[\sigma(\alpha) = \tau(\beta) + 
\sum_{b \in \Supps(\beta)}\frac{\sigma(b)}{\nu(b)} - 
\sum_{b \in \Atts(\beta)}\frac{\sigma(b)}{\nu(b)} = \sigma(\beta).\]

Hence, both $\langle \QBAF, \sigma \rangle$ and $\langle \QBAFP, \sigma \rangle$ satisfy GP6.
\end{proof}

\begin{preprop}
$\langle \QBAF, \sigma \rangle$ does not satisfy GP7, but $\langle \QBAFP, \sigma \rangle$ does.
\end{preprop}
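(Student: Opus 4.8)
The plan is to reduce GP7 to the sign of a single sum by subtracting the two instances of the strength equation. Since $\Supps(\alpha)=\Supps(\beta)$ and $\tau(\alpha)=\tau(\beta)$, applying Definition~\ref{def:logregstr} to both $\alpha$ and $\beta$ and subtracting cancels the base-score term and the entire supporter sum, leaving
\[
\sigma(\alpha) - \sigma(\beta) = \sum_{b \in \Atts(\beta)}\frac{\sigma(b)}{\nu(b)} - \sum_{b \in \Atts(\alpha)}\frac{\sigma(b)}{\nu(b)} = \sum_{b \in \Atts(\beta)\setminus\Atts(\alpha)}\frac{\sigma(b)}{\nu(b)},
\]
where the last equality uses $\Atts(\alpha)\subset\Atts(\beta)$. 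The proper inclusion guarantees $\Atts(\beta)\setminus\Atts(\alpha)\neq\emptyset$, so everything hinges on the sign of the strengths of the ``extra'' attackers in $\Atts(\beta)$.

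For $\langle \QBAF, \sigma \rangle$ I would exhibit a counterexample, since the strengths $\sigma(b)$ of attackers in a (non-post-processed) \QBAF may be negative, in which case the displayed sum can be non-positive and GP7 fails. Concretely, in Figure~\ref{fig5:exqb} take $\alpha=\alpha_3$ and $\beta=\alpha_2$: both have base score $0.4$ and no supporters, while $\Atts(\alpha_3)=\emptyset\subset\{\alpha_8\}=\Atts(\alpha_2)$. Because the extra attacker $\alpha_8$ has negative strength $\sigma(\alpha_8)=-0.2$ (recomputed through its out-degree $\nu(\alpha_8)=2$), the sum is negative and $\sigma(\alpha_3)=0.4<0.5=\sigma(\alpha_2)$, contradicting the conclusion $\sigma(\alpha)>\sigma(\beta)$ demanded by GP7.

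For $\langle \QBAFP, \sigma \rangle$ the same reduction applies, but now every surviving attacker has a strictly positive strength: by Corollary~\ref{cor:sigma'} all strengths in a \QBAFP equal $|\sigma(\cdot)|\geq 0$, and the post-processing of Definition~\ref{def:postprocess} deletes precisely the edges whose source has zero strength, so $\sigma(b)>0$ for every $b\in\Atts(\beta)$. Combined with $\nu(b)\geq 1$ and the nonempty difference set $\Atts(\beta)\setminus\Atts(\alpha)$, the sum is strictly positive, whence $\sigma(\alpha)>\sigma(\beta)$ as required.

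The main obstacle is the \QBAF counterexample: I must ensure that the chosen $\alpha$ and $\beta$ satisfy the hypotheses exactly (identical supporters and base score, attacker sets in proper inclusion) within one admissible framework, rather than merely tabulating strength values. Verifying that $\alpha_3$ and $\alpha_2$ in Figure~\ref{fig5:exqb} meet all three hypotheses, and that the shared ``extra'' attacker $\alpha_8$ genuinely carries a negative strength, is where the care lies; the \QBAFP direction is then a short sign argument riding on Corollary~\ref{cor:sigma'}.
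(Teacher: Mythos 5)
Your proposal is correct and follows essentially the same route as the paper's proof: the same reduction of GP7 to the sign of $\sum_{b \in \Atts(\beta)\setminus\Atts(\alpha)}\sigma(b)/\nu(b)$ (the paper phrases it as partitioning $\Atts(\beta)$ into $\Atts(\alpha)$ and $X$ rather than subtracting the two strength equations, which is the same algebra), the same counterexample $\alpha=\alpha_3$, $\beta=\alpha_2$ in Figure~\ref{fig5:exqb} with the negative-strength attacker $\alpha_8$, and the same strict-positivity argument for \QBAFP via Corollary~\ref{cor:sigma'} together with the removal of zero-strength edges in post-processing.
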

\begin{proof}
Since $\Atts(\alpha)\subset\Atts(\beta)$, we can partition $\Atts(\beta)$ into two disjoint sets which are $\Atts(\alpha)$ and a non-empty set of arguments $X = \Atts(\beta) - \Atts(\alpha)$. According to Definition~\ref{def:logregstr}, we have
\begin{align*}
    \sigma(\beta) &= \tau(\beta) + 
        \sum_{b \in \Supps(\beta)}\frac{\sigma(b)}{\nu(b)} - 
        \sum_{b \in \Atts(\beta)}\frac{\sigma(b)}{\nu(b)} \\
        &= \tau(\beta) + 
        \sum_{b \in \Supps(\beta)}\frac{\sigma(b)}{\nu(b)} 
        - \sum_{b \in \Atts(\alpha)}\frac{\sigma(b)}{\nu(b)}
        - \sum_{b \in X}\frac{\sigma(b)}{\nu(b)}
\end{align*}
Because $\Supps(\alpha)=\Supps(\beta)$ and $\tau(\alpha)=\tau(\beta)$, we obtain that
\begin{align*}
    \sigma(\beta)
        &= \tau(\alpha) + 
        \sum_{b \in \Supps(\alpha)}\frac{\sigma(b)}{\nu(b)} 
        - \sum_{b \in \Atts(\alpha)}\frac{\sigma(b)}{\nu(b)}
        - \sum_{b \in X}\frac{\sigma(b)}{\nu(b)} \\
        &= \sigma(\alpha) - \sum_{b \in X}\frac{\sigma(b)}{\nu(b)}
\end{align*}
As in the previous proofs, for \QBAF, it is possible that $\sum_{b \in X}\frac{\sigma(b)}{\nu(b)} < 0$, rendering $\sigma(\alpha)<\sigma(\beta)$ and unsatisfying GP7.
We can find a counterexample in Figure~\ref{fig5:exqb} with $\alpha = \alpha_3$ and $\beta = \alpha_2$.
In contrast, for \QBAFP, $\sigma(b) > 0$. Therefore, $\sum_{b \in X}\frac{\sigma(b)}{\nu(b)}$ is always greater than 0. As a result, $\sigma(\alpha)>\sigma(\beta)$ as required.

Hence, $\langle \QBAF, \sigma \rangle$ does not satisfy GP7, but $\langle \QBAFP, \sigma \rangle$ does.
\end{proof}

\begin{preprop}
$\langle \QBAF, \sigma \rangle$ does not satisfy GP8, but $\langle \QBAFP, \sigma \rangle$ does.
\end{preprop}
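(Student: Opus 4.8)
The plan is to mirror the proof of GP7, interchanging the roles of attack and support. Since $\Supps(\alpha)\subset\Supps(\beta)$, I would first partition $\Supps(\beta)$ into the two disjoint sets $\Supps(\alpha)$ and $X=\Supps(\beta)-\Supps(\alpha)$, where $X\neq\emptyset$ because the inclusion is strict. Applying Definition~\ref{def:logregstr} to $\beta$ and splitting the support sum along this partition gives
\begin{align*}
\sigma(\beta) &= \tau(\beta) + \sum_{b \in \Supps(\alpha)}\frac{\sigma(b)}{\nu(b)} + \sum_{b \in X}\frac{\sigma(b)}{\nu(b)} - \sum_{b \in \Atts(\beta)}\frac{\sigma(b)}{\nu(b)}.
\end{align*}
Using the hypotheses $\Atts(\alpha)=\Atts(\beta)$ and $\tau(\alpha)=\tau(\beta)$, the first, second and fourth terms reassemble exactly into $\sigma(\alpha)$, leaving
\begin{align*}
\sigma(\beta) &= \sigma(\alpha) + \sum_{b \in X}\frac{\sigma(b)}{\nu(b)}.
\end{align*}
The one point requiring care is the sign of the residual term: it is $+$ here (because $X$ consists of additional \emph{supporters}), the mirror image of the $-$ that appeared in the GP7 argument (additional \emph{attackers}).

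With this identity established, the two cases follow quickly. For $\langle\QBAFP,\sigma\rangle$ I would invoke Corollary~\ref{cor:sigma'} together with the fact that post-processing removes any supporter of zero strength, so that $\sigma(b)>0$ for every $b\in X$; since $\nu(b)\geq 1$, the term $\sum_{b\in X}\sigma(b)/\nu(b)$ is strictly positive, yielding $\sigma(\alpha)<\sigma(\beta)$ as required. For $\langle\QBAF,\sigma\rangle$, by contrast, strengths may be negative, so this residual term may be negative and the desired inequality can fail.

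The substantive remaining task is therefore to exhibit a concrete counterexample from Figure~\ref{fig5:exqb}. I would take $\alpha=\alpha_3$ and $\beta=\alpha_1$: both have no attackers, both have base score $0.4$, and $\Supps(\alpha_3)=\emptyset\subset\{\alpha_8\}=\Supps(\alpha_1)$, so $X=\{\alpha_8\}$. Because $\alpha_8$ supports $\alpha_1$ while also attacking $\alpha_2$, its out-degree is $\nu(\alpha_8)=2$, and its strength is negative ($\sigma(\alpha_8)=-0.2$); hence $\sigma(\alpha_1)=0.4+(-0.2)/2=0.3<0.4=\sigma(\alpha_3)$, i.e.\ $\sigma(\alpha)>\sigma(\beta)$, which violates GP8. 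This confirms that $\langle\QBAF,\sigma\rangle$ does not satisfy GP8 whereas $\langle\QBAFP,\sigma\rangle$ does. I do not expect any genuine obstacle: the only risks are the sign-tracking noted above and checking that the chosen pair truly matches the hypotheses of GP8 with $\sigma(\alpha)\ge\sigma(\beta)$, both of which are routine to verify against the figure.
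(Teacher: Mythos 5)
Your proposal is correct and follows essentially the same route as the paper's proof: the same partition of $\Supps(\beta)$ into $\Supps(\alpha)$ and $X$, the same identity $\sigma(\beta)=\sigma(\alpha)+\sum_{b\in X}\sigma(b)/\nu(b)$, positivity of post-processed strengths for the \QBAFP case, and even the same counterexample ($\alpha=\alpha_3$, $\beta=\alpha_1$ in Figure~\ref{fig5:exqb}) for the \QBAF case. Your explicit numerical verification of that counterexample (computing $\sigma(\alpha_8)=-0.2$, $\nu(\alpha_8)=2$, hence $\sigma(\alpha_1)=0.3<0.4=\sigma(\alpha_3)$) is a detail the paper leaves implicit, but it is the same argument.
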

\begin{proof}
Since $\Supps(\alpha)\subset\Supps(\beta)$, we can partition $\Supps(\beta)$ into two disjoint sets which are $\Supps(\alpha)$ and a non-empty set of arguments $X = \Supps(\beta) - \Supps(\alpha)$. According to Definition~\ref{def:logregstr}, we have
\begin{align*}
    \sigma(\beta) &= \tau(\beta) + 
        \sum_{b \in \Supps(\beta)}\frac{\sigma(b)}{\nu(b)} - 
        \sum_{b \in \Atts(\beta)}\frac{\sigma(b)}{\nu(b)} \\
        &= \tau(\beta) + 
        \sum_{b \in \Supps(\alpha)}\frac{\sigma(b)}{\nu(b)} 
        + \sum_{b \in X}\frac{\sigma(b)}{\nu(b)}
        - \sum_{b \in \Atts(\beta)}\frac{\sigma(b)}{\nu(b)}
\end{align*}
Because $\Atts(\alpha)=\Atts(\beta)$ and $\tau(\alpha)=\tau(\beta)$, we obtain that
\begin{align*}
    \sigma(\beta)
        &= \tau(\alpha) + 
        \sum_{b \in \Supps(\alpha)}\frac{\sigma(b)}{\nu(b)} 
        - \sum_{b \in \Atts(\alpha)}\frac{\sigma(b)}{\nu(b)}
        + \sum_{b \in X}\frac{\sigma(b)}{\nu(b)} \\
        &= \sigma(\alpha) + \sum_{b \in X}\frac{\sigma(b)}{\nu(b)}
\end{align*}
As in the previous proofs, for \QBAF, it is possible that $\sum_{b \in X}\frac{\sigma(b)}{\nu(b)} < 0$, rendering $\sigma(\alpha)>\sigma(\beta)$ and unsatisfying GP8.
We can find a counterexample in Figure~\ref{fig5:exqb} with $\alpha = \alpha_3$ and $\beta = \alpha_1$.
In contrast, for \QBAFP, $\sigma(b) > 0$. Therefore, $\sum_{b \in X}\frac{\sigma(b)}{\nu(b)}$ is always greater than 0. As a result, $\sigma(\alpha)<\sigma(\beta)$ as required.

Hence, $\langle \QBAF, \sigma \rangle$ does not satisfy GP8, but $\langle \QBAFP, \sigma \rangle$ does.
\end{proof}

\begin{preprop}
Both $\langle \QBAF, \sigma \rangle$ and $\langle \QBAFP, \sigma \rangle$ satisfy GP9.
\end{preprop}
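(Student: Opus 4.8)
The plan is to mirror the GP6 proof almost verbatim, since GP9 differs from GP6 only in replacing the hypothesis $\tau_*(\alpha)=\tau_*(\beta)$ with the strict inequality $\tau_*(\alpha)<\tau_*(\beta)$, while keeping $\Atts_*(\alpha)=\Atts_*(\beta)$ and $\Supps_*(\alpha)=\Supps_*(\beta)$. First I would write out the logistic regression semantics of Definition~\ref{def:logregstr} for both $\alpha$ and $\beta$, namely $\sigma(\alpha)=\tau(\alpha)+\sum_{b\in\Supps(\alpha)}\sigma(b)/\nu(b)-\sum_{b\in\Atts(\alpha)}\sigma(b)/\nu(b)$ and the analogous expression for $\beta$.

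Next I would observe that because $\Atts(\alpha)=\Atts(\beta)$ and $\Supps(\alpha)=\Supps(\beta)$, the two summation terms appearing in $\sigma(\alpha)$ are identical, term by term, to those appearing in $\sigma(\beta)$. The point to stress is that for each shared attacker or supporter $b$, both its already-computed dialectical strength $\sigma(b)$ and its out-degree $\nu(b)$ in the overall graph are fixed quantities that do not depend on whether $b$ is viewed as a neighbour of $\alpha$ or of $\beta$; hence these contributions coincide exactly. Subtracting the two expressions, every summation cancels and I am left with $\sigma(\beta)-\sigma(\alpha)=\tau(\beta)-\tau(\alpha)$. Since $\tau(\alpha)<\tau(\beta)$ by hypothesis, this difference is strictly positive, yielding $\sigma(\alpha)<\sigma(\beta)$ as required.

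Finally I would note that this computation never invokes the sign of any $\sigma(b)$, which is precisely why it applies uniformly to both $\langle\QBAF,\sigma\rangle$ and $\langle\QBAFP,\sigma\rangle$ (in contrast to GP2--GP5, GP7, GP8, where the possibly-negative strengths in a raw \QBAF break the argument). Consequently both frameworks satisfy GP9. I do not expect any real obstacle here: GP9 is among the easiest of the eleven properties, the only care needed being to track the strict inequality through the cancellation rather than an equality as in GP6.
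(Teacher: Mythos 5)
Your proposal is correct and rests on exactly the same computation as the paper's proof: because $\Atts(\alpha)=\Atts(\beta)$ and $\Supps(\alpha)=\Supps(\beta)$, the summation terms in the two instances of Definition~\ref{def:logregstr} are identical and cancel, leaving only the comparison of base scores, independent of the signs of the $\sigma(b)$ (which is why it works for both \QBAF and \QBAFP). The only difference is presentational: you subtract the two expressions directly to get $\sigma(\beta)-\sigma(\alpha)=\tau(\beta)-\tau(\alpha)>0$, whereas the paper assumes $\sigma(\alpha)\geq\sigma(\beta)$ and derives the contradiction $\tau(\alpha)\geq\tau(\beta)$.
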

\begin{proof}
Let us proof by contradiction. Assume that $\sigma(\alpha)\geq\sigma(\beta)$. Applying Definition~\ref{def:logregstr}, we obtain
\[\tau(\alpha) + 
        \sum_{b \in \Supps(\alpha)}\frac{\sigma(b)}{\nu(b)} - 
        \sum_{b \in \Atts(\alpha)}\frac{\sigma(b)}{\nu(b)}
        \geq \tau(\beta) + 
        \sum_{b \in \Supps(\beta)}\frac{\sigma(b)}{\nu(b)} - 
        \sum_{b \in \Atts(\beta)}\frac{\sigma(b)}{\nu(b)}\]
Because $\Atts(\alpha)=\Atts(\beta)$ and $\Supps(\alpha)=\Supps(\beta)$, 
\begin{align*}
    \tau(\alpha) + 
        \sum_{b \in \Supps(\alpha)}\frac{\sigma(b)}{\nu(b)} - 
        \sum_{b \in \Atts(\alpha)}\frac{\sigma(b)}{\nu(b)}
        &\geq \tau(\beta) + 
        \sum_{b \in \Supps(\alpha)}\frac{\sigma(b)}{\nu(b)} - 
        \sum_{b \in \Atts(\alpha)}\frac{\sigma(b)}{\nu(b)}\\
    \tau(\alpha) &\geq \tau(\beta)
\end{align*}
However, this contradicts the given statement that $\tau(\alpha)<\tau(\beta)$, so $\sigma(\alpha)\geq\sigma(\beta)$ cannot be true. Thus, $\sigma(\alpha)<\sigma(\beta)$ as required.

Hence, both $\langle \QBAF, \sigma \rangle$ and $\langle \QBAFP, \sigma \rangle$ satisfy GP9.
\end{proof}

\begin{figure}[!t]
\centering
\small
\begin{tikzpicture}[node distance=2cm]
\node [fill=green!20,label={[xshift=1.3cm, yshift=-0.7cm](0.1, 0.9)}] (d) at (0, 0) [argsm] {$\delta$};
\node [fill=red!20,label={[xshift=0.0cm, yshift=0.07cm](0.4, 0.3)}] (a1) at (-5.4, -3) [argsm] {$\alpha_1$};
\node [fill=green!20,label={[xshift=0.5cm, yshift=0.07cm](0.4, 0.5)}] (a2) at (-3.6, -3) [argsm] {$\alpha_2$};
\node [fill=red!20,label={[xshift=0.0cm, yshift=-1.5cm](0.4, 0.4)}] (a3) at (-1.8, -3) [argsm] {$\alpha_3$};
\node [fill=green!20,label={[xshift=0.0cm, yshift=0.07cm](0.8, 0.4)}] (a4) at (0, -3) [argsm] {$\alpha_4$};
\node [fill=green!20,label={[xshift=0.0cm, yshift=0.07cm](0.8, 0.5)}] (a5) at (1.8, -3) [argsm] {$\alpha_5$};
\node [fill=red!20,label={[xshift=0.0cm, yshift=0.07cm](0.2, 0.5)}] (a6) at (3.6, -3) [argsm] {$\alpha_6$};
\node [fill=green!20,label={[xshift=0.0cm, yshift=0.07cm](0.2, 0.6)}] (a7) at (5.4, -3) [argsm] {$\alpha_7$};
\node [fill=green!20,label={[xshift=1.3cm, yshift=-0.7cm](-0.3, 0.2)}] (a8) at (-4.5, -5.5) [argsm] {$\alpha_8$};
\node [fill=green!20,label={[xshift=1.3cm, yshift=-0.7cm](0.5, 0.5)}] (a9) at (-4.5, -8) [argsm] {$\alpha_9$};
\node [fill=red!20,label={[xshift=0.0cm, yshift=-1.5cm](0.4, 0.4)}] (a10) at (0, -5.5) [argsm] {$\alpha_{10}$};
\node [fill=red!20,label={[xshift=0.0cm, yshift=-1.5cm](0.6, 0.6)}] (a11) at (2.7, -5.5) [argsm] {$\alpha_{11}$};
\node [fill=green!20,label={[xshift=0.0cm, yshift=-1.5cm](0.4, 0.4)}] (a12) at (5.4, -5.5) [argsm] {$\alpha_{12}$};
\tatt{a1}{d};
\tsupp{a2}{d};
\tatt{a3}{d};
\draw [rel] (a4) -- node[anchor=south west] {\large \textbf{+}} (d);
\tsupp{a5}{d};
\tatt{a6}{d};
\tsupp{a7}{d};
\tatt{a8}{a1};
\tsupp{a8}{a2};
\draw [rel] (a9) -- node[anchor=south west] {\large \textbf{+}} (a8);
\draw [rel] (a10) -- node[anchor=south west] {\large \textbf{--}} (a4);
\tatt{a11}{a5}
\tsupp{a11}{a6}
\draw [rel] (a12) -- node[anchor=south west] {\large \textbf{+}} (a7);
\end{tikzpicture}
\caption{The corresponding \QBAFP of the \QBAF in Figure~\ref{fig5:exqb}, used as a counterexample in Appendix~\ref{proof:properties}. With each argument, there is a value pair $(x, y)$ where $x$ and $y$ represent the base score and the strength (based on the logistic regression semantics $\sigma$) of the argument, respectively. The color represents the argument's supported class (i.e., green for positive (1) and red for negative (0)).}\label{fig5:exqbpp}
\end{figure}

Note again that the definition of $<$ between two sets used in GP10 and GP11 is defined as follows. Given $P$ and $Q$ are subsets of $\Args$, $P \leq Q$ if and only if there exists an injective mapping $f$ from $P$ to $Q$ such that $\forall \alpha \in P, \sigma(\alpha) \leq \sigma(f(\alpha))$. Furthermore, $P < Q$ if and only if $P \leq Q$ but $Q \nleq P$.

\begin{preprop}
Both $\langle \QBAF, \sigma \rangle$ and $\langle \QBAFP, \sigma \rangle$ do not satisfy GP10.
\end{preprop}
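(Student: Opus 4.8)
The plan is to refute GP10 by a single counterexample that works simultaneously for both $\langle \QBAF, \sigma \rangle$ and $\langle \QBAFP, \sigma \rangle$, exploiting the mismatch flagged in the main text: the ordering $<$ on argument sets compares only raw strengths, whereas $\sigma$ weights each attacker $b$ by $1/\nu(b)$. Concretely, I would work inside the \QBAF of Figure~\ref{fig5:exqb} and take $\alpha = \alpha_4$ and $\beta = \alpha_5$. The guiding idea is to make $\beta$'s attacker have a \emph{larger} strength (so that $\Atts(\alpha)<\Atts(\beta)$) yet a larger out-degree, so that it actually drains \emph{less} strength than $\alpha$'s attacker; this forces $\sigma(\alpha)<\sigma(\beta)$, contradicting the conclusion $\sigma(\alpha)>\sigma(\beta)$ that GP10 demands.

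First I would verify the three hypotheses of GP10. Both arguments have base score $\tau(\alpha_4)=\tau(\alpha_5)=0.8$ and no supporters, so $\Supps(\alpha_4)=\Supps(\alpha_5)=\emptyset$; their attacker sets are the singletons $\Atts(\alpha_4)=\{\alpha_{10}\}$ and $\Atts(\alpha_5)=\{\alpha_{11}\}$, where $\alpha_{10},\alpha_{11}$ have no incoming relations and hence $\sigma(\alpha_{10})=0.4$ and $\sigma(\alpha_{11})=0.6$ by GP1. Unwinding the definition of $<$, I would note that $\{\alpha_{10}\}\leq\{\alpha_{11}\}$ holds via the injective map $\alpha_{10}\mapsto\alpha_{11}$ (since $0.4\leq 0.6$), while $\{\alpha_{11}\}\not\leq\{\alpha_{10}\}$ (since $0.6\not\leq 0.4$), so $\Atts(\alpha_4)<\Atts(\alpha_5)$ as required.

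Next I would compute the two strengths via Equation~\ref{eq5:logregstr}, using that $\alpha_{10}$ has out-degree $\nu(\alpha_{10})=1$ but $\alpha_{11}$ has out-degree $\nu(\alpha_{11})=2$ (it attacks $\alpha_5$ and supports $\alpha_6$): $\sigma(\alpha_4)=0.8-\frac{0.4}{1}=0.4$ and $\sigma(\alpha_5)=0.8-\frac{0.6}{2}=0.5$. Thus $\sigma(\alpha_4)=0.4<0.5=\sigma(\alpha_5)$, which contradicts the conclusion $\sigma(\alpha_4)>\sigma(\alpha_5)$; hence $\langle \QBAF, \sigma \rangle$ does not satisfy GP10. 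The phenomenon is exactly the one anticipated: the ``weaker'' attacker $\alpha_{10}$ nonetheless removes more strength because its smaller out-degree is invisible to the $<$ ordering.

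Finally I would transfer the counterexample to \QBAFP using Theorem~\ref{theo:sigma'} (equivalently Corollary~\ref{cor:sigma'}). Since $\sigma(\alpha_4),\sigma(\alpha_5),\sigma(\alpha_{10}),\sigma(\alpha_{11})$ are all positive, post-processing leaves their base scores, supported classes, and incident relations unchanged, so in the corresponding \QBAFP of Figure~\ref{fig5:exqbpp} the hypotheses still hold and $\sigma(\alpha_4)'=0.4<0.5=\sigma(\alpha_5)'$, refuting GP10 for $\langle \QBAFP, \sigma \rangle$ too. The only delicate point, which I would spell out rather than compute mechanically, is the verification of $\Atts(\alpha_4)<\Atts(\alpha_5)$ straight from the definition (an injective strength-dominating map exists in one direction but not the other); both the strength computation and the lift to \QBAFP are then routine consequences of GP1 and Theorem~\ref{theo:sigma'}.
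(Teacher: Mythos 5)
Your proposal is correct and takes essentially the same route as the paper: the paper's own proof uses exactly the same counterexample ($\alpha=\alpha_4$, $\beta=\alpha_5$ in Figures~\ref{fig5:exqb} and~\ref{fig5:exqbpp}), with the same verification that $\Atts(\alpha)<\Atts(\beta)$ via the injection $\alpha_{10}\mapsto\alpha_{11}$ and the same strength mismatch caused by $\alpha_{11}$'s out-degree of $2$. Your computation is in fact slightly more careful than the paper's prose: your reading $\tau(\alpha_4)=\tau(\alpha_5)=0.8$ agrees with the figure (the value $0.4$ stated in the paper's proof is a typo), and your explicit appeal to Theorem~\ref{theo:sigma'} to transfer the counterexample to the \QBAFP is a cleaner justification than the paper's implicit reliance on Figure~\ref{fig5:exqbpp}.
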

\begin{proof}
Counterexamples of this GP for \QBAF and \QBAFP are in Figures~\ref{fig5:exqb} and \ref{fig5:exqbpp}, respectively, where $\alpha = \alpha_4$ and $\beta = \alpha_5$. 
We can see that $\Atts(\alpha)\leq\Atts(\beta)$ as we have a mapping from $\Atts(\alpha)$ to $\Atts(\beta)$, $f=\{(\alpha_{10}, \alpha_{11})\}$, where $\forall a \in \Atts(\alpha), \sigma(a) \leq \sigma(f(a))$. In this case, $\sigma(\alpha_{10}) \leq \sigma(\alpha_{11})$. However, $\Atts(\beta)\nleq\Atts(\alpha)$. So, $\Atts(\alpha)<\Atts(\beta)$. 
In addition, $\Supps(\alpha)=\Supps(\beta)=\emptyset$ and $\tau(\alpha)=\tau(\beta)=0.4$.
Nevertheless, $\sigma(\alpha)=0.4$ and $\sigma(\beta)=0.5$, so $\sigma(\alpha)>\sigma(\beta)$ is not true. 

Hence, both $\langle \QBAF, \sigma \rangle$ and $\langle \QBAFP, \sigma \rangle$ do not satisfy GP10. (This is because $\sigma$ considers not only the strengths of the attackers and the supporters but also their out-degrees.)
\end{proof}

\begin{preprop}
Both $\langle \QBAF, \sigma \rangle$ and $\langle \QBAFP, \sigma \rangle$ do not satisfy GP11.
\end{preprop}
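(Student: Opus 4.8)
The plan is to establish GP11 by exhibiting a single counterexample that serves for both $\langle \QBAF, \sigma \rangle$ and $\langle \QBAFP, \sigma \rangle$, exactly mirroring the GP10 proof. GP11 is the support-side analogue of GP10, and it fails for the same structural reason: the $<$ ordering on argument sets constrains only the \emph{strengths} of the supporters, whereas the logistic regression semantics weights each supporter's strength by the reciprocal of its out-degree $\nu(\cdot)$ (see Equation~\ref{eq5:logregstr}). Hence a set of supporters with uniformly larger strengths can still contribute \emph{less} to an argument if those supporters are spread over more targets.

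Concretely, I would take $\alpha=\alpha_7$ and $\beta=\alpha_6$ in Figures~\ref{fig5:exqb} and \ref{fig5:exqbpp}, and check the three hypotheses of GP11. First, $\Atts(\alpha_7)=\Atts(\alpha_6)=\emptyset$, so the attacker sets coincide. Second, $\tau(\alpha_7)=\tau(\alpha_6)=0.2$. Third, $\Supps(\alpha_7)=\{\alpha_{12}\}$ and $\Supps(\alpha_6)=\{\alpha_{11}\}$ with $\sigma(\alpha_{12})=0.4\le 0.6=\sigma(\alpha_{11})$, so the injective map $f=\{(\alpha_{12},\alpha_{11})\}$ witnesses $\Supps(\alpha_7)\le\Supps(\alpha_6)$, while no map gives the reverse inequality (as $0.6\not\le 0.4$); therefore $\Supps(\alpha_7)<\Supps(\alpha_6)$. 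GP11 would then demand $\sigma(\alpha_7)<\sigma(\alpha_6)$, but in fact $\sigma(\alpha_7)=0.6>0.5=\sigma(\alpha_6)$, so the property is violated.

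The explanation I would attach is the out-degree effect: $\alpha_{11}$ has out-degree $2$ (it attacks $\alpha_5$ and supports $\alpha_6$), so its contribution to $\alpha_6$ is $\tfrac{0.6}{2}=0.3$, giving $\sigma(\alpha_6)=0.2+0.3=0.5$; whereas $\alpha_{12}$ has out-degree $1$, so its contribution to $\alpha_7$ is $\tfrac{0.4}{1}=0.4$, giving $\sigma(\alpha_7)=0.2+0.4=0.6$. Thus the "stronger" supporter $\alpha_{11}$ ends up contributing less because its strength is divided among more targets, which is precisely what the set ordering $<$ fails to capture.

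The one point requiring care — the main, if mild, obstacle — is arguing that the very same counterexample is valid in the \QBAFP and not only in the \QBAF. For this I would invoke Theorem~\ref{theo:sigma'} and Corollary~\ref{cor:sigma'}: all four relevant arguments $\alpha_6,\alpha_7,\alpha_{11},\alpha_{12}$ already have positive strengths in the \QBAF, so post-processing leaves their strengths, base scores, and supported classes unchanged, and consequently preserves every attack/support edge among them. Hence the computation of $\Atts$, $\Supps$, $\tau$, and $\sigma$ for $\alpha_6$ and $\alpha_7$ transfers verbatim to Figure~\ref{fig5:exqbpp}, establishing that neither $\langle \QBAF, \sigma \rangle$ nor $\langle \QBAFP, \sigma \rangle$ satisfies GP11, for the same reason GP10 fails.
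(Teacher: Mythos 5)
Your proposal is correct and takes essentially the same approach as the paper: the paper's own proof uses exactly the counterexample $\alpha=\alpha_7$, $\beta=\alpha_6$ in Figures~\ref{fig5:exqb} and \ref{fig5:exqbpp}, with the same witnessing map $f=\{(\alpha_{12},\alpha_{11})\}$ and the same diagnosis that the out-degree division in $\sigma$ is what the set ordering $<$ fails to capture. Your additional step of transferring the counterexample to the \QBAFP via Theorem~\ref{theo:sigma'} (rather than reading the values off Figure~\ref{fig5:exqbpp} directly, as the paper does) is a harmless presentational variant, not a different argument.
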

\begin{proof}
Counterexamples of this GP for \QBAF and \QBAFP are in Figures~\ref{fig5:exqb} and \ref{fig5:exqbpp}, respectively, where $\alpha = \alpha_7$ and $\beta = \alpha_6$. 
We can see that $\Supps(\alpha)\leq\Supps(\beta)$ as we have a mapping from $\Supps(\alpha)$ to $\Supps(\beta)$, $f=\{(\alpha_{12}, \alpha_{11})\}$, where $\forall a \in \Supps(\alpha), \sigma(a) \leq \sigma(f(a))$. In this case, $\sigma(\alpha_{12}) \leq \sigma(\alpha_{11})$. However, $\Supps(\beta)\nleq\Supps(\alpha)$. So, $\Supps(\alpha)<\Supps(\beta)$. 
In addition, $\Atts(\alpha)=\Atts(\beta)=\emptyset$ and $\tau(\alpha)=\tau(\beta)=0.2$.
Nevertheless, $\sigma(\alpha)=0.6$ and $\sigma(\beta)=0.5$, so $\sigma(\alpha)<\sigma(\beta)$ is not true. 

Hence, both $\langle \QBAF, \sigma \rangle$ and $\langle \QBAFP, \sigma \rangle$ do not satisfy GP11. (This is because $\sigma$ considers not only the strengths of the attackers and the supporters but also their out-degrees.)
\end{proof}

\section{Machine Learning Terminology} \label{app:terminology}
This section explains meanings of technical terms concerning machine learning and classification that are used in this paper.

\paragraph{Dataset Splits} Working on a classification task, we usually divide a dataset we have into three splits. 
The first split is a \textit{training set}, which is used to train our classification model. 
The second split is a \textit{test set}, which is used to evaluate the final trained model.
The third split is a \textit{development set} (also called a \textit{validation set}), which is used to evaluate model(s) under development so as to choose the best model architecture or hyperparameters.
To ensure that the model can generalize beyond what it sees during training and development, all the three data splits must be mutually exclusive. 

Because logistic regression (LR) does not have any hyperparameters or multiple architectures to choose, we did not use the development set while training the LR models in our experiments (Sections \ref{sec:setup}-\ref{sec:tutorial}). 
However, in Section~\ref{sec:tutorial}, we used the development set of the deceptive hotel review dataset for two purposes -- \textit{(i)} to tune the regularization hyperparameter of the support vector machine (SVM) model and \textit{(ii)} to generate explanations for tutoring human participants to detect deceptive reviews. 

\paragraph{Evaluation Metrics}
For binary classification, let $y$ and $\hat{y}$ be the true class and the predicted class of an example $x$, respectively. Hence, both $y$ and $\hat{y}$ could be either 0 or 1, resulting in four possible situations.
\begin{itemize}
\item \textbf{True Positive (TP)}: The model correctly predicts that the class of $x$ is 1 (i.e., $y = \hat{y} = 1$).
\item \textbf{True Negative (TN)}: The model correctly predicts that the class of $x$ is 0 (i.e., $y = \hat{y} = 0$).
\item \textbf{False Positive (FP)}: The model predicts that the class of $x$ is 1 (i.e., $\hat{y} = 1$), but the true class of $x$ is in fact 0 (i.e., $y = 0$).
\item \textbf{False Negative (TN)}: The model predicts that the class of $x$ is 0 (i.e., $\hat{y} = 0$), but the true class of $x$ is in fact 1 (i.e., $y = 1$).
\end{itemize}

By default, we consider class 1 to be the positive class and consider class 0 to be the negative class. However, we can also consider the four situations above with respect to a specific class $c$. Particularly, $TP_c$ (true positives) and $FP_c$ (false positives) are the number of examples predicted as class $c$ by the classifier that are correct and incorrect predictions, respectively.
$FN_c$ (false negatives) is the number of examples with class $c$ as their true label but the model does not predict correctly.
$TN_c$ (true negatives) is the number of examples where the true label is not class $c$ and the model also does not predict class $c$. 

To evaluate the performance of a classifier, we apply it to predict examples in a labeled test dataset $\mathcal{D}'$ and report the percentage of correct predictions, so called the \textit{accuracy} or \textit{classification rate}.

\begin{equation}
    Accuracy = \frac{|\{(x_i, y_i) \in \mathcal{D}' | \hat{y}_i=y_i\}|}{|\mathcal{D}'|} = \frac{TP+TN}{TP+TN+FP+FN} 
\end{equation}

However, if the test dataset is class imbalanced, i.e., having examples of one class more than the other, accuracy may not be the best evaluation metric because a model can get a high accuracy only by always answering the majority class.
Alternatively, we can report the model performance for each specific class $c \in \{0, 1\}$ using the class \textit{precision}, \textit{recall}, and \textit{F1}, defined as follows \cite[chapter~4]{jurafsky2020speech}. 
\begin{align}
    \mbox{Precision}_c &= P_c = \frac{TP_c}{TP_c + FP_c} = \frac{|\{(x_i, y_i) \in \mathcal{D}' | \hat{y}_i=y_i=c\}|}{|\{(x_i, y_i) \in \mathcal{D}' | \hat{y}_i=c\}|} \label{eq2:precision} \\
    \mbox{Recall}_c &= R_c = \frac{TP_c}{TP_c + FN_c} = \frac{|\{(x_i, y_i) \in \mathcal{D}' | \hat{y}_i=y_i=c\}|}{|\{(x_i, y_i) \in \mathcal{D}' | y_i=c\}|} \label{eq2:recall} \\
    \mbox{F1}_{c} &= \frac{2P_cR_c}{P_c + R_c} \label{eq2:f1}
\end{align}

There are two ways to aggregate the class-specific metrics to be the metrics for the overall performance. First, \textit{micro-averaging} combines the $TP$, $FP$, and $FN$ of all classes (in $\mathcal{C}$) before computing precision and recall.
\begin{align}
    \mbox{Micro Precision} &= \frac{\sum\limits_{c \in \mathcal{C}}TP_c}{\sum\limits_{c \in \mathcal{C}}TP_c + \sum\limits_{c \in \mathcal{C}}FP_c} \label{eq2:microprecision}\\
    \mbox{Micro Recall} &= \frac{\sum\limits_{c \in \mathcal{C}}TP_c}{\sum\limits_{c \in \mathcal{C}}TP_c + \sum\limits_{c \in \mathcal{C}}FN_c} \label{eq2:microrecall}\\
    \mbox{Micro F1} &= \frac{2 \times \mbox{Micro Precision} \times \mbox{Micro Recall}}{\mbox{Micro Precision} + \mbox{Micro Recall}} \label{eq2:microf1}
\end{align}

Second, \textit{macro-averaging} averages out precision and recall scores of all the classes so all the classes are weighted equally regardless of their size. 
\begin{align}
    \mbox{Macro Precision} &= \frac{1}{|\mathcal{C}|}\sum\limits_{c \in \mathcal{C}}P_c \label{eq2:macroprecision}\\
    \mbox{Macro Recall} &= \frac{1}{|\mathcal{C}|}\sum\limits_{c \in \mathcal{C}}R_c \label{eq2:macrorecall}\\
    \mbox{Macro F1} &= \frac{2 \times \mbox{Macro Precision} \times \mbox{Macro Recall}}{\mbox{Macro Precision} + \mbox{Macro Recall}} \label{eq2:macrof1}
\end{align}

Normally, when we work with datasets that are class imbalanced, we want the model to work well for all the classes, not just for the majority class. Therefore, we often use macro F1 as the main evaluation metric in addition to the classification accuracy.
\end{appendix}

%%%%%%%%%%% The bibliography starts:

%%%%%%%%%%%%%%%%%%%%%%%%%%%%%%%%%%%%%%%%%%%%%%%%%%%%%%%%%%%%%
%%                  The Bibliography                       %%
%%                                                         %%
%%  ios1.bst will be used to                               %%
%%  create a .BBL file for submission.                     %%
%%                                                         %%
%%                                                         %%
%%  Note that the displayed Bibliography will not          %%
%%  necessarily be rendered by Latex exactly as specified  %%
%%  in the online Instructions for Authors.                %%
%%                                                         %%
%%%%%%%%%%%%%%%%%%%%%%%%%%%%%%%%%%%%%%%%%%%%%%%%%%%%%%%%%%%%%

% \nocite{*}
% if your bibliography is in bibtex format, use those commands:
\bibliographystyle{ios1}           % Style BST file.
\bibliography{anthology,bibliography}        % Bibliography file (usually '*.bib')

% or include bibliography directly:
%\begin{thebibliography}{0}
%\bibitem{r1} F. Author, Information about cited object.
%
%\bibitem{r2} S. Author and T. Author, Information about cited object.
%\end{thebibliography}

\end{document}